\documentclass{article}

\usepackage{microtype}
\usepackage{graphicx}
\usepackage{subcaption}
\usepackage{booktabs} 

\usepackage{hyperref}

\usepackage[preprint]{arxiv/icml2026}

\usepackage{amsmath}
\usepackage{amssymb}
\usepackage{mathtools}
\usepackage{amsthm}

\usepackage[textsize=tiny]{todonotes}
\usepackage{enumitem}
\usepackage[many]{tcolorbox}    	
\tcbuselibrary{listings, breakable}
\newtcolorbox{promptBox}[1]{
    title=\textbf{#1},
    breakable,
    fonttitle=\bfseries,
    boxrule = 1pt,
    toprule = 3pt, 
    colframe = RoyalBlue,
    enhanced,
    rounded corners,
    arc = 2pt,   
    top=1mm,bottom=1mm,left=1mm,right=1mm    
}

\newtcolorbox{outputbox}[1]{
    title=\textbf{#1},
    breakable,
    fonttitle=\bfseries,
    boxrule = 1pt,
    toprule = 3pt,
    colframe = blue!60!black, 
    enhanced,
    rounded corners,
    arc = 2pt,
    top=1mm,
    bottom=1mm,
    left=1mm,
    right=1mm
}

\usepackage[capitalize,noabbrev]{cleveref}

\theoremstyle{plain}
\newtheorem{theorem}{Theorem}[section]
\newtheorem{proposition}[theorem]{Proposition}
\newtheorem{lemma}[theorem]{Lemma}

\theoremstyle{definition}

\theoremstyle{remark}
\newtheorem{remark}[theorem]{Remark}

\usepackage{multirow}
\newcommand{\algname}{\emph{AutoAdapt}}

\usepackage[dvipsnames, table]{xcolor}
\usepackage[many]{tcolorbox}    	

\usepackage[textsize=tiny]{todonotes}

\usepackage{tcolorbox}
\usepackage{adjustbox}

\usepackage{wrapfig}

\usepackage{rotating} \usepackage{booktabs} \usepackage{array}
\usepackage{caption}
\usepackage{placeins}

\title{AutoAdapt: An Automated Domain Adaptation Framework for LLMs}
\icmltitlerunning{AutoAdapt: An Automated Domain Adaptation Framework for LLMs}

\begin{document}

\twocolumn[
  \icmltitle{AutoAdapt: An Automated Domain Adaptation Framework \\
  for Large Language Models}
    


  \icmlsetsymbol{equal}{*}

  \begin{icmlauthorlist}
    \icmlauthor{Sidharth Sinha}{yyy}
    \icmlauthor{Anson Bastos}{yyy}
    \icmlauthor{Xuchao Zhang}{comp}
    \icmlauthor{Akshay Nambi}{sch}
    \icmlauthor{Chetan Bansal}{comp}
    \icmlauthor{Saravan Rajmohan}{comp}
  \end{icmlauthorlist}

  \icmlaffiliation{yyy}{Microsoft, M365 Research, India}
  \icmlaffiliation{comp}{Microsoft, M365 Research, Redmond}
  \icmlaffiliation{sch}{Microsoft Research, India}

  \icmlcorrespondingauthor{Anson Bastos}{ansonbastos@microsoft.com}
  \icmlcorrespondingauthor{Xuchao Zhang}{xuchaozhang@microsoft.com}
  \icmlcorrespondingauthor{Akshay Nambi}{Akshay.Nambi@microsoft.com}

  \icmlkeywords{Machine Learning, ICML}

  \vskip 0.3in
]



\printAffiliationsAndNotice{}  

\begin{abstract}

Large language models (LLMs) excel in open domains but struggle in specialized settings with limited data and evolving knowledge. Existing domain adaptation practices rely heavily on manual trial-and-error processes, incur significant hyperparameter complexity, and are highly sensitive to data and user preferences, all under the high cost of LLM training. Moreover, the interactions and transferability of hyperparameter choices across models/domains remain poorly understood, making adaptation gains uncertain even with substantial effort.
To solve these challenges,
we present \algname{}, a novel end-to-end automated framework for efficient and reliable LLM domain adaptation. \emph{AutoAdapt} leverages curated knowledge bases from literature and open-source resources to reduce expert intervention. To narrow the search space, we design a novel multi-agent debating system in which proposal and critic agents iteratively interact to align user intent and incorporate data signals and best practices into the planning process. To optimize hyperparameters under tight budgets, we propose \emph{AutoRefine}, a novel LLM-based surrogate that replaces costly black-box search. Across 10 tasks, \emph{AutoAdapt} achieves a \textbf{25\%} average relative accuracy improvement over state-of-the-art Automated Machine Learning baselines with minimal overhead.

\end{abstract}

\section{Introduction}

Large Language Models (LLMs) have rapidly advanced the state of the art in natural language processing, enabling impressive performance across a wide spectrum of tasks and domains. However, despite their broad pre-training, LLMs often fall short when applied to specialized domains where data is scarce, proprietary, or subject to privacy constraints. Further, in scenarios where data evolves with time, pretrained LLMs would not have the updated knowledge.
In such settings, domain adaptation of LLMs is imperative by resorting to techniques like in-context prompting \cite{dong-etal-2024-survey} using retrieval from current data (RAG) \cite{RAG_NEURIPS2020_6b493230} or finetuning the models (SFT) \cite{wang2025parameterefficientfinetuninglargemodels} on new data for optimal performance.
The current process of adapting LLMs to a specific task is largely manual and remains limited to a set of best practices, assuming a one-template-fits-all approach across tasks and models. Further, effective fine tuning relies on expert knowledge because small choices in data, objectives, and training settings can strongly affect the performance.

Domain adaptation for LLMs involves a series of intricate steps: curating domain-specific datasets, selecting appropriate model architectures, configuring prompting or fine-tuning strategies, and iteratively optimizing hyperparameters for optimal performance. Each decision point, ranging from model selection, context engineering, training technique (e.g., supervised fine-tuning (SFT), parameter-efficient adaptation (PeFT) e.g. LoRA/Prefix Tuning, preference alignment e.g. DPO) to the choice of learning rates, batch sizes, and regularization techniques, can significantly impact the final model’s quality, stability and cost. The challenge of exploring the vast search space is compounded in LLMs due to the high time and cost of finetuning these large models. The combinatorial explosion of possible configurations, coupled with the high time and resource utility, means that even experienced teams must engage in extensive trial-and-error, often spending weeks or months to achieve robust, high-quality results.

Recent research \cite{dsagent_guo2024dsagentautomateddatascience, automlagent_trirat2025automlagent, mlcopilot_zhang2024mlcopilotunleashingpowerlarge, autoGPT_zhang2023automlgptautomaticmachinelearning} has explored automating general Machine Learning (ML) pipelines using agent-based frameworks offering partial solutions for model selection and hyperparameter tuning, albeit in the general ML domain. However, the following \textbf{challenges exist}:
1) Existing methods either rely on the pretraining knowledge or in-context learning ability of LLMs, which we show is insufficient for the task of LLM adaptation. Further, retrieval based methods \cite{dsagent_guo2024dsagentautomateddatascience, automlagent_trirat2025automlagent} lack a structured knowledge base of LLM use cases, limiting their usage to only classical ML methods. 
2) These approaches typically rely on exhaustive or heuristic search strategies, which are computationally expensive and ill-suited to the scale and complexity of LLM adaptation. 
3) Existing methods rely on the LLM agents to generate code. However, agents may have limited and outdated pretraining context on LLM adaptation. 
This results in low success rate for end-to-end runs as we show in our experiments. 
4) 
LLM adaptation pipelines often require multiple stages such as pretraining, followed by finetuning, alignment, in-context prompting using RAG techniques and so on. Existing methods fail in planning such a multi-stage pipeline due to a lack of a structured and well defined hierarchical plan. 
5) LLMs incur an exorbitant cost and latency for training and thus standard Hyperparameter Optimization (HPO) techniques employed in existing methods \cite{automlagent_trirat2025automlagent} will hardly scale to LLM adaptation. 
6) Most agents do not consider the data characteristics and explicit adherence to user preferences during the planning stage 
or merely loop on failing execution which degrades efficiency.

To address these challenges, we introduce \algname{}, an automated domain adaptation framework that proposes to solve the adaptation problem as a multi-agent system grounded in a knowledge-driven hierarchical workflow.
We further use Gaussian processes \cite{beckers2021introductiongaussianprocessmodels} for probabilistic function estimation, modeling uncertainty over the adaptation objective function.
Figure \ref{fig:concept_figure} shows the user flow of \algname{}, where the user inputs the task, data, and constraints and the system outputs the domain adapted LLM.

\begin{figure}[t!]
    \centering
    \includegraphics[width=0.95\linewidth]{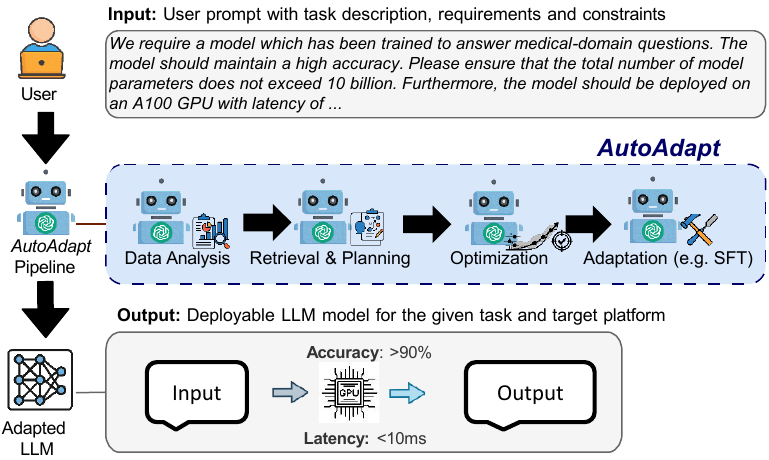}
    \caption{\algname{} takes as input the user instruction and dataset and delivers the domain adapted LLM based on user constraint}
    \label{fig:concept_figure}
\end{figure}

\algname{} leverages a structured directed acyclic graph, called ``Adaptation Configuration Graph" (ACG), that incorporates the LLM adaptation workflow knowledge. This graph contains hierarchical decision nodes (e.g. \texttt{RAG} vs \texttt{SFT}, \texttt{SFT} using \texttt{LoRA} or \texttt{Prefix Tuning} etc.) and parameter nodes (e.g. \texttt{learning rate, epochs, batch size} to explore for \texttt{SFT}).
The framework is composed of a system of proposal and critic agents to  generate, debate, verify and refine executable pipelines. We ground the multi-agent system by integrating best practices from thousands of open-source models and datasets, as well as user-specified requirements and data characteristics.  We further perform online retrieval from web sources and literature. Thus, \algname{} dramatically reduces the need for manual intervention and expert oversight. This produces an adaptation pipeline with narrowed search spaces (e.g. $1e^{-4} \leq \texttt{learning rate} \leq 1e^{-6}$) to optimize.
The framework further incorporates a feedback-driven optimization loop, called \emph{AutoRefine}, to optimize parameters (e.g. exact \texttt{SFT} parameters like $\texttt{learning rate} = 2e^{-5}$ from the narrowed search space). AutoRefine efficiently explores the hyperparameter space using LLM-based oracle that factors performance from prior runs.
However naive usage of LLMs to predict performance (surrogate modeling) \cite{agenthpo_liu2025largelanguagemodelagent} in the multi-dimensional LLM hyperparameter space, is prone to hallucination. We propose to solve this by grounding the LLM surrogate with function estimation techniques.

Through extensive empirical evaluation across diverse domains, including mathematical and general reasoning, medical question answering, legal analysis, programming, agent tool calling, e-commerce, and real world cloud incident diagnosis, we demonstrate that \algname{} consistently outperforms existing AutoML baselines, delivering significant accuracy gains with minimal additional computational overhead (w.r.t. default choices). By \emph{automating the end-to-end LLM adaptation process}, \algname{} makes robust, domain-specific LLMs accessible to a broader range of users and organizations, \emph{bridging the gap between cutting-edge research and practical real-world applications}.
In summary:
\begin{itemize}[leftmargin=9pt, nosep, noitemsep]
    \item We propose a novel method for end-to-end LLM domain adaptation guided by a structured adaptation workflow ensuring a high success rate in executing the pipeline. To the best of our knowledge, we are the first to propose an automated domain adaptation framework for LLMs.
    \item We curate a knowledge base for LLM domain adaptation from open source implementations, datasets, models, and best practices to ground the recommendations.
    \item We address the exponential search space in LLM adaptation, by designing a novel multi-agent framework that traverses the adaptation graph incorporating the user constraints, data signals, and best practices, to recommend adaptation pipelines and hyperparameter ranges.
    \item We formulate a first-of-its-kind probabilistic function estimation based technique in conjunction with LLMs to efficiently perform refinement of the parameter choices and ranges for the selected pipeline.
    \item We conduct extensive experiments on datasets spanning various benchmark and real world datasets across multiple domains to demonstrate the efficacy of our method in adapting LLMs to the specialized tasks. 
\end{itemize}

\section{Related Works}

\begin{table*}[t]
\centering
\vspace*{-0.25cm}
\caption{Comparison between \algname{} and existing LLM-based frameworks.} \label{table:related_work_comparison}
\resizebox{\textwidth}{!}{%
\begin{tabular}{@{}lcccccc@{}}
\toprule
\multirow{2}{*}{\textbf{Framework}} & \multicolumn{6}{c}{\textbf{Key Functionality}}                                                                           \\ \cmidrule(l){2-7} 
                           & \textbf{Planning} & \textbf{Verification} & \textbf{Efficient Optimization} & \textbf{Effective LLM Adaptation} & \textbf{Training-Free Search} & \textbf{With Retrieval} \\ \midrule \midrule
AutoML-GPT\,\citep{autoGPT_zhang2023automlgptautomaticmachinelearning} & \textcolor{red}{$\times$}        & \textcolor{red}{$\times$}            & \textcolor{red}{$\times$}             & \textcolor{red}{$\times$}                          & \textcolor{teal}{$\checkmark$}                    & \textcolor{red}{$\times$}              \\
Prompt2Model\,\citep{Prompt2Model_2023} & \textcolor{red}{$\times$}        & \textcolor{red}{$\times$}            & \textcolor{red}{$\times$}             & \textcolor{red}{$\times$}                         & \textcolor{red}{$\times$}                    & \textcolor{teal}{$\checkmark$}              \\
HuggingGPT\,\citep{hugginggpt_shen2023hugginggptsolvingaitasks} & \textcolor{teal}{$\checkmark$}        & \textcolor{red}{$\times$}            & \textcolor{red}{$\times$}             & \textcolor{red}{$\times$}                          & \textcolor{teal}{$\checkmark$}                    & \textcolor{teal}{$\checkmark$}              \\
CAAFE\,\citep{CAAFE_2023} & \textcolor{red}{$\times$}        & \textcolor{teal}{$\checkmark$}            & \textcolor{red}{$\times$}             & \textcolor{red}{$\times$}                         & \textcolor{red}{$\times$}                    & \textcolor{red}{$\times$}              \\
MLCopilot\,\citep{mlcopilot_zhang2024mlcopilotunleashingpowerlarge} & \textcolor{red}{$\times$}        & \textcolor{red}{$\times$}            & \textcolor{red}{$\times$}             & \textcolor{red}{$\times$}                          & \textcolor{teal}{$\checkmark$}                    & \textcolor{red}{$\times$}              \\
AgentHPO\,\citep{agenthpo_liu2025largelanguagemodelagent} & \textcolor{teal}{$\checkmark$}        & \textcolor{teal}{$\checkmark$}            & \textcolor{red}{$\times$}             & \textcolor{red}{$\times$}                          & \textcolor{red}{$\times$}                    & \textcolor{red}{$\times$}              \\
Data Interpreter\,\citep{datainterpretor_hong2024datainterpreterllmagent} & \textcolor{teal}{$\checkmark$}        & \textcolor{teal}{$\checkmark$}            & \textcolor{red}{$\times$}             & \textcolor{red}{$\times$}                          & \textcolor{red}{$\times$}                    & \textcolor{red}{$\times$}              \\
DS-Agent\,\citep{dsagent_guo2024dsagentautomateddatascience} & \textcolor{teal}{$\checkmark$}        & \textcolor{teal}{$\checkmark$}            & \textcolor{red}{$\times$}             & \textcolor{red}{$\times$}                          & \textcolor{red}{$\times$}                    & \textcolor{teal}{$\checkmark$}              \\
SELA\,\citep{SELA_2024} & \textcolor{teal}{$\checkmark$} & \textcolor{teal}{$\checkmark$} & \textcolor{red}{$\times$} & \textcolor{red}{$\times$}  & \textcolor{red}{$\times$} & \textcolor{red}{$\times$} \\
Agent\,K\,\citep{AgentK_2024} & \textcolor{teal}{$\checkmark$} & \textcolor{teal}{$\checkmark$} & \textcolor{red}{$\times$} & \textcolor{red}{$\times$}  & \textcolor{red}{$\times$} & \textcolor{teal}{$\checkmark$} \\
AutoMMLab\,\citep{AutoMMLab_AAAI} & \textcolor{red}{$\times$}        & \textcolor{teal}{$\checkmark$}            & \textcolor{red}{$\times$}              & \textcolor{red}{$\times$}                         & \textcolor{red}{$\times$}                    & \textcolor{red}{$\times$} \\
AutoMLAgent \citep{automlagent_trirat2025automlagent}                & \textcolor{teal}{$\checkmark$}        & \textcolor{teal}{$\checkmark$}            & \textcolor{red}{$\times$}              & \textcolor{red}{$\times$}                          & \textcolor{teal}{$\checkmark$}                    & \textcolor{teal}{$\checkmark$}              \\  \midrule
\textbf{\algname{}} (Ours)                 & \textcolor{teal}{$\checkmark$}        & \textcolor{teal}{$\checkmark$}            & \textcolor{teal}{$\checkmark$}             & \textcolor{teal}{$\checkmark$}                         & \textcolor{teal}{$\checkmark$}                    & \textcolor{teal}{$\checkmark$}              \\ \bottomrule
\end{tabular}%
}
\vspace*{-0.2cm}
\end{table*}

There exists a plethora of works in literature on automated machine learning. Here, we focus on agentic solutions to automate this process. In agentic literature, prior works such as \cite{autoGPT_zhang2023automlgptautomaticmachinelearning, hugginggpt_shen2023hugginggptsolvingaitasks, datainterpretor_hong2024datainterpreterllmagent, agenthpo_liu2025largelanguagemodelagent} focus on specific aspects of the ML training pipeline such as data preparation, analysis or HPO etc. \textbf{MLCopilot} \cite{mlcopilot_zhang2024mlcopilotunleashingpowerlarge} was one of the first agentic systems tackling general ML pipelines, using an offline stage to build knowledge from experience and an online retrieval stage. Similarly \textbf{DS-Agent} \cite{dsagent_guo2024dsagentautomateddatascience} proposes to use LLM agents along with case based reasoning (CBR) to come up with generic data science pipelines for a task. 
\textbf{AutoML-Agent} \cite{automlagent_trirat2025automlagent} employs a multi-agent LLM framework using retrieved knowledge bases to automate generic ML training pipelines.
These methods are not tailored to LLMs and fall short in reliably planning and executing complex, multi-stage LLM adaptation workflows. 
Unlike prior methods that typically support only single-stage training pipelines, \algname{} provides a comprehensive framework for full domain adaptation pipelines, including multi-stage processes such as in-context prompting (e.g. RAG), supervised fine-tuning (SFT), alignment (DPO) and so on.
\algname{} aims to \emph{reliably} and \emph{efficiently} generate user-aligned executable \emph{LLM adaptation} plans, overcoming the limitations of existing AutoML systems.
Please refer the Appendix \ref{apndx_related_works} for further details.

\section{Problem Formulation}

We formulate domain adaptation of large language models (LLMs) as a 
multi–stage optimization problem. Let 
$D = \{(x_i, y_i)\}_{i=1}^N$
denote the user–provided dataset for a downstream task, and let 
\(c\) represent the user context (e.g., hardware constraints, latency 
requirements, or user preferences such as PEFT vs.\ full fine–tuning).
An adapted model is constructed through a sequence of 
\(T\) transformation stages (e.g., continual pretraining, supervised 
fine–tuning, preference optimization, inference–time augmentation).  
Each stage \(t\) is parameterized by a configuration 
\(h_t \in \mathcal{H}_t\).  
The final model produced by the pipeline is denoted as $f_\theta(h_1,\ldots,h_T).$
\algname{} \emph{differs from existing AutoML techniques in this aspect of planning each component of the pipeline sequentially following the hierarchy of the adaptation graph.}
Our goal is to find a feasible configuration sequence  
\((h_1,\ldots,h_T)\) that minimizes expected task loss:
\[
\min_{h_1,\ldots,h_T \in \mathcal{H}(c)}
\;\; 
\mathbb{E}_{(x,y)\sim D}
\left[
    L\big(f_\theta(h_1,\ldots,h_T)(x), y\big)
\right],
\]
where the feasible set \(\mathcal{H}(c)\) encodes context‑dependent 
constraints such as GPU memory, training budget, or user‑specified 
preferences.

\textbf{Prior‑Guided Sequential Configuration Space: }
Each stage \(t\) has a set of hyperparameters, $h_t = \{h_{t,1}, \ldots, h_{t,n_t}\},$
and rather than selecting all hyperparameters jointly, we construct them 
sequentially.  
Let \(h_{<t} = (h_1,\ldots,h_{t-1})\) denote configurations from previous 
stages, and \(h_{t,<i} = (h_{t,1},\ldots,h_{t,i-1})\) denote earlier 
choices within stage \(t\).
Define a prior‑guided feasible region for stage \(t\) as
\[
\mathcal{P}_t(c, h_{<t})
=
\left\{
    h_t : 
    h_{t,i} \in \mathcal{P}_{t,i}(c, h_{<t}, h_{t,<i}),
    \; i = 1,\ldots,n_t
\right\},
\]
where each sub‑policy \(\mathcal{P}_{t,i}\) incorporates 
domain knowledge (e.g., best‑practice hyperparameter ranges), 
conditional dependencies (e.g., selecting LoRA activates 
\(\text{rank}, \alpha, \text{dropout}\)), and context constraints.
The multi–stage adaptation problem thus becomes:
\begin{align*}
\min_{h_1,\ldots,h_T}
\;\;
& \mathbb{E}_{(x,y)\sim D}
\left[
    L\big(f_\theta(h_1,\ldots,h_T)(x), y\big)
\right]
\quad \\
& \text{s.t.}
\quad
h_t \in \mathcal{P}_t(c, h_{<t}),\;
\forall t
\end{align*}

\begin{proposition}
Let \(\mathcal{H}(c)\) be the feasible configuration space and 
\(\mathcal{P}(c) = \mathcal{P}_1 \times \cdots \times \mathcal{P}_T\) the 
prior‑guided subspace.  
Denote the expected task loss as 
\(L_{\mathbb{E}}(h)\),  
the optimal loss over the full space is $L_{\mathcal{H}}^{*} = \min_{h\in \mathcal{H}(c)} L_{\mathbb{E}}(h)$ and the optimal loss over the prior‑guided space is $L_{\mathcal{P}}^{*} = \min_{h\in \mathcal{P}(c)} L_{\mathbb{E}}(h)$. Considering \(L_{\mathbb{E}}(h)\) is Lipschitz continuous in hyperparameters, the prior‑guided subspace satisfies the generalized 
\(\varepsilon\)-covering bound:
\[
L_{\mathcal{P}}^{*}
\;\le\;
L_{\mathcal{H}}^{*}
\;+\;
L \varepsilon,
\]
where \(L\) is the Lipschitz constant and \(\varepsilon\) is the distance between the optimal parameters in $\mathcal{H}(c)$ and $\mathcal{P}(c)$. 
\end{proposition}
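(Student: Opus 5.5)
The argument is a one-line Lipschitz comparison, so the only real work is to fix the objects the statement leaves implicit.

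First, I would fix a metric $d$ on $\mathcal{H}(c)$. For example, take a weighted product metric over the stages, with continuous hyperparameters measured on their natural (possibly log) scale and categorical choices handled by a discrete metric. I would then read ``$L_{\mathbb{E}}$ is Lipschitz'' as
\[
|L_{\mathbb{E}}(h)-L_{\mathbb{E}}(h')|\le L\, d(h,h')
\quad\text{for all } h,h'\in\mathcal{H}(c).
\]
I would also assume that $\mathcal{P}(c)\subseteq\mathcal{H}(c)$ is nonempty and that both minima are attained, for instance because the spaces are compact or finite. If attainment fails, the same argument works with infima and an arbitrary slack $\eta>0$ that is sent to zero at the end.

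The main step is to make $\varepsilon$ precise. I would let $h^{*}\in\arg\min_{h\in\mathcal{H}(c)}L_{\mathbb{E}}(h)$ and define
\[
\varepsilon = d\bigl(h^{*},\mathcal{P}(c)\bigr)=\inf_{p\in\mathcal{P}(c)} d(h^{*},p),
\]
the distance from the global optimum to the prior-guided subspace. Equivalently, $\mathcal{P}(c)$ is an $\varepsilon$-cover of the point $h^{*}$. I would choose a point $\tilde h\in\mathcal{P}(c)$ attaining this distance; if $\mathcal{P}(c)$ is closed in a compact space such a point exists, and otherwise I would take $d(h^{*},\tilde h)\le\varepsilon+\eta$.

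Then I would chain the inequalities. By optimality over the subspace, $L_{\mathcal{P}}^{*}\le L_{\mathbb{E}}(\tilde h)$. By the Lipschitz property, $L_{\mathbb{E}}(\tilde h)\le L_{\mathbb{E}}(h^{*})+L\,d(h^{*},\tilde h)\le L_{\mathcal{H}}^{*}+L\varepsilon$, with an extra $L\eta$ in the non-attained case that vanishes as $\eta\to 0$. I would also note the trivial lower bound $L_{\mathcal{H}}^{*}\le L_{\mathcal{P}}^{*}$, which holds because $\mathcal{P}(c)\subseteq\mathcal{H}(c)$. Together these sandwich $L_{\mathcal{P}}^{*}$ within $L\varepsilon$ of the unrestricted optimum.

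The main obstacle is not the computation but the interpretation of $\varepsilon$. The statement calls it ``the distance between the optimal parameters in $\mathcal{H}(c)$ and $\mathcal{P}(c)$.'' If this is read as the distance between $h^{*}$ and the restricted optimum $h_{\mathcal{P}}^{*}$, the bound follows even more directly: $L_{\mathcal{P}}^{*}=L_{\mathbb{E}}(h_{\mathcal{P}}^{*})\le L_{\mathbb{E}}(h^{*})+L\,d(h^{*},h_{\mathcal{P}}^{*})$. In that reading the cover-distance version is the sharper statement, since $d(h^{*},\mathcal{P}(c))\le d(h^{*},h_{\mathcal{P}}^{*})$. I would therefore state the proof with $\varepsilon=d(h^{*},\mathcal{P}(c))$ and remark that it implies the other reading.

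A second subtlety is that Lipschitz continuity over categorical stage choices, such as RAG versus SFT, only makes sense under the discrete-metric convention above. Under that convention the bound is meaningful mainly when the prior-guided subspace contains the optimal discrete branch, in which case $\varepsilon$ reduces to a distance within the continuous coordinates.
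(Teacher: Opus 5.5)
Your proof is correct and is essentially the argument the paper intends. The paper's whole proof is the remark that the bound is ``straightforward to verify after continuous relaxation of the discrete space,'' which amounts to the Lipschitz comparison $L_{\mathcal{P}}^{*}\le L_{\mathbb{E}}(\tilde h)\le L_{\mathcal{H}}^{*}+L\,d(h^{*},\tilde h)$ that you write out. You differ only in two ways: you treat categorical coordinates with a discrete product metric instead of relaxing them to a continuous domain, and you spell out the subset, attainment, and choice-of-$\varepsilon$ conventions that the paper leaves implicit.
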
 
The above result is straightforward to verify after continuous relaxation \cite{boyd2004convex} of the discrete space. It implies that restricting search to the prior‑guided subspace incurs at most a controlled additive suboptimality while yielding dramatically improved tractability.
For the optimization over the constrained search space we propose to use LLMs together with function estimation (using Gaussian process) techniques. More details on Gaussian processes can be found in Appendix \ref{apndx_prelims} and \citealt{beckers2021introductiongaussianprocessmodels}. 

\noindent Note, the above formulation is intended solely for clarity of exposition and the proposed method does not depend on the stated assumptions.

\section{Method}

\begin{figure*}[ht!]
    \centering
    \includegraphics[width=0.95\linewidth]{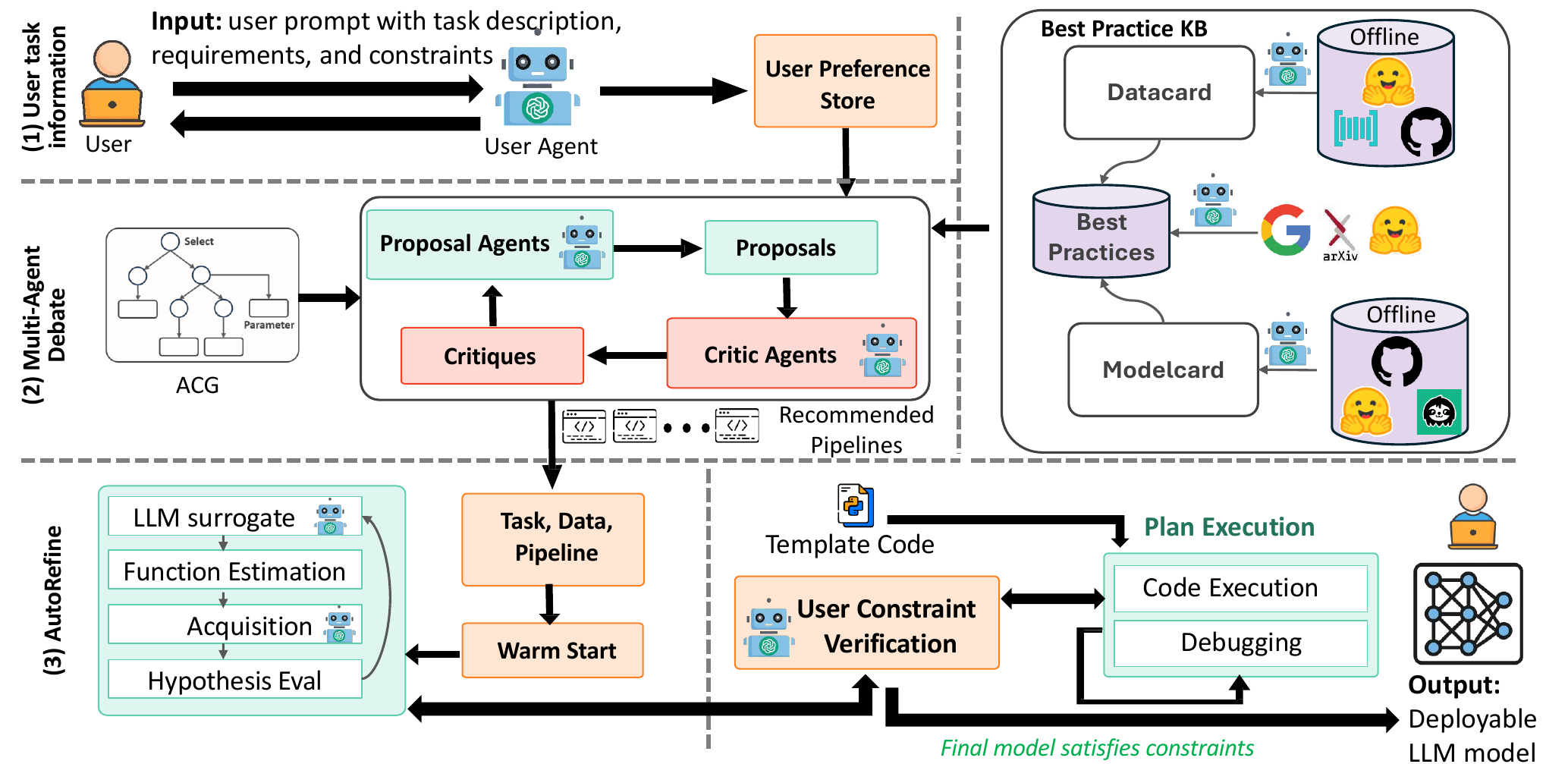}
    \caption{Framework Overview: \algname{} processes user data and task definitions, integrates best practices, generates an executable training pipeline, and refines an intermediate model to produce a deliverable model for users. }
    \label{fig:autoadapt_overview}
\end{figure*}

This section details the components of our method. Figure \ref{fig:autoadapt_overview} gives an overview of \algname{}. 
The user provides the task description, data and constraints. \algname{} then runs to find the optimal adaptation pipeline through the following workflow: 
(i) The \textbf{Adaptation Configuration Graph (ACG)} is executed to decompose the decision making in a structured workflow. (ii) A \textbf{multi‑agent debate pipeline} is run at every node of the ACG to propose an executable adaptation plan with narrowed hyperparameter ranges. The multi-agent system is grounded in \textbf{best practices} from literature (processed in a KB or queried real time from web search, huggingface, github etc.) (iii) On obtaining the adaptation pipeline from the multi-agent system, \textbf{AutoRefine} is run to obtain the exact parameter set. AutoRefine is a feedback‑driven optimization loop that combines LLM surrogate reasoning with probabilistic function estimation to efficiently and effectively select hyperparameters. 
(iv) Once the optimization is complete, the pipeline is executed to produce the adapted LLM for the user task.

\subsection{Best Practices Knowledge Base}
The multi-agentic system needs to be grounded in the real world LLM usage ecosystem. For this we curate the best-pratices corpus from public LLM ecosystems capturing model choices, training pipelines, and hyperparameters at scale. The sources include Hugging Face model/data cards and tags (e.g., sft, dpo, domain tags like medical), PwC leaderboards and notebooks (github, unsloth) and practitioner guides on web.
This corpus helps in the hierarchical decision of strategy selection (e.g., \texttt{RAG} vs \texttt{SFT}, \texttt{LoRA} vs. \texttt{Prefix Tuning}), model choice (e.g. \texttt{Llama-3.2-3B-Instruct} vs \texttt{Qwen2.5-3B}) and hyperparameter ranges (e.g., \texttt{learning\_rate}, \texttt{batch\_size} etc.).

We extract knowledge from $\sim 100$ K model and data cards available from the sources. 
We prompt an LLM to extract features such as model family, size, training pipeline and parameters if available. Similarly for the datasets we extract the dataset size, domain, best performing models among other details.
These structured extractions (training technique, dataset size, hyperparameters, evaluation metrics, source links) form the schema of the KB. 

\textbf{Extraction pipeline} The system applies prompt‑driven parsers over model/data cards and notebooks to obtain details like training technique, model/dataset size, hyperparameters, evaluation, model source etc. The entries are indexed in a vector store by task description, domain and dataset descriptors. The KB is then queried given an incoming task (e.g. medical QA), returning top‑k matched datasets/models and their reported training configurations to initialize
the planning. 
Please refer $\S$ \ref{apndx_bp} for details.

\subsection{Adaptation Configuration Graph (ACG)}

\begin{figure}
    \centering
    \includegraphics[width=0.9\linewidth]{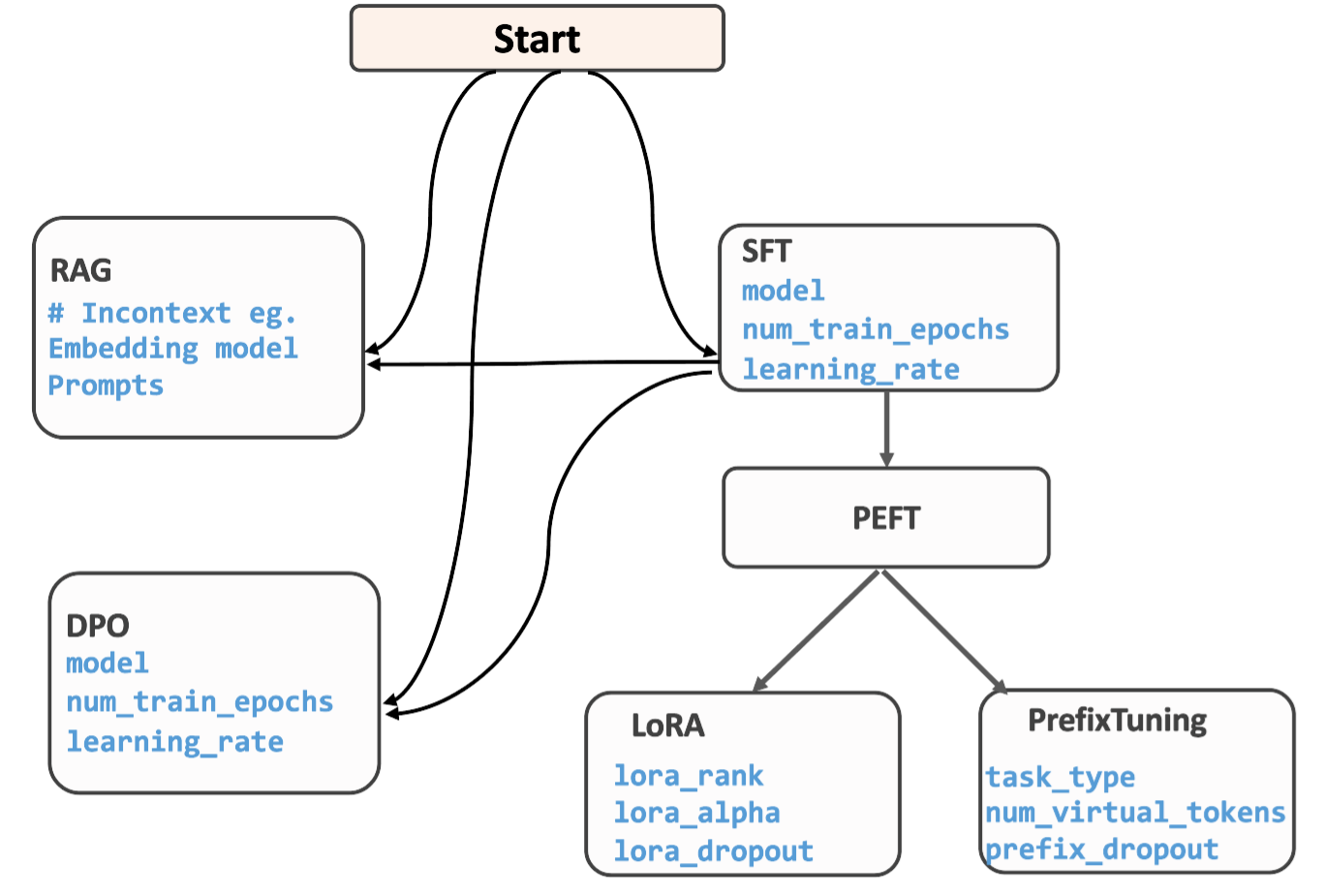}
    \caption{Hierarchical Configuration Determination Using a Domain Adaptation Configuration Graph (ACG) }
    \label{fig:acg}
    \vspace{-0.5cm}
\end{figure}

The ACG formalizes adaptation planning as a directed acyclic graph.
The nodes are of two types: selection nodes for discrete strategy/technique (e.g. \texttt{RAG} vs \texttt{SFT}) choices and parameter nodes for  hyperparameters within the selected technique (e.g. \texttt{learning\_rate} in \texttt{SFT}). 

\textbf{Construction:} We construct the ACG manually based on best practices. The start selection node selects between \texttt{RAG}, \texttt{SFT} and \texttt{Alignment} . Along \texttt{SFT}, the decision path includes \texttt{PEFT} and within \texttt{PEFT} the choices include \verb|{LoRA, DoRA, Prefix Tuning}|. The selection nodes for \texttt{Alignment} can be invoked post \texttt{SFT} or without \texttt{SFT} (using a pretrained model). The choices inside \texttt{Alignment} include \verb|{DPO, RLHF}|. The Parameter nodes, present at the leaf of a traversed path, represent the decision for the continuous/categorical parameter ranges for e.g. parameters for \texttt{SFT} include \texttt{epochs, learning\_rate, LoRA rank/alpha/dropout, batch\_size, gradient\_accumulation, quantization} etc. Edges encode dependencies e.g. choosing \texttt{LoRA} in \texttt{PEFT} selection node activates the dependent Parameter node to decide the search space for \verb|{lora_r, lora_alpha, lora_dropout}|.

\textbf{Hierarchical Decision over the ACG:} Figure \ref{fig:acg} shows a sample ACG workflow. The planner (multi-agent debate system outlined next) starts at high‑level decision nodes (e.g., \texttt{RAG}, \texttt{SFT} and \texttt{Alignment}) and traverses until leaf nodes (complete executable plan). At each node, the planner retrieves information from the best‑practice KB
and aggregates them with user constraints (compute/time/budget) to decide the pipeline.
This structured approach constrains search making the adaptation efficient and ensures reliable execution. The traversed path need not adhere to a linear chain graph but could also involve multiple paths e.g. \texttt{SFT} followed by \texttt{Alignment} or \texttt{RAG} as decided by the system.

\subsection{Multi-Agent Debate Planner}

In order to make the decision at the nodes of the ACG, we design a multi-agent pipeline.
The selection nodes of the ACG could also be treated as hyperparameters and evaluated. However this increases the search space which becomes infeasible for the case of LLMs. 
Thus we need an efficient planning system akin to expert humans selecting techniques from knowledge or literature.

The multi‑agent system consists of proposal and critic agents that debate in turns to come up with the final decision on the technique, model and parameter ranges. 
Below we briefly describe the agents (with details in Appendix \ref{apndx_multiagent_system})

\noindent \textbf{Proposal Agents: }
Proposal agents are responsible for constructing and iteratively refining candidate domain adaptation pipelines. 
These agents are (i) \emph{best practice agent}: consolidates empirical knowledge from prior training pipelines, model cards, and published benchmarks, and (ii) \emph{knowledge retrieval agent}: dynamically retrieves evidence from recent literature, technical reports, and online resources. (iii) \emph{research agent}: utilizes the research information retrieved from arxiv, KBs etc. to propose techniques. 
These agents initialize the search process in regions of the configuration space that historically yield strong performance while remaining responsive to emerging techniques.

\noindent \textbf{Critic Agents: }
Critic agents verify the proposed configurations against user, data and resource constraints. 
(i) The \emph{user agent} enforces user‑specified requirements, such as compute budgets, training preferences, methodological restrictions, and domain‑specific constraints.
In parallel, (ii) the \emph{data agent} performs data‑centric critique by analyzing statistics of the target dataset, including scale, token distributions, domain characteristics etc.,
and mapping these signals to concrete constraints on adaptation choices.

After each round of debate, an \emph{aggregator agent} aggregates the proposals and critiques.
The proposal agents further revise the proposals based on the critiques and unsatisfied constraints.
In this manner, each ACG node undergoes  \emph{proposal $\xrightarrow{}$ critique $\xrightarrow{}$ revision } iterations. 
The loop terminates when all active constraints are satisfied and the plan is deemed executable (with runnable code). The AutoRefine module is then invoked to decide the exact parameter values.

\subsection{AutoRefine}

This module decides the optimal hyperparameter set using an LLM surrogate method.
Prior works \cite{transformersfailforecasttimezhou2025} have shown that transformers (and thus LLMs) have limitations in time series forecasting. This issue is bound to be amplified in the multivariate forecasting by works using LLMs as surrogates \cite{llambo_liu2024large} to predict results. Thus we propose a principled approach grounded in function estimation theory to guide the surrogate modeling from prior knowledge of historic runs.
We use Gaussian Process (GP) model for function estimation.

Algorithm $\ref{alg:pseudo_code}$ provides a pseudo-code for AutoRefine.
We start by sampling a coreset \cite{jubran2019introductioncoresetsaccuratecoresets} of the data for efficient execution. We then perform few ($\sim 3$) execution runs on the selected pipeline (e.g. \texttt{SFT}) by invoking an LLM to sample initial parameters from the narrowed search space (lines 2-5). 
Inspired by \cite{llmprocess_NEURIPS2024_c5ec2271}, we prompt the LLM to predict the results along each parameter dimension from the evaluated points (lines 9-12). We reason that predicting the trend along each parameter axis will help reduce hallucinations compared to directly estimating the results on the higher dimensional space. These historic and predicted results are used to train a GP model to obtain a function estimation of the remaining points in the search space (line 13). Note the GP model requires hundreds of data points to learn the function and it is not possible to finetune/evaluate LLMs for many iterations. Hence, we obtain the parameter-axis wise trends of the results to augment data. Once the model of the results over the search space is obtained we need an acquisition function that would maximize the result of the acquired hyperparameter set. For acquisition, we find the optimal point of function using LBFGS \cite{Liu1989OnTL} (after continuous relaxation of the categorical variables), starting from each of the historic evaluated samples (lines 14-18). Post discretization, the LLM is prompted in-context with these optimal points to choose the next parameter set to evaluate on (lines 19-21), balancing exploration and exploitation. Thus the LLM surrogate is guided by the optimal hyperparameter as per the estimated function that is grounded in results from prior runs, reducing hallucinations. We allow a maximum of $\mathcal{B}$ (loops starting line 2,7) function evaluations (e.g. finetuning followed by testing on validation set) on the selected parameters.
Following theoretical result analyzes the regret (difference of result on selected parameter from true optimal) bounds of AutoRefine
(cf. Appendix \ref{apndx_regret_bound} for details).
The regret grows sub-linearly with the iterations and the constants are improved by the LLM augmentation factor.
\begin{lemma}
    Pick $\delta \in (0,1)$. Then with probability at least $1-\delta$, the cumulative regret of the algorithm in \ref{alg:pseudo_code} is bounded as $R_{T_a} = \mathcal{O}\left(\sqrt{\frac{T}{F_a}log^2\left(\frac{T}{F_a}\right)}\right)$. Further, to guarantee the average regret $r_T \le \varepsilon$ for some $\varepsilon > 0$, we have $T_a = \Omega\!\Bigg(\frac{\ln^2(1/\varepsilon)}{F_a\varepsilon^2}\Bigg)$.
    Here $T_a, T$ is the number of actual function evaluations and total iterations (actual evals + LLM predictions) of the algorithm respectively, $F_a$ is the LLM surrogate augmentation factor
\end{lemma}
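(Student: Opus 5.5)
The plan is to reduce AutoRefine to a GP-UCB style analysis \cite{beckers2021introductiongaussianprocessmodels} and account for LLM-predicted points as extra, partially informative observations. Model the objective $f$ as a sample from (or an element of the RKHS of) the GP prior used in line 13 of Algorithm \ref{alg:pseudo_code}. Let $T$ be the total number of points conditioned on, actual evaluations plus LLM predictions. Define the augmentation factor $F_a$ by $T_a = T/F_a$: on average each real evaluation brings $F_a-1$ LLM-predicted points along the parameter axes. Treat the predicted values as noisy observations with inflated but bounded noise variance $\sigma_{\mathrm{LLM}}^2 \ge \sigma^2$. Assume sub-Gaussian residuals, so the usual GP concentration still holds. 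We will also treat the acquisition step (LBFGS from the historic points, then the LLM choosing among the returned optima) as an approximate maximiser of the UCB score $\mu_{t-1}(x)+\beta_t^{1/2}\sigma_{t-1}(x)$. Any approximation gap is assumed summable or absorbed into constants.

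\textbf{Step 1: confidence bound.} Take a union bound over iterations (after the continuous relaxation mentioned for the Proposition, discretised at a fine grid). With $\beta_t = 2\log(|\mathcal{D}|t^2\pi^2/6\delta)$, this gives $|f(x)-\mu_{t-1}(x)|\le \beta_t^{1/2}\sigma_{t-1}(x)$ for all $x,t$ with probability at least $1-\delta$. The posterior here is built from both real and LLM-augmented data.

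\textbf{Step 2: per-step regret and summation.} Bound the instantaneous regret at each real query by $r_t \le 2\beta_t^{1/2}\sigma_{t-1}(x_t)$. Conditioning on more points only shrinks $\sigma_{t-1}$. Summing over the $T_a$ real queries and applying Cauchy--Schwarz gives $R_{T_a}\le \sqrt{C\,T_a\,\beta_{T}\,\gamma_{T}}$, where $\gamma_T$ is the maximal information gain after $T$ total observations. The key accounting step is this: the sum $\sum_t \sigma_{t-1}^2(x_t)$ is controlled by the information gain of the whole augmented sequence. The real queries are a subsequence of it, so the same $\gamma$ bound applies, with the constant $C=8/\log(1+\sigma_{\mathrm{LLM}}^{-2})$ adjusted for the mixed noise levels.

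\textbf{Step 3: plug in kernel rates.} For the squared-exponential kernel in fixed dimension, $\gamma_n=\mathcal{O}(\log^{d+1} n)$, and $\beta_n=\mathcal{O}(\log n)$. Expressing everything in $T_a=T/F_a$ and treating the (low, narrowed) dimension as giving the stated polylog order yields $R_{T_a}=\mathcal{O}\big(\sqrt{(T/F_a)\log^2(T/F_a)}\big)$. The average-regret claim then follows by dividing by $T_a$, giving $r\lesssim \log(T_a)/\sqrt{T_a}$. Requiring this to be at most $\varepsilon$ and inverting, with $\log T_a \approx \ln(1/\varepsilon)$ up to constants, gives $T_a=\Omega\big(\ln^2(1/\varepsilon)/(F_a\varepsilon^2)\big)$ in the paper's normalisation. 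Here the $F_a$ gain enters through the information supplied by the augmented points.

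\textbf{Main obstacle.} The hardest step is Step 2: justifying that the LLM predictions genuinely reduce the uncertainty at real query points, so that $F_a$ improves constants rather than merely inflating $T$. They are not true function observations, so there is a risk of bias. I would first try to model them as unbiased observations with larger sub-Gaussian noise, which keeps Step 1 valid. If that is too strong, I would instead add a bounded bias term $b$ to the confidence width. This gives an extra $\mathcal{O}(bT_a)$ term that must be assumed to be dominated, for instance by letting $b$ decay with the number of real evaluations.
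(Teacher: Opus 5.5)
Your Steps 1--2 match the paper's skeleton. Both treat pseudo-observations as unbiased measurements with inflated sub-Gaussian noise, so the confidence envelope survives. The UCB comparison gives $\Delta_t\le 2\sqrt{\beta_t}\,\sigma_{t-1}(\theta_t)$, and the sum is taken over the $T/F_a$ real rounds only.

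The first gap is in Step 3. Cauchy--Schwarz with the maximal information gain gives $R_{T_a}\le\sqrt{C\,T_a\,\beta_T\,\gamma_T}$. With $\beta_T=\mathcal{O}(\log T)$ and $\gamma_T=\mathcal{O}(\log^{d+1}T)$ for the SE kernel, this is $\mathcal{O}(\sqrt{T_a\log^{d+2}T})$. For every $d\ge 1$ the logarithmic exponent is at least $3$, so the stated $\log^2$ rate does not follow. ``Treating the narrowed dimension as giving the stated polylog order'' is not a valid step: narrowing the ranges shrinks the domain length, not $d$. For example, the final MedQA LoRA search space in the appendix has thirteen hyperparameters. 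The paper does not use the worst-case $\gamma_T$ at all. It invokes the refined analysis of \cite{iwazaki2025improvedregretboundsgaussian}, with that paper's matching $\beta_t$ schedule. That analysis uses the concentration of the input sequence actually realized by GP-UCB near the maximiser to bound $\sum_t\sqrt{\beta_t}\,\sigma_{t-1}(\theta_t)$ by $\mathcal{O}(\sqrt{T\ln^2 T})$ for the SE kernel. That result supplies the dimension-free exponent $2$; the Srinivas-style information-gain route cannot.

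The second claim has a separate gap. Dividing your $R_{T_a}$ by $T_a$ and inverting $\log T_a/\sqrt{T_a}\le\varepsilon$ gives $T_a=\Omega(\ln^2(1/\varepsilon)/\varepsilon^2)$, with no $F_a$. Nothing in your Steps 1--3 produces a factor $1/F_a$:
the augmented points only enlarge $\gamma_T$; your constant $C=8/\log(1+\sigma_{\mathrm{LLM}}^{-2})$ grows as their noise is inflated; and monotonicity of the posterior variance only lets you fall back on the real-only bound in terms of $\gamma_{T_a}$, which is again free of $F_a$.
The phrase ``in the paper's normalisation'' hides this, and your closing paragraph essentially concedes it.

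The paper obtains $F_a$ by bookkeeping rather than by an information argument. It counts the LLM pseudo-observations as rounds and uses $R_T=\mathcal{O}(\sqrt{T\ln^2 T})$ over all $T$ augmented iterations. It then averages the regret over those $T$ rounds and inverts $c\sqrt{\ln^2 T/T}\le\varepsilon$ to get $T=\Omega(\ln^2(1/\varepsilon)/\varepsilon^2)$. Only then does it substitute $T=F_aT_a$. To reproduce the statement you therefore need two things: the realized-sequence bound in place of $\gamma_T$, and an average over the total $T$ iterations rather than the $T_a$ real ones. The obstacle you flag, showing that LLM predictions genuinely reduce uncertainty, is not resolved by the paper either. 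It is absorbed into the assumption that pseudo-observations are unbiased with inflated noise and affect only constants.
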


\renewcommand{\algorithmiccomment}[1]{\textcolor{RoyalBlue}{$\triangleright$ #1}}

\renewcommand{\algorithmicrequire}{\textbf{Input:}}
\renewcommand{\algorithmicensure}{\textbf{Output:}}

\setlength{\textfloatsep}{0.3cm}
\begin{algorithm}[t!]
\caption{Overall Procedure of AutoRefine} \label{alg:pseudo_code}
\begin{algorithmic}[1]
\REQUIRE Pipeline $\mathcal{P}$, Parameter space $\Theta$, Coreset $C \subseteq \texttt{Data}$, LLM $\mathcal{L}$, GP kernel $\kappa$ (RBF), Budget $\mathcal{B}$, Augmentation Factor $F_a$
\ENSURE Best configuration $\theta^\star \in \Theta$
\STATE $\mathcal{M} = \varnothing$
\FOR[Warm start with $\mathcal{K}<\mathcal{B}$ runs]{$k$ \textbf{in} $\mathcal{K}$}
\STATE $\theta^{(k)} \gets \mathcal{L}(\mathcal{M}, \mathcal{P})$ \COMMENT{Sample from parameter space}
\STATE $\hat{y}^k \gets \mathrm{Eval}(\mathcal{P}, C, \theta^{(k)})$ \COMMENT{Evaluate pipeline and observe outcomes}
\STATE $\mathcal{M} \gets \mathcal{M} \cup (\theta^k, \hat{y}^k)$
\ENDFOR

\WHILE{$!\texttt{Converged}$ \textbf{or} $\texttt{niters} = \mathcal{B-K}$}
\STATE $\mathcal{M_{L}} \gets M$
\FOR[Univariate LLM forecasting along each parameter, keeping others fixed]{$j$ \textbf{in} $\texttt{dim}(\theta)$}
  \STATE $\{(\theta_j^{(s)}, \tilde{y}_j^{(s)})\} \gets \mathcal{L}(\mathcal{M}, \mathcal{P}, \texttt{parameter=j}, \texttt{s=}\lceil\frac{F_a}{\texttt{dim}(\theta)}\rceil)$
  \STATE $\mathcal{M_{L}} \gets \mathcal{M_{L}} \cup \{(\theta_j^{(s)}, \tilde{y}_j^{(s)})\}$
\ENDFOR

\STATE $\mathcal{S}(\theta), \mu(\theta), \sigma(\theta) \gets \mathrm{GP}(\mathcal{M_L}, \kappa)$ \COMMENT{Fit GP surrogate and obtain UCB score, mean, uncertainty over $\Theta$ }

\STATE $M_{\texttt{opt}} \gets \varnothing$
\FOR[Optimize the estimated density from each evaluated point]{$t \in \vert \mathcal{M} \vert$}
  \STATE $(\theta^t_{\texttt{opt}}, y^t) \gets \texttt{LBFGS}(S(\theta), \theta^{t})$
  \STATE $M_{\texttt{opt}} \gets M_{\texttt{opt}} \cup (\theta^t_{\texttt{opt}}, y^t)$
\ENDFOR

\STATE $\theta_{s} \gets \mathcal{L}(\mathcal{M, P, M_{\texttt{opt}}}, \theta)$ \COMMENT{Acquisition function to sample new set of parameters for evaluation}
\STATE Evaluate $\hat{y}_s \gets \mathrm{Eval}(\mathcal{P}, C, \theta_s)$
\STATE $\mathcal{M} \gets \mathcal{M} \cup \{(\theta_s, \hat{y}_s)\}$ 
\ENDWHILE
\STATE \textbf{return} $\theta^\star \gets \arg\max_{(\theta,y)\in \mathcal{M}} y$ 
\end{algorithmic}
\end{algorithm}

\section{Experiments}

\begin{figure*}[ht!]
    \centering
    \includegraphics[width=0.8\linewidth]{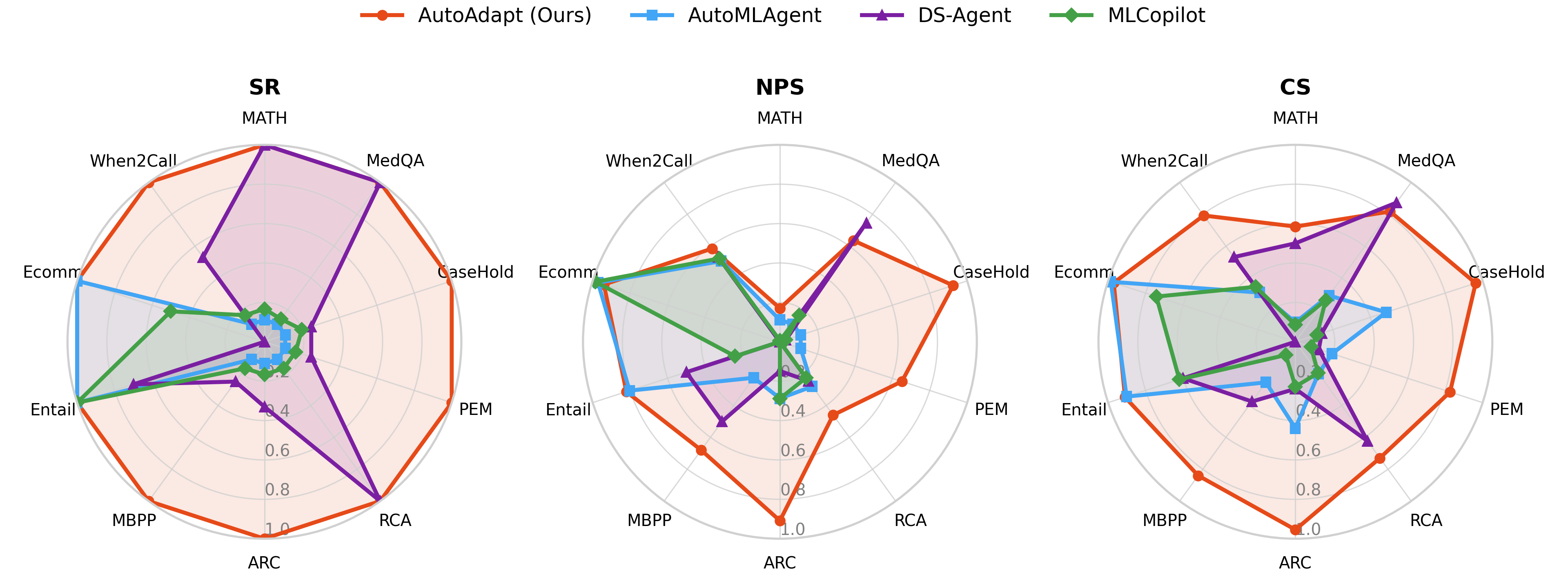}
    \caption{Success Rate (SR), Normalized Performance Score (NPS), Cumulative Score (CS) comparing \algname{} with baseline methods across datasets. Higher score indicates better results. \algname{} outperforms SoTA baselines across datasets. Detailed results are in $\S$ \ref{apndx_res}.}
    \label{fig:main_results} 
\end{figure*}

We design our experiments to evaluate the effectiveness of \algname{} and aim to answer the following research questions (RQ):
\textbf{RQ1:} Can \algname{} effectively recommend adaptation pipelines across real world and benchmark datasets?
\textbf{RQ2:} What is the importance of each component of \algname{}?
\textbf{RQ3:} How do the results converge on scaling the iterations?
\textbf{RQ4:} What is the effectiveness of AutoRefine module compared to Bayesian Optimization?
\textbf{RQ5:} How efficient is \algname{}?

\noindent \textbf{Datasets}
We conduct experiments on the following benchmark datasets (details in $\S$ \ref{apndx_datasets}):
\textbf{MATH} \cite{hendrycksmath2021}, \textbf{MedQA} \cite{medqa_jin2020disease}, \textbf{CaseHold} \cite{casehold_zheng2021doespretraininghelpassessing}, \textbf{ARC} \cite{allenai_arc}, \textbf{MBPP+} \cite{mbppplus}, \textbf{Ecomm} \cite{automlagent_trirat2025automlagent}, \textbf{Entail} \cite{automlagent_trirat2025automlagent}, \textbf{When2Call} \cite{when2call-ross-etal-2025}. In order to test the adaptation to real world setting, 
we also evaluate on proprietary datasets containing pre-engineering level math questions (\textbf{PEM}) and root cause analysis of cloud incidents (\textbf{RCA}).

\noindent \textbf{Metrics:}
We report the accuracy for task-specific benchmarks and compute overhead in GPU hours and estimated cost (\$ value). 
Additionally, we study convergence stability and the number of iterations to reach optimal configurations.

\noindent \textbf{Evaluation Protocol:}
\algname{} was run with its full pipeline: ACG based hierarchical decision making, Best practice KB, multi-agent debate planner, and AutoRefine. Baselines were executed using their recommended workflows and default hyperparameter tuning strategies. 
Where baselines fail to directly produce executable pipelines, we perform minimal corrective edits to ensure executability for each error, run and repeat till success. We report success rate (SR) as the reciprocal of the count of these iterations.
All experiments were repeated three times to ensure consistency and the average is reported.
\noindent \textbf{Baselines:}
We compare \algname{} against the following SoTA Agentic approaches:
1) \textbf{MLCopilot} \cite{mlcopilot_zhang2024mlcopilotunleashingpowerlarge}:
2) \textbf{DS-Agent} (\citet{dsagent_guo2024dsagentautomateddatascience}, ICML'24)
3) \textbf{AutoMLAgent} (\citet{automlagent_trirat2025automlagent}, ICML'25)

\textbf{Implementation Details}
We implement \algname{} as an end-to-end pipeline integrated with the Hugging Face model hub. 
Due to resource constraints, we perform adaptation on models less than 15B parameters from the following family: Qwen, Llama, Phi, Mistral.
All runs were executed on a single 4x A100 GPU with mixed precision enabled (bf16) to ensure reproducibility and resource efficiency.
In AutoRefine, the Gaussian process (GP) model for function estimation was trained using the GPyTorch framework \cite{gpytorchblackboxmatrixmatrixgaussian_gardner2021} for maximum of 50 iterations to avoid overfitting and to enhance efficiency.
Inspired by \cite{automlagent_trirat2025automlagent}, we report success rate (SR), Normalized Performance score (NPS) and a Comprehensive score (CS) averaging SR and NPS, for methods over k(=3) runs. NPS is the task-specific accuracy for our datasets. To select a coreset (by default 10\%) of the data for finetuning/evaluation in AutoRefine we make use of \cite{deepcore_guo2022deepcorecomprehensivelibrarycoreset}. Note finetuning/evaluation using the final pipeline is done on the entire train/test set.
All baselines and \algname{} were constrained by identical underlying LLM (GPT-4.1-mini), resource budgets and evaluated under the same conditions to ensure fairness.

\section{Results}

\textbf{Main Results (RQ1)}
Figure \ref{fig:main_results} shows the SR, NPS and CPS metrics in the \emph{template-free} (TF) setting where methods are allowed to choose any technique, model etc. for solving the task.
We report the average scores from 3 independent runs for
all evaluation metrics.
\algname{} consistently achieves the highest performance across nearly all datasets and metrics.
Table \ref{tab:template_aware_results} shows the \emph{template-aware} (TA) results with the model, technique (SFT) and epochs fixed to ensure comparison across methods for hyperparameter selection. We observe that \algname{} outperforms the generic AutoML methods by a large margin.
These results demonstrate that \algname{}’s structured planning, prior‑guided configuration, and iterative refinement enable more reliable and consistently superior domain adaptation compared to existing automated baselines, making it effective for LLM adaptation to downstream tasks.
Further we show the results of \algname{} with the constraints of choosing RAG and alignment (DPO) pipelines on the PEM dataset. We see from figure \ref{fig:rag_dpo} that compared to the baseline of human tuned or HF defaults our method performs significantly better,
indicating \algname{}'s effectiveness in optimizing the LLM adaptation pipeline irrespective of underlying technique.

\begin{table}[ht]
\centering
\caption{Performance comparison across datasets and methods in the template aware (TA) setting. \algname{} is better than the baselines even when model and technique (SFT) is fixed.}
\label{tab:template_aware_results}
\resizebox{\linewidth}{!}{%
\begin{tabular}{l l c c c c c}
\toprule
\textbf{Dataset} & \textbf{AutoAdapt} & \textbf{AutoMLAgent} & \textbf{MLCopilot} & \textbf{HF Defaults} & \textbf{DS-Agent} \\
\midrule
UC Berkeley MATH & $27.8 \pm 3.5$ & $26.8 \pm 2.4$ & $27.2 \pm 2.3$ & $26.2 \pm 0.06$ & $\mathbf{30.0 \pm 3.7}$ \\
MedQA            & $\mathbf{61.27 \pm 1.1}$ & $58.84 \pm 1.3$ & $57.50 \pm 1.2$ & $58.13 \pm 0.04$ & $59.86 \pm 1.0$ \\
CaseHold         & $\mathbf{89.22 \pm 0.9}$ & $84.72 \pm 1.1$ & $82.63 \pm 1.0$ & $86.21 \pm 0.08$ & $87.28 \pm 0.9$ \\
PW               & $\mathbf{67.69 \pm 1.3}$ & $64.53 \pm 1.4$ & $63.71 \pm 1.5$ & $65.78 \pm 0.02$ & $63.40 \pm 1.6$ \\
RCA              & $\mathbf{2.30 \pm 0.76}$ & $2.23 \pm 0.81$ & $2.18 \pm 0.77$ & $2.23 \pm 0.81$ & $2.25 \pm 0.66$ \\
ARC              & $\mathbf{83.79 \pm 1.2}$ & $77.39 \pm 1.1$ & $77.56 \pm 1.0$ & $78.24 \pm 0.03$ & $77.47 \pm 1.4$ \\
MBPP+            & $\mathbf{68.3 \pm 0.8}$ & $\mathbf{68.2 \pm 0.7}$ & $67.5 \pm 0.9$ & $\mathbf{68.0 \pm 0.07}$ & $66.9 \pm 0.8$ \\
When2Call & $\mathbf{54.5 \pm 0.9}$ & $\mathbf{45.4 \pm 1.2}$ & $48.1 \pm 1.1$ & $\mathbf{46.9 \pm 0.05}$ & $50.0 \pm 0.7$ \\
\bottomrule
\end{tabular}%
}
\end{table}

\begin{figure}[!t]
    \centering
    \subfloat[Variations of \algname{}.]{
        \includegraphics[width=0.5\linewidth]{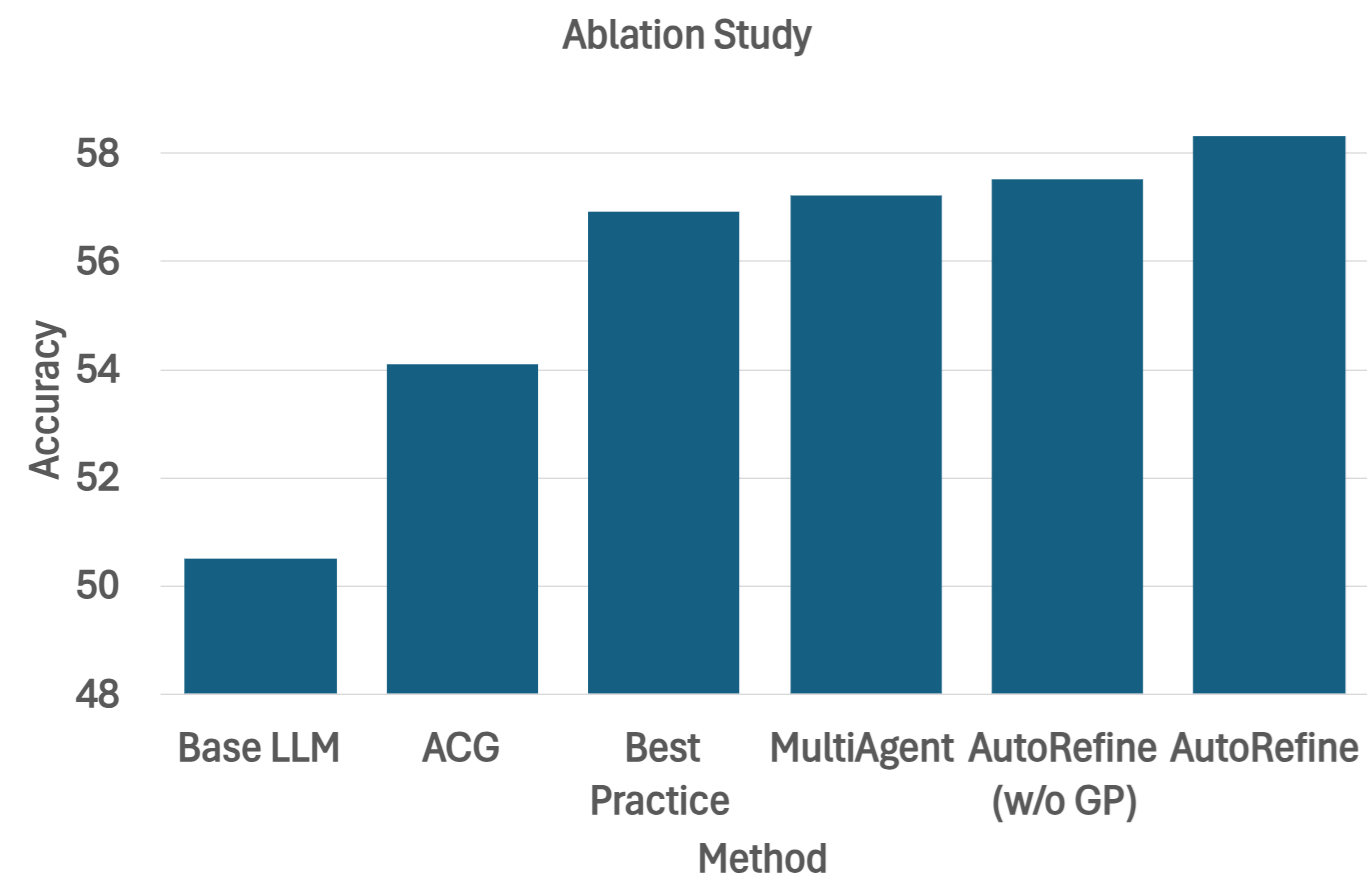}%
        \label{figure:ablation_figure}
    }
    \subfloat[Varying \# rounds/trials.]{
        \includegraphics[width=0.47\linewidth]{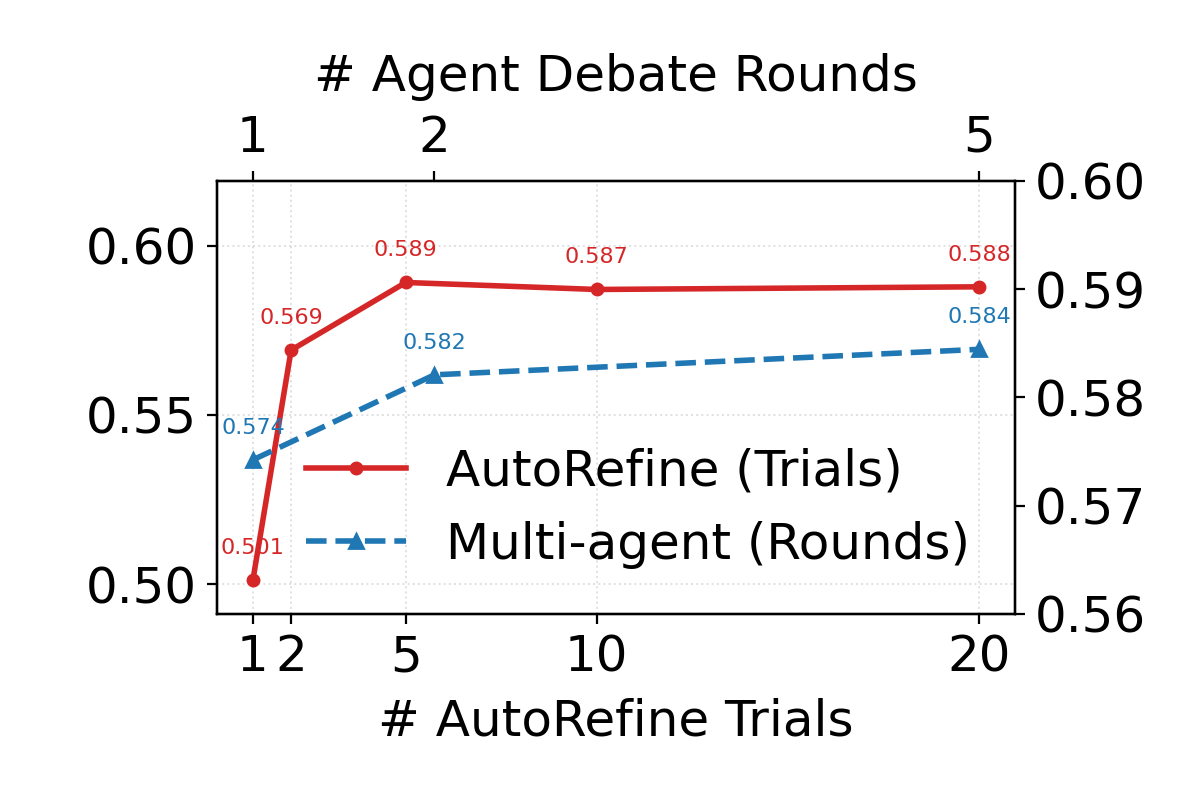}%
        \label{figure:num_trials_rounds_ablation}
    }
    \vspace*{-0.2cm}
    \caption{(a) Ablation and (b) hyperparameter study varying number of multi-agent debate rounds and AutoRefine trials.} \label{fig:ablation_hp_results}
\end{figure}

\textbf{Ablation Study (RQ2)}
Figure \ref{figure:ablation_figure} summarizes the incremental impact of AutoAdapt’s principal components on task accuracy, on the MedQA dataset. 
Specifically, we start with the base LLM, then add the ACG planning, best practices, multi-agent pipeline, AutoRefine (w/o and with GP). We note that on addition of every component, the results improve indicating the importance of the modules.

\textbf{Hyperparameter Study (RQ3)}
Figure \ref{figure:num_trials_rounds_ablation} examines the sensitivity of AutoAdapt to two parameters, namely number of AutoRefine trials and multi‑agent debate rounds, on the MedQA dataset. We observe a sharp gain for AutoRefine when increasing trials from 1 to 2, followed by further increase at 5 trials and then a plateau
This indicates the effectiveness of autorefine in efficiently converging to the optimal set of parameters in few trials. In parallel, multi‑agent rounds show steady, monotonic improvements 
suggesting that additional proposal–critique cycles systematically reduce planning bias and improve configuration quality. 
This is suggestive that a modest budget of 5-10 AutoRefine trials and 2 debate rounds is sufficient to yield optimal results.

\begin{figure}[!t]
    \centering
    \begin{subfigure}{0.18\textwidth}
        \centering
        \includegraphics[width=\linewidth]{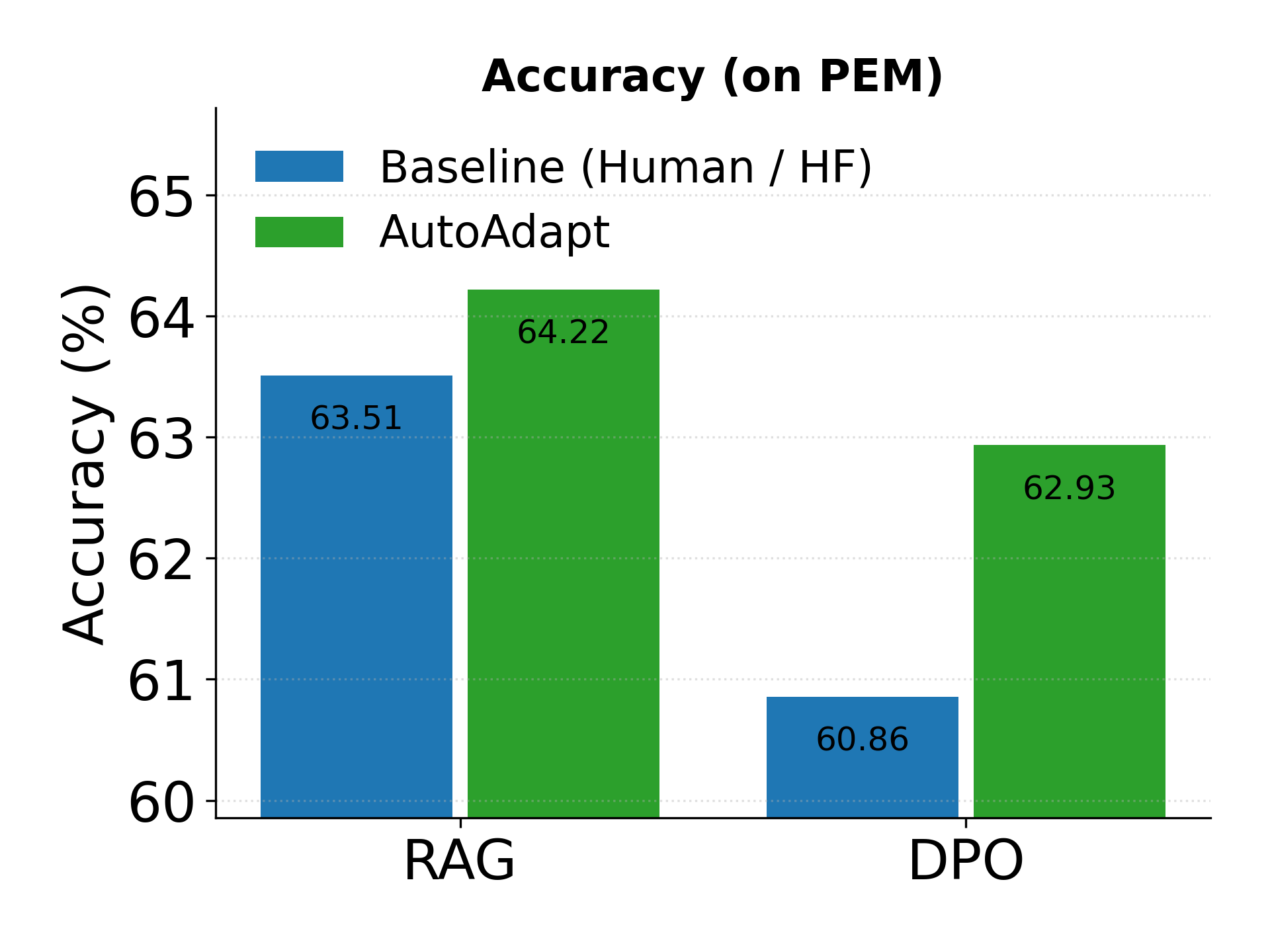}
        \caption{}
        \label{fig:rag_dpo}
    \end{subfigure}\hfill
    \begin{subfigure}{0.3\textwidth}
        \centering
        \includegraphics[width=\linewidth]{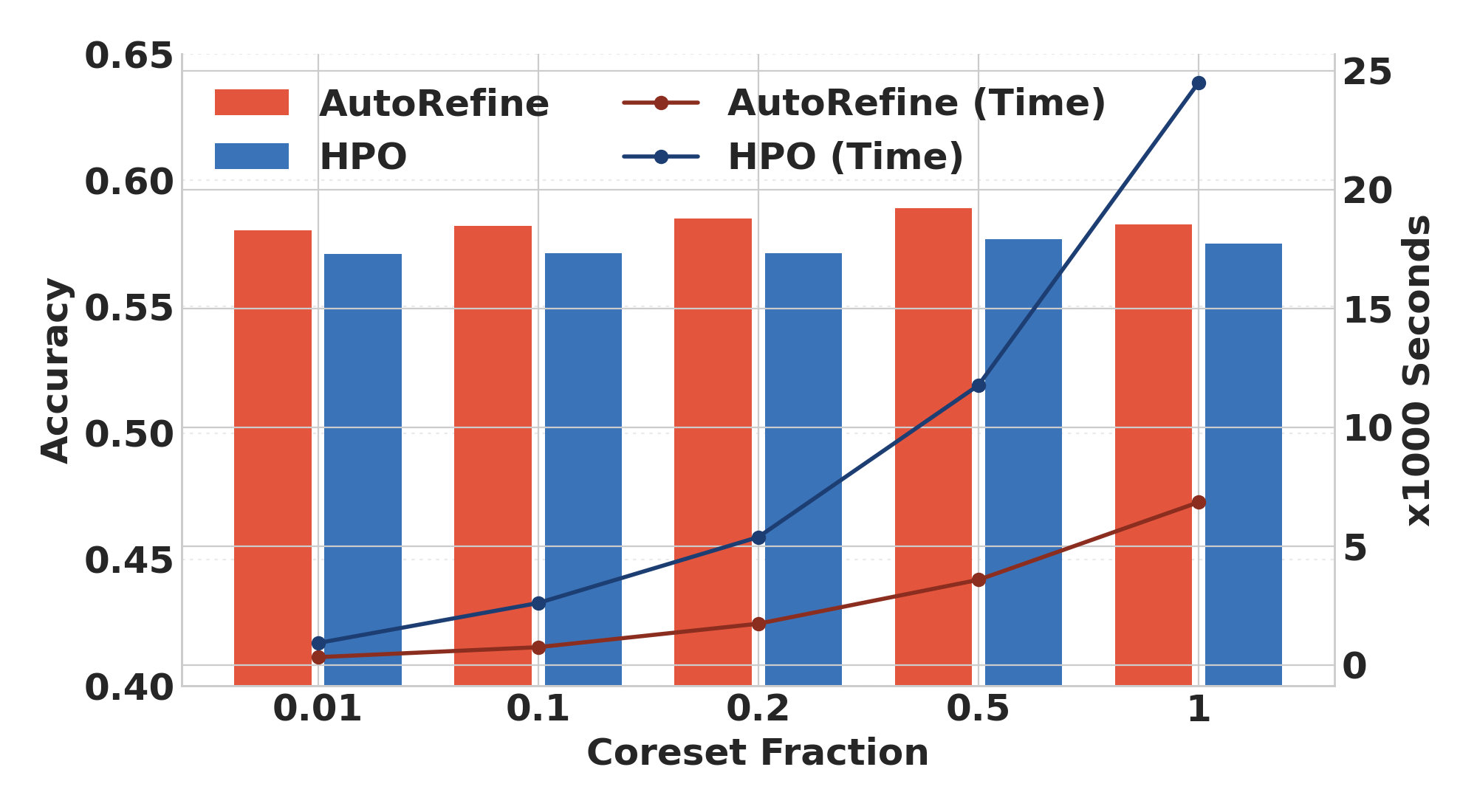}
        \caption{}
        \label{fig:accuracy_time}
    \end{subfigure}
    \caption{(a) RAG, DPO Results (b) Accuracy (bars) and wall-clock time (lines) by coreset sizes for HPO and AutoRefine.}
    \label{fig:accuracy_time_clustered}
\end{figure}

\textbf{Effectiveness of AutoRefine (RQ4)}
Figure \ref{fig:accuracy_time} compares AutoRefine with a standard Hyperparameter Optimization (HPO) baseline across increasing coreset fractions (0.01→1.0), reporting both accuracy and wall‑clock time. 
HPO was performed using the Optuna framework \cite{optuna} with \texttt{auto\_sampler},
till desired accuracy was achieved or maximum budget was attained.
AutoRefine consistently achieves higher or comparable accuracy at all fractions, with a stable performance while being efficient.
In practice, these results support using small coresets with AutoRefine to capture most of the achievable accuracy at a fraction of the cost of standard HPO. 

\begin{figure}[t!]
    \centering
    \subfloat[Cost breakdown (time and money) in multi-agent planning]{
        \includegraphics[width=0.98\linewidth]{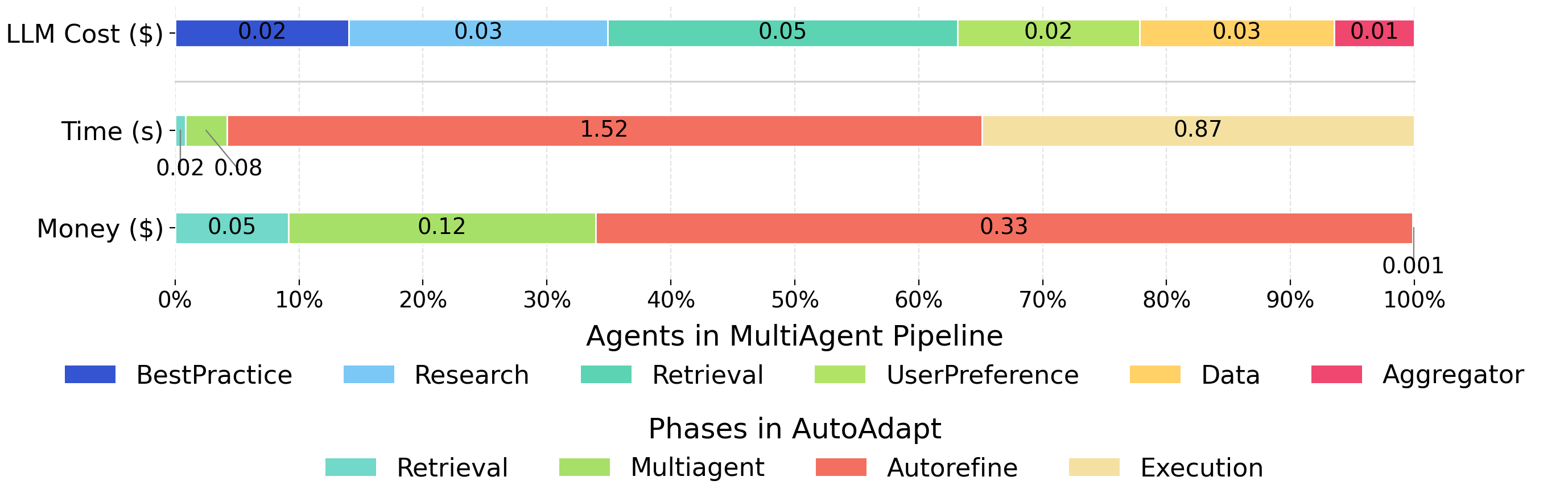}%
        \label{figure:cost_breakdown}
    }
    
    \subfloat[Time Overhead]{
        \includegraphics[width=0.49\linewidth]{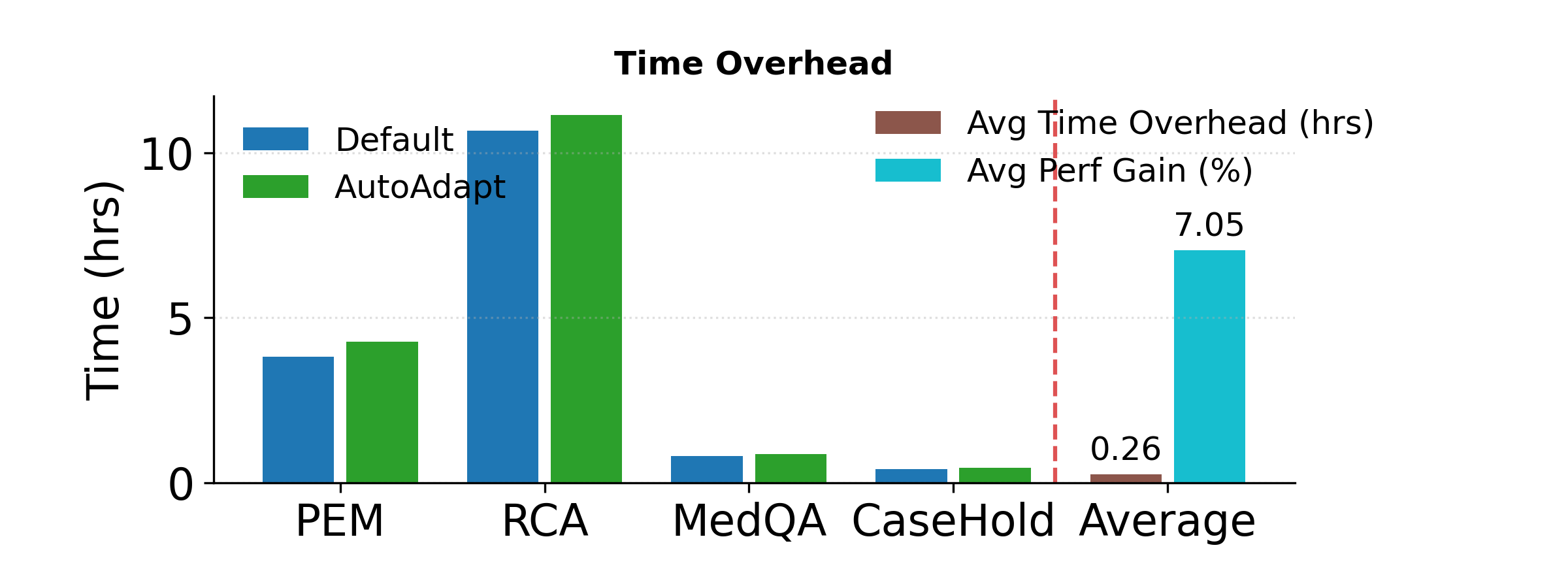}%
        \label{figure:time_overhead}
    }
    \subfloat[Cost Overhead]{
        \includegraphics[width=0.49\linewidth]{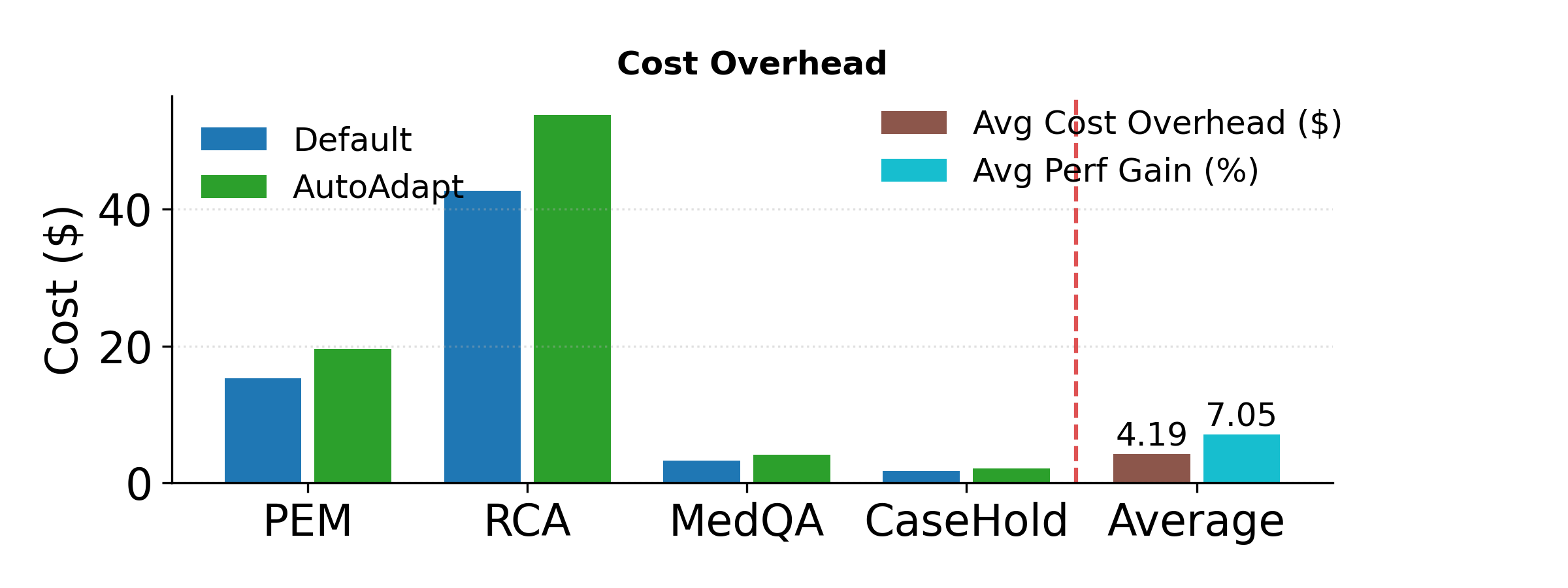}%
        \label{figure:cost_overhead}
    }
    \vspace*{-0.2cm}
    \caption{Top figure shows cost breakdown for MedQA across different agents in the MultiAgent planning phase. Bottom figures show the minimal time/money overhead and performance gains.} \label{fig:cost}
\end{figure}

\textbf{Time and Cost Analysis (RQ5)}
Figure \ref{fig:cost} quantifies the monetary and time cost incurred by AutoAdapt. 
In panel (a), the stacked breakdown shows the most fraction of time and LLM cost of the system are concentrated in the AutoRefine phase.
The reasoning is that AutoRefine runs over multiple iterations as compared to planning in order to find the optimal configuration. 
Panels (b) and (c) report the time and monetary overheads at the task level.
Averaged across datasets, \algname{}’s performance gains ($>7\%$) outpace its incremental time/cost ($\sim$ 15 mins / \$4), confirming a favorable 
trade-off,
validating the system’s efficacy for real‑world LLM domain adaptation.

\section{Conclusion}

In this work, we introduce \algname{}, an automated framework for domain adaptation of large language models.
\algname{} ensures reliable planning using a structured workflow graph. The proposed multi-agent pipeline grounded in best practices KB narrows the search space. Further, AutoRefine performs efficient optimization of the pipeline hyperparameters using a novel LLM guided algorithm grounded in probabilistic function estimation.
Through extensive experiments across diverse domains, we show that \algname{} consistently outperforms SoTA AutoML baselines.
\algname{} takes the first step towards end to end automation of LLM adaptation to real world scenarios.

\section*{Impact Statement}

\algname{} aims to lower the technical and resource barriers to domain adaptation of large language models by automating pipeline design, configuration selection, and refinement. By enabling efficient customization using smaller datasets and constrained budgets, \algname{} can reduce redundant experimentation, lower computational cost, and democratize access to high‑quality domain‑specific models for researchers and practitioners with limited resources. This can benefit applications in areas such as healthcare, education, legal analysis, and scientific research, where domain expertise exists but large‑scale machine learning infrastructure may not.

At the same time, \algname{} inherits the risks associated with adapting LLMs to sensitive domains, including the potential amplification of dataset biases or inappropriate behaviors if user‑provided data is flawed or unrepresentative. While \algname{} does not generate new data on its own in current state, its automation could accelerate deployment of domain‑adapted models without sufficient human review. To mitigate these risks, \algname{} is explicitly designed to surface configuration decisions, maintain modularity across stages, and enable human oversight through interpretable planning stages. We view \algname{} as a decision‑support system rather than a replacement for responsible model building evaluation.

Overall, \algname{} contributes to more efficient and transparent LLM adaptation workflows, with the potential to reduce environmental impact through compute savings while emphasizing the importance of careful downstream evaluation and governance in real‑world deployments.





\bibliography{arxiv/references}
\bibliographystyle{arxiv/icml2026}

\newpage
\appendix
\onecolumn

\section*{Appendix}\label{apndx}

The appendix of this paper is organized as follows. First, we introduce key preliminaries and notation in \autoref{apndx_prelims}. Next, we utilize this notation and present theoretical bounds for the \textit{AutoRefine }algorithm in \autoref{apndx_regret_bound}. We then describe our multi-agent system in detail, including the prompts used for individual agents, in \autoref{apndx_multiagent_system}. After that, we detail the datasets used in our study in \autoref{apndx_datasets}, followed by our study on best practices in \autoref{apndx_bp} to validate the relevance of the recommendations. Subsequently, we provide the detailed experimental setup in \autoref{apndx_expts} and the detailed numerical results in \autoref{apndx_res}. 
We also include a case study of AutoAdapt’s planning stage, highlighting how each agent contributes to the plan, in \autoref{apndx_casestudy}. Finally, we present a detailed description of the manual iterations performed for each baseline for the MedQA dataset for the sake of transparency in \autoref{apndx_sr}.

\section{Preliminaries}
\label{apndx_prelims}

\textbf{Multi-agent frameworks} have emerged as a powerful paradigm for complex decision-making tasks, enabling distributed reasoning through specialized agents that collaborate via a plethora of communication workflows. In the context of automated machine learning and domain adaptation, these frameworks allow decomposition of pipeline design into modular decisions, improving scalability and adaptability compared to monolithic approaches.

\subsection{Bayesian Optimisation}

Bayesian optimization (BO) \cite{frazier2018tutorialbayesianoptimization} is a sequential model-based approach for optimizing expensive black-box functions. It assumes an unknown objective function \( f(\mathbf{x}) \) and models it using a probabilistic surrogate, typically a Gaussian Process (GP) \cite{beckers2021introductiongaussianprocessmodels}. The GP defines a prior over functions and updates it with observed data \( \mathcal{D}_t = \{ (\mathbf{x}_i, y_i) \}_{i=1}^t \). The posterior predictive distribution at a new point \( \mathbf{x}_* \) is:

\[\mu(\mathbf{x}_*) = k(\mathbf{x}_*, X)^\top K^{-1} \mathbf{y},\]
\[\sigma^2(\mathbf{x}_*) = k(\mathbf{x}_*, \mathbf{x}_*) - k(\mathbf{x}_*, X)^\top K^{-1} k(X, \mathbf{x}_*) \]

where \( K \) is the kernel matrix with entries \( k(\mathbf{x}_i, \mathbf{x}_j) \).

The next query point is chosen by maximizing an \textit{acquisition function} \( \alpha(\mathbf{x}) \), which balances exploration and exploitation. Common choices include:
1) \textbf{Expected Improvement (EI):}
$
\alpha_{\text{EI}}(\mathbf{x}) = \mathbb{E}[\max(0, f(\mathbf{x}) - f^+)] = (\mu(\mathbf{x}) - f^+) \Phi(z) + \sigma(\mathbf{x}) \phi(z)
$
where \( z = \frac{\mu(\mathbf{x}) - f^+}{\sigma(\mathbf{x})} \), \( f^+ \) is the best observed value, and \( \Phi, \phi \) are the CDF and PDF of the standard normal distribution.
2) \textbf{Upper Confidence Bound (UCB):}
$
\alpha_{\text{UCB}}(\mathbf{x}) = \mu(\mathbf{x}) + \kappa \sigma(\mathbf{x})
$
where \( \kappa > 0 \) controls the exploration–exploitation trade-off.

\section{Related Works}\label{apndx_related_works}
Recent advances in LLM-powered AutoML frameworks can be broadly categorized into \emph{static} and \emph{dynamic} systems. Static recommendation approaches, such as AutoMMLab \cite{autommlab_yang2024autommlabautomaticallygeneratingdeployable} 
, select from a set of pre-prepared models and produce executable plans with limited adaptability to user-specific requirements. Dynamic systems, such as MLCopilot \cite{mlcopilot_zhang2024mlcopilotunleashingpowerlarge}, DS-Agent \cite{dsagent_guo2024dsagentautomateddatascience} and AutoML-Agent \cite{automlagent_trirat2025automlagent}, generate training plans and code
, offering better alignment with user tasks but often struggling with reliability and the complexity for long, multi-stage adaptation pipelines.

\textbf{Existing AutoML agents}: 
AutoGPT \cite{autoGPT_zhang2023automlgptautomaticmachinelearning} and HuggingGPT \cite{hugginggpt_shen2023hugginggptsolvingaitasks} connect LLMs with external tools and models to solve generic AI tasks. 
Agents such as \cite{dsagent_guo2024dsagentautomateddatascience, datainterpretor_hong2024datainterpreterllmagent, agenthpo_liu2025largelanguagemodelagent} focus on specific aspects of the ML training pipeline such as data preparation, analysis or HPO and do not consider the end to end pipeline for LLM adaptation.
\textbf{MLCopilot} \cite{mlcopilot_zhang2024mlcopilotunleashingpowerlarge} uses a two-stage process: an offline stage to build a knowledge pool from historical data and experience, and an online stage to retrieve demonstrations and generate solutions based on new task descriptions. 
\textbf{DS-Agent} proposes to use LLM agents along with case based reasoning (CBR) to come up with generic data science pipelines for a task. 
\textbf{AutoML-Agent} \cite{automlagent_trirat2025automlagent} is the closest work related to our method. It employs a multi-agent LLM framework using retrieved knowledge bases to 
automate generic ML training pipelines.
However, the method is not tailored to adaptation of LLMs which may not require finetuning (RAG, prompt optimization etc.) or go beyond finetuning (multi-stage RLHF, instruction tuning etc.). 
These methods fall short in reliably planning and executing complex, multi-stage LLM adaptation workflows. 

Different from the above works, AutoAdapt introduces a hybrid recommendation approach that combines the reliability of static planning (using ACG) with the adaptability of dynamic systems (multi-agent system guided by best practices KB). Unlike prior methods that typically support only single-stage fine-tuning, AutoAdapt provides a comprehensive framework for full domain adaptation pipelines, including multi-stage processes such as in-context prompting (eg. RAG), supervised fine-tuning (SFT), alignment (DPO) and so on.
AutoAdapt aims to \emph{reliably} and \emph{efficiently} generate executable, robust, and user-aligned \emph{LLM adaptation} plans, overcoming the limitations of existing AutoML systems.

\section{Theoretical Bounds on the Autorefine algorithm} \label{apndx_regret_bound}

This section details the cumulative regret of AutoRefine algorithm.

\paragraph{Setting.}
Let $\Theta$ be a compact hyperparameter domain equipped with a continuous--categorical relaxation, and let $\widehat{\Theta}\subset \Theta$ be the finite discretization used by the acquisition optimizer (e.g., multi-start LBFGS followed by discretization). 
Assume the black-box objective $f:\Theta\to\mathbb{R}$ is a sample from a zero-mean Gaussian process with a squared-exponential (SE/RBF) kernel 
\[
k_{\text{SE}}(x,x')
=\sigma_f^2\exp\!\left(-\frac{\|x-x'\|^2}{2\ell^2}\right),
\]
and that observations are noisy:
\(
y_t = f(\theta_t)+\varepsilon_t
\),
where $\varepsilon_t$ are $\sigma$-sub-Gaussian (independent). 
The AutoRefine posterior $\big(\mu_t,\sigma_t\big)$ is fit on the dataset $\mathcal{D}_t$ that combines ground-truth evaluations $\{(\theta_i,y_i)\}_{i=1}^{t}$ and LLM-derived trend pseudo-observations $\{(\tilde{\theta}_j,\tilde{y}_j)\}_{j=1}^{m_t}$, 
which are treated as \emph{unbiased, high-variance} measurements of $f$ (conditional on the GP), i.e.,
\[
\tilde{y}_j = f(\tilde{\theta}_j)+\tilde{\varepsilon}_j,
\quad 
\tilde{\varepsilon}_j\ \text{is } \tilde{\sigma}\text{-sub-Gaussian with } \tilde{\sigma}\gg \sigma,
\]
and thus incorporated with inflated noise variance so they do not over-confidently shrink $\sigma_t(\cdot)$.
At round $t$, AutoRefine algorithm selects
\[
\theta_t \in \arg\max_{\theta\in \widehat{\Theta}}\;
\mu_{t-1}(\theta)+\sqrt{\beta_t}\,\sigma_{t-1}(\theta),
\]
with a confidence schedule $\beta_t$ defined as \cite{iwazaki2025improvedregretboundsgaussian}. 
\begin{equation}\label{eq_beta_t}
\beta_t = 2 \ln \left( \frac{2 t^2 \pi^2}{3 \delta} \right)
+ 2d \ln \left( t^2 d b r \sqrt{ \ln \left( \frac{4 d a}{\delta} \right) } \right)    
\end{equation}
where $a,b$ are constants defined in lemma 1 of \cite{iwazaki2025improvedregretboundsgaussian}, $d$ is the dimension of the domain and $r$ is the length of the domain. Please refer \cite{iwazaki2025improvedregretboundsgaussian} for details. One subtle technical point to note is that in line 19 of Algorithm \ref{alg:pseudo_code}, the acquisition is done by an LLM that is prompted in-context using the optimal point from the GP-UCB algorithm. We prompt the LLM to decide between exploiting the optmized GP-UCB hyperparameters and exploring novel ones in the search space. As the algorithm progresses, there would be fewer novel points to explore and the algorithm would rely more on the GP-UCB optimized point. Thus this acquisition is equivalent to the epsilon-greedy policy with decay \cite{sutton2018reinforcement}, which asymptotically reduces to the GP-UCB acquisition itself. We need the LLM sampler as the acquisition as otherwise optimizing the function manifold may result in the sampler collapsing to the same optimal point (or one in the vicinity). The LLM sampler acts as a guardrail to prevent such trivial sampling using its domain knowledge.
Let the cumulative regret be $R_T:=\sum_{t=1}^T\big(f(\theta^\star)-f(\theta_t)\big)$, where $\theta^\star\in\arg\max_{\theta\in\Theta} f(\theta)$.

In this part of the proof, we bound the cumulative regret $R_T$ over time $T$. Here, the time $T$ represents the number of iterations over which the points in the domain (parameter space) are queried to obtain the function value (results). Note we use an LLM as a surrogate model to obtain intermediate results and do not perform evaluations by fitting the network over the sampled parameters and evaluating each iteration. The complexity of an LLM call is negligible compared to actually finetuning an LLM over the sampled parameters and evaluating to obtain the results at that point. As such we only consider the "actual function evaluations` for the complexity. Define $F_a$ to be the augmentation factor i.e. the fraction of the number of data points queried (by actual function evaluation and LLM surrogate) to the number of actual model evaluation.

We make use of the below result from \cite{srinivas2010gaussian} (provides bounds on the function value w.r.t. UCB):

\begin{lemma}[Confidence bounds \cite{srinivas2010gaussian}]\label{srinivas_confidence_bounds}
With probability $1-\delta$ ($\delta \in (0,1)$), For any $t \in \mathbb{N}_{+}$, the function value at the queried point $x_t$ satisfies
\[
\mu(x_t; X_{t-1}, y_{t-1}) - \beta_t^{1/2} \, \sigma(x_t; X_{t-1}) \leq f(x_t).
\]
Furthermore, for any $t \in \mathbb{N}_{+}$, any function value $f(x)$ on $\mathcal{X}_t$ satisfies
\[
f(x) \leq \mu(x; X_{t-1}, y_{t-1}) + \beta_t^{1/2} \, \sigma(x; X_{t-1}).
\]
\end{lemma}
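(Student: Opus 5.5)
The plan is to follow the standard Bayesian GP-UCB argument of \cite{srinivas2010gaussian}, in the discretized form of \cite{iwazaki2025improvedregretboundsgaussian}. There are three ingredients: a one-point Gaussian tail bound for the posterior, a union bound over rounds $t$, and a union bound over the finite set $\mathcal{X}_t$ (the discretization at round $t$, e.g.\ $\widehat{\Theta}$ refined with $t$). The failure probability $\delta$ is split into two halves, one for each of the two inequalities in Lemma~\ref{srinivas_confidence_bounds}. The two logarithmic terms in \eqref{eq_beta_t} are meant to pay for these two union bounds: the $2\ln(2t^2\pi^2/(3\delta))$ term pays for the union over rounds, and the $2d\ln(\cdot)$ term pays for the cardinality of $\mathcal{X}_t$.

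\textbf{Step 1 (posterior Gaussianity).} Let $\mathcal{F}_{t-1}$ be the history up to round $t-1$: the true evaluations and the LLM pseudo-observations $\tilde y_j$, the latter entered with inflated noise variance $\tilde\sigma^2$. I would argue that, conditional on $\mathcal{F}_{t-1}$, $f(x)\sim\mathcal{N}\bigl(\mu_{t-1}(x),\sigma_{t-1}^2(x)\bigr)$ for every fixed $x$. This is the point where the modelling assumptions are used. The pseudo-observations must be treated exactly as extra noisy GP measurements with known, heteroscedastic noise. For exact conjugacy the noise should be Gaussian; with merely sub-Gaussian noise one instead passes to the frequentist RKHS-style confidence bound, at the cost of constants in $\beta_t$. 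The augmentation only adds rows to the kernel matrix, so the formulas for $\mu_{t-1}$ and $\sigma_{t-1}$ from Appendix~\ref{apndx_prelims} apply verbatim, with a diagonal noise matrix.

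\textbf{Step 2 (lower bound at the queried point).} The query $x_t$ is $\mathcal{F}_{t-1}$-measurable, since it comes from GP-UCB or from the decaying-$\epsilon$ LLM sampler run on past data. Hence, for any $c>0$, the Gaussian tail gives
\[
\Pr\bigl(f(x_t)<\mu_{t-1}(x_t)-c\,\sigma_{t-1}(x_t)\,\big|\,\mathcal{F}_{t-1}\bigr)\le \tfrac12 e^{-c^2/2}.
\]
Set $c=\beta_t^{1/2}$ and take a union bound over $t\ge1$, using $\sum_t 1/t^2=\pi^2/6$. By the first term of $\beta_t$, $e^{-\beta_t/2}\le 3\delta/(2\pi^2t^2)$, so the total failure probability is at most $\delta/2$.

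\textbf{Step 3 (upper bound on all of $\mathcal{X}_t$).} I would apply the same tail bound to every $x\in\mathcal{X}_t$ and take a union bound over $|\mathcal{X}_t|$ and over $t$. The grid $\mathcal{X}_t$ is chosen with $\bigl(t^2 d b r\sqrt{\ln(4da/\delta)}\bigr)^d$ points, so that the discretization error of the RBF sample path, controlled by the derivative tail constants $a,b$ of \cite{iwazaki2025improvedregretboundsgaussian}, stays $O(1/t^2)$. The second term of $\beta_t$ is then exactly $2\ln|\mathcal{X}_t|$, and it cancels the cardinality factor. This also leaves failure probability at most $\delta/2$. Adding the two halves gives the $1-\delta$ statement.

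\textbf{Main obstacle.} I expect the main obstacle to be this bookkeeping in Step~3, namely matching the grid size to the precise form of $\beta_t$ and splitting $\delta$ consistently between the two union bounds; the other steps are routine. A secondary issue is justifying Step~1 for the LLM pseudo-observations. I would handle it by stating explicitly that they enter the likelihood as independent Gaussian measurements with variance $\tilde\sigma^2$. Under that reading the conditional Gaussianity, and hence the lemma, is unchanged; the pseudo-observations only affect $\sigma_{t-1}$, and they shrink it less than a true evaluation would.
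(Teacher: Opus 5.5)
Your proposal is correct and follows essentially the same route as the paper. The paper does not reprove this lemma but imports it from \cite{srinivas2010gaussian}, summarizing it in Claim~I as concentration plus union bounds with the stated $\beta_t$; your conditional Gaussian tail bound, union over rounds via $\sum_t t^{-2}=\pi^2/6$, and union over a grid of $(t^2dbr\sqrt{\ln(4da/\delta)})^d$ points are exactly the queried-point and discretization lemmas behind Theorem~2 of that reference. One aside is wrong and should be dropped: under merely sub-Gaussian noise the RKHS-style bound is not a change of constants, since it requires $f$ in the RKHS (GP sample paths almost surely are not) and its $\beta_t$ grows with the information gain, so your explicit Gaussian-noise assumption for both real and pseudo-observations is genuinely needed.
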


Using the above result we bound the regret for Autorefine as below:

\paragraph{Claim I.}
For the SE/RBF kernel, the cumulative regret of the Autorefine algorithm (\ref{alg:pseudo_code}), with probability at least $1-\delta$ (for any $\delta\in(0,1)$) is,
\[
R_{T_a} \;=\; \mathcal{O}\!\left(\sqrt{\frac{T}{F_a}\,\ln^2 \left(\frac{T}{F_a}\right)}\right),
\]

\begin{proof}
The proof follows the GP--UCB template \cite{iwazaki2025improvedregretboundsgaussian} with two ingredients:
(i) high-probability confidence envelopes over the posterior; and
(ii) summability of posterior uncertainties at the points chosen by UCB.

\textbf{(1) Confidence Bound:}
By standard sub-Gaussian concentration and union bounds on the finite set $\widehat{\Theta}$ with the above $\beta_t$, we have w.h.p. (probability $\ge 1-\delta$) for all $t$ and all $\theta\in\widehat{\Theta}$ (c.f. lemma 1 in \citep{srinivas2010gaussian} and \ref{srinivas_confidence_bounds}):

\begin{equation}\label{f_lcb}
f(\theta)-\mu_{t-1}(\theta) \;\le\; \sqrt{\beta_t}\,\sigma_{t-1}(\theta).
\end{equation}

\begin{equation}\label{f_ucb}
f(\theta)-\mu_{t-1}(\theta) \;\ge\; -\sqrt{\beta_t}\,\sigma_{t-1}(\theta).
\end{equation}

The inclusion of unbiased pseudo-observations with inflated noise variance preserves the validity of ~\eqref{f_lcb} and ~\eqref{f_ucb}, as they are treated as additional high-variance measurements; such points only affect constants in the posterior covariance and thus in $\sigma_{t-1}(\cdot)$.

\textbf{(2) Instantaneous regret under UCB:}
Let $\Delta_t:=f(\theta^\star)-f(\theta_t)$, where $f$ is the actual evaluation (with possible finetuning using the parameters) of the network using selected parameters and the given data points.

Since the acquisition function ($\arg \max_{\theta \in \Theta} f(\theta)$) at time $t$ selected $\theta_t$, we have 
\begin{align*}
    \mu_{t-1}(\theta_t)+\sqrt{\beta_t}\sigma_{t-1}(\theta_t) &\ge \mu_{t-1}(\theta^\star)+\sqrt{\beta_t}\sigma_{t-1}(\theta^\star) \\
    &\ge \mu_{t-1}(\theta^\star)-\sqrt{\beta_t}\sigma_{t-1}(\theta^\star) \\
    \therefore \sqrt{\beta_t}\sigma_{t-1}(\theta_t)&\ge  \mu_{t-1}(\theta^\star) -\sqrt{\beta_t}\sigma_{t-1}(\theta^\star) \\
    &- \mu_{t-1}(\theta) \\
    \therefore f(\theta^\star) - \mu_{t-1}(\theta) &\le \sqrt{\beta_t}\sigma_{t-1}(\theta_t) \\
    &\text{, \{...From UCB eq. \}}\\
    f(\theta^\star) - (\mu_{t-1}(\theta) &- \sqrt{\beta_t}\sigma_{t-1}(\theta_t)) \le 2\sqrt{\beta_t}\sigma_{t-1}(\theta_t) \\
    f(\theta^\star) - f(\theta) &\le 2\sqrt{\beta_t}\sigma_{t-1}(\theta_t) \\
    & \text{, \{...From LCB eq. \}} \\
\end{align*}

Thus from above analysis we have
\begin{equation}
\label{eq:inst}
\Delta_t \;\le\; 2\sqrt{\beta_t}\,\sigma_{t-1}(\theta_t).
\end{equation}

\textbf{(3) Summation and improved SE bound.}
Since we are only interested in the regret at every iteration where the function is actually evaluated at the query point and the GP updated, we discard the iterations with LLM surrogate queries. Summing~\eqref{eq:inst} over $t=1,\dots,\frac{T}{F_a}$ gives
\begin{align*}
    R_{T_a} &\;\le\; 2\sum_{t=1}^{\frac{T}{F_a}} \sqrt{\beta_t}\,\sigma_{t-1}(\theta_t) \\
\end{align*}
For the SE/RBF kernel, a refined analysis of the \emph{realized} input sequence under UCB shows that the cumulative uncertainty weighted by $\sqrt{\beta_t}$ grows as $\mathcal{O}(\sqrt{T\,\ln^2 T})$ (c.f. \cite{iwazaki2025improvedregretboundsgaussian}). 
This yields
\begin{equation}\label{cumulative_regret}
    R_{T_a} \;=\; \mathcal{O}\!\left(\sqrt{\frac{T}{F_a}\,\ln^2 \left(\frac{T}{F_a}\right)}\right)
\end{equation}
with high probability. 

\textbf{Effect of pseudo-observations.}
If we consider the LLM pseudo observations in the regret, the bound comes out to be the following (same as in \cite{iwazaki2025improvedregretboundsgaussian})
\begin{equation}\label{eq_regret_T} 
    R_T = \mathcal{O}\!\left(\sqrt{T\,\ln^2 T}\right)
\end{equation}
Because pseudo-observations are incorporated as unbiased high-variance points, they do not invalidate ~\eqref{f_ucb}, ~\eqref{f_lcb}; they only alter constants in the posterior computation and hence in the final constant multiplying $\sqrt{T\,\ln^2 T}$. 
Therefore the rate is unchanged, concluding the proof of Part I bounding the regret from above.
\end{proof}

For the part II on the lower bound for the number of iterations, consider the \emph{simple regret} after $T$ rounds to be
\[
r_T := f(\theta^\star) - \max_{1 \le t \le T} f(\theta_t),
\]
where $\theta^\star$ is the global maximiser of $f$. 
From equation \ref{eq_regret_T} we obtain the average simple regret (over $T$ iterations) as 
\[
r_T = \tilde{\mathcal{O}}\!\Bigg(\sqrt{\frac{\ln^2 T}{T}}\Bigg).
\]

\paragraph{Claim II.}
To guarantee $r_T \le \varepsilon$ for some $\varepsilon > 0$, the number of actual function evaluation iterations $T_a$ must satisfy
\[
T_a = \Omega\!\Bigg(\frac{\ln^2(1/\varepsilon)}{F_a \varepsilon^2}\Bigg).
\]

\begin{proof}
From the regret rate,
\[
r_T \le c \sqrt{\frac{\ln^2 T}{T}}
\]
for some constant $c > 0$. To enforce $r_T \le \varepsilon$, we require
\[
c \sqrt{\frac{\ln^2 T}{T}} \le \varepsilon
\quad \Longrightarrow \quad
\sqrt{\frac{\ln^2 T}{T}} \le \frac{\varepsilon}{c}.
\]
Squaring both sides gives
\[
\frac{\ln^2 T}{T} \le \frac{\varepsilon^2}{c^2}
\quad \Longrightarrow \quad
T \ge \frac{c^2 \ln^2 T}{\varepsilon^2}.
\]
This is an implicit inequality because $T$ appears inside $\ln^2 T$. For large $T$, $\ln T$ grows slowly compared to $T$, so we approximate $\ln T \approx \ln(1/\varepsilon)$ (since $T$ will scale like $1/\varepsilon^2$ in lower bound). Substituting yields
\begin{align*}
    T &= \Omega\!\Bigg(\frac{\ln^2(1/\varepsilon)}{\varepsilon^2}\Bigg) \\
    F_a T_a &= \Omega\!\Bigg(\frac{\ln^2(1/\varepsilon)}{\varepsilon^2}\Bigg) \\
    T_a &= \Omega\!\Bigg(\frac{\ln^2(1/\varepsilon)}{F_a\varepsilon^2}\Bigg)
\end{align*}
which establishes the lower bound up to polylogarithmic factors.
\end{proof}

\begin{remark}[On LLM guidance and constants]
If trend points are slightly misspecified with bounded bias $b(\theta)$ but are re-centered (or modeled with hyperparameters that absorb the bias), the confidence envelopes remain valid up to a constant slack, which is absorbed into the choice of $\beta_t$; the asymptotic $\sqrt{T\,\ln^2 T}$ rate is preserved while constants improve when LLM guidance focuses the search in informative regions. 
If a fixed bias $B>0$ were injected without correction, an additional linear term $\mathcal{O}(BT)$ would appear from $\Delta_t\le 2\sqrt{\beta_t}\sigma_{t-1}(\theta_t)+2B$; thus, bias calibration or high-variance downweighting is essential to maintain sublinear regret.
\end{remark}

\begin{remark}[On iteration complexity]
The iteration complexity grows quadratically with $1/\varepsilon$ and only polylogarithmically with the domain size (via $\ln^2(1/\varepsilon)$). For example, to achieve $\varepsilon = 0.05$ (for e.g. within 5\% of the optimal accuracy) with augmentation factor $F_a=100$, we need
\begin{align*}
    T_{\min} &\approx \mathcal{O}\left(\frac{\ln^2(20)}{100\times(0.05)^2}\right) \\
    &\approx \mathcal{O}\left(\frac{(3)^2}{0.25}\right) \\&
    \approx \mathcal{O}\left(10\right) \text{ iterations.}
\end{align*}
Note, our algorithm does not change the asymptotic complexity compared to the GP-UCB algorithm but only changes the constant factors in the complexity. However, the gains are practically significant as our algorithm only needs few tens of iterations vs perhaps thousands to converge to the optimal point, which is desirable considering the costly training of LLMs. 
\end{remark}
\section{Adaptation Configuration Graph}\label{apndx_acg}
In this section, we describe in detail the design of the ACG utilized to run the multi agent planning pipeline.
Figure \ref{fig:apndx_acg} illustrates the Adaptation Configuration Graph (ACG) used in our framework to structure and constrain the space of large language model (LLM) domain adaptation workflows. The ACG is a directed acyclic graph that explicitly encodes high‑level adaptation strategies, their dependencies, and the corresponding parameterized training configurations, enabling systematic and reliable planning of multi‑stage adaptation pipelines.

At the highest level, the ACG branches from a root adapt node into alternative adaptation strategies, including retrieval‑augmented generation (RAG), supervised fine‑tuning (SFT), and preference‑based alignment (DPO). Each strategy node captures a semantically distinct mode of adaptation and is connected via requires edges to the concrete configuration blocks needed to instantiate that strategy. This design allows the planner to reason explicitly over which adaptation techniques are applicable under given data, resource, and user constraints, rather than treating the pipeline as an unstructured sequence of hyperparameters.

For RAG, the graph specifies the required retrieval configuration, including the embedding model, retriever parameters (e.g., top‑k), and the base model used for generation. This separation allows retrieval‑specific choices to be optimized independently of downstream training decisions.
For SFT and DPO, the ACG links the corresponding strategy nodes to detailed training configurations that define the base model, optimization hyperparameters (learning rate, batch size, gradient accumulation), logging and evaluation intervals, and training duration. By explicitly modeling these configurations as graph nodes, the ACG exposes the hierarchical structure of LLM adaptation decisions.

The graph further captures parameter‑efficient fine‑tuning (PEFT) as an intermediate abstraction. Both SFT and DPO may route through a PEFT node, which in turn branches into concrete techniques such as LoRA and prefix tuning. Each PEFT variant is associated with its own parameter block, specifying method‑specific hyperparameters (e.g., LoRA rank and scaling factors, prefix length) as well as shared efficiency settings such as quantization type. This explicit factoring enables the planner to reuse shared structure while still exploring method‑specific configurations.

Overall, the ACG serves two key purposes. First, it narrows the effective search space by enforcing valid structural dependencies between adaptation stages and parameters, thereby preventing ill‑formed or non‑executable pipelines. Second, it provides a transparent and interpretable representation of the adaptation workflow, which can be traversed and refined by downstream optimization components. By grounding pipeline construction in the ACG, our framework achieves reliable execution while retaining flexibility to support diverse adaptation strategies and training regimes.

\begin{figure}
    \centering
    \includegraphics[width=0.95\linewidth]{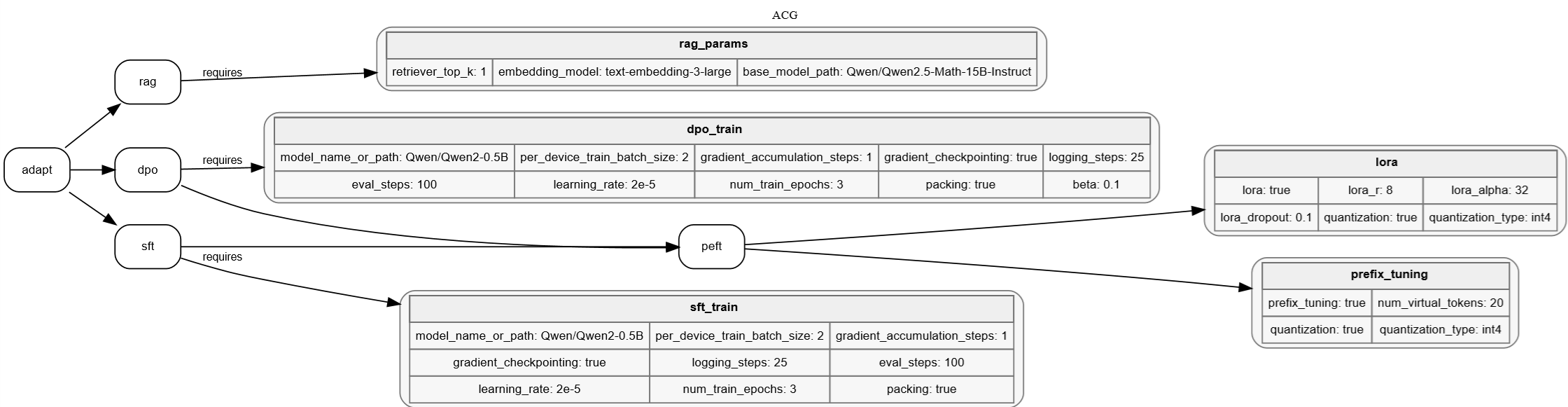}
    \caption{Adaptation Configuration Graph (ACG) for the LLM adaptation pipeline with example parameter values at each parameter node. }
    \label{fig:apndx_acg}
\end{figure}

\section{Multi Agent System}\label{apndx_multiagent_system}

In this section we detail the agents used in the multi-agent pipeline and the corresponding prompts. All agents use OpenAI's GPT-4.1-mini as the underlying model.
Below are the agents used in the system:

\subsection{Proposal Agents}

\textbf{Best Practice Agent}
 consolidates empirical insights from prior training pipelines, model cards, and benchmark results. It provides principled priors over adaptation strategies, hyperparameter ranges, and architectural choices, enabling the system to initialize planning in a region of the configuration space that historically yields strong performance.
 
\begin{promptBox}{Best Practice Agent Recommendation Prompt} 
\begin{lstlisting}
You are an expert in machine learning. We are working in a multi-agent framework for finding the best LLM model/
config for a user dataset/task. An orchestrating agent would ask certain questions from you. In order to provide 
the response you need to read through the historic message content and the current ask from the 
orchestrator. You will be given user task, the orchestrating LLM ask and also the historic message content. 
Please provide the answer to the orchestrator request adhering to the user request in the history  in order 
to solve the broader user task.

Following are the historic messages: {chat_history}
Following is the user task: {task_description}
Following is the Orchestrator ask: {orchestrator_ask} 

Below is the context from the retrieved database of the datasets/tasks and models best practices based on the 
user query:
{best_practice}

### Last round of proposal from the solver agents
{last_round_proposals}

### Last round of critique from the critique agents
{last_round_critique}

We need to get back to the user with a concrete recommendation and a reasoning.
Please answer with reason the following Orchestrator ask keeping the history and best practices in context:
{orchestrator_ask}
\end{lstlisting}
\end{promptBox}

\textbf{Research Agent}
    processed the literature, technical reports, and model documentation (for eg. gathered by the retrieval agent) to identify emerging techniques or underexplored alternatives. This ensures that the system incorporates innovations beyond those reflected in curated best‑practice corpora.

\begin{promptBox}{Research Agent Prompt} 
\begin{lstlisting}
### Task Description
{task_description}

### Retrieved Knowledge
{retrieved_knowledge}

### Last round of proposal from the solver agents
{last_round_proposals}

### Last round of critique from the critique agents
{last_round_critique}

### Instruction for ResearchAgent
You are a research agent with deep domain knowledge in machine learning training pipelines. \
You can reason using both the retrieved knowledge and your own LLM capabilities.

Provide a concise and structured response that includes:
1. Justification of the recommended approach using both the retrieved knowledge and logical inference.
2. A final recommendation with key configuration or design suggestions.
\end{lstlisting}
\end{promptBox}

\textbf{Knowledge Retriever Agent} dynamically retrieves and aggregates knowledge from multiple web sources (arXiv papers, Hugging Face model/data cards, and web search relevant to a user's task. It queries these sources, then summarizes the collected information into actionable insights using an LLM. This agent helps provide up-to-date, real-world context about state-of-the-art models, techniques, and best practices to support downstream decision-making in the multi-agent framework.

\begin{promptBox}{Web Search Query Generation Prompt} 
\begin{lstlisting}
You are tasked with generating web search queries based on finetuning LLM for the given user problem. Give me a 
specific query for web search focusing on the LLM model, configuration pipeline and technique that should 
be used for finetuning the LLM model for the specified user task and requiement. Please give me a single 
sentence within 10 words. The answer should not exceed 10 words.

The user's requirements are summarized as follows:
{user_requirement_summary}
\end{lstlisting}
\end{promptBox}

\begin{promptBox}{arXiv Summary Prompt} 
\begin{lstlisting}
I searched the arXiv papers using the keywords: {task_kw} and {domain_kw}. Here is the result:
=====================
{context}
=====================

Please summarize the given pieces of arXiv papers into a single paragraph of useful knowledge and insights. We 
aim to use your summary to address the following user's requirements.    
# User's Requirements
{user_requirement_summary}
\end{lstlisting}
\end{promptBox}

\begin{promptBox}{Hugging Face Model Cards Summary Prompt} 
\begin{lstlisting}
I searched the huggingface model cards using the task domain: {user_requirements["domain"]}. Here is the result:
=====================
{all_model_card_details}
=====================

Please summarize the given pieces of huggingface model cards into a single paragraph of useful knowledge and 
insights. We aim to use your summary to address the following user's requirements.    
# User's Requirements
{user_requirement_summary}
\end{lstlisting}
\end{promptBox}

\begin{promptBox}{Hugging Face Data Cards Summary Prompt} 
\begin{lstlisting}
I searched the huggingface data cards using the task domain: {user_requirements["domain"]}. Here is the result:
=====================
{all_data_card_details}
=====================

Please summarize the given pieces of huggingface data cards into a single paragraph of useful knowledge and 
insights. We aim to use your summary to address the following user's requirements.    
# User's Requirements
{user_requirement_summary}
\end{lstlisting}
\end{promptBox}

\begin{promptBox}{Final Knowledge Aggregation Prompt} 
\begin{lstlisting}
Please extract and summarize the following group of contents collected from different online sources into a chunk of insightful knowledge. Please format your answer as a list of suggestions. I will use them to address the user's requirements in machine learning tasks.

# Source: Web search
{search_summary}
=====================

# Source: arXiv Papers
{arxiv_summary}
=====================

# Source: Huggingface model cards
{hf_model_card_summary}
=====================

# Source: Huggingface data cards
{hf_data_card_summary}
=====================

The user's requirements are summarized as follows.
{user_requirement_summary}
\end{lstlisting}
\end{promptBox}

\subsection{Critic Agents}

\textbf{User Preference Agent}
encodes user‑specified constraints such as compute budgets, training preferences, method restrictions, and domain‑specific requirements and enforces them throughout the adaptation process. It operationalizes user intent by filtering infeasible configurations and steering the system toward personalized solutions.

\begin{promptBox}{User Preference Agent Prompt} 
\begin{lstlisting}
You are a strict reviewer for AI training proposals. A user has provided their preferences:
{user_preferences}

Below are proposals from other agents:
{proposals}

Your job is to:
- Identify where each proposal violates user preferences (if any)
- Recommend the most aligned proposal (if any)

Output a structured review in markdown with per-agent critique.

Note that the user preferences describe the global training pipeline and may not be directly relevant to the 
current proposal. 
If you determine that the preferences do not apply to this specific node, you may state that no critique is 
needed.
\end{lstlisting}
\end{promptBox}

\textbf{Data Agent}
analyzes the statistics of the provided dataset, including size, token distributions, domain characteristics, entropy measures, label properties etc. It transforms these signals into actionable quantitative summaries of the dataset. This agent also critiques the proposal based on data constraints.

\begin{promptBox}{Data Agent Prompt} 
\begin{lstlisting}
You are a strict reviewer for AI training proposals. The user has provided the following information:
{data_statistics}

Below are proposals from other agents:
{proposals}

Your job is to:
- Identify where each proposal violates user data assumptions, such as distribution mismatch, inappropriate 
model choice, incorrect input formatting, or unrealistic training configurations.
- Recommend the most aligned proposal (if any)

Output a structured review in markdown with:
1. Per-agent critique
2. Final recommendation
\end{lstlisting}
\end{promptBox}

\subsection{Aggregator Agent}
\textbf{Aggregator Agent}
 integrates outputs from all upstream agents into a unified context representation. It resolves conflicts, normalizes heterogeneous signals, and constructs the information state consumed by the multi‑agent planner. This aggregation step ensures coherent and consistent decision‑making across the system.

\begin{promptBox}{Aggregator Agent Prompt} 
\begin{lstlisting}
### Proposals from the Solver Agents
{proposals}

### Critiques from the Critique Agents
{critiques}

### Instructions for Aggregator Agent
You are an expert agent with deep domain knowledge in machine learning training pipelines. Your role is to 
aggregate the proposals from solver agents using your own reasoning capabilities. No external or retrieved 
knowledge is provided.
Always prioritize preferences or constraints specified by the data agent or user above all other considerations.

Provide a concise and structured response that includes:
1. Justification for the recommended approach based on logical reasoning and synthesis of the given proposals.
2. A final recommendation with key configuration or design suggestions aligned with user or data agent 
preferences.
\end{lstlisting}
\end{promptBox}

\section{Datasets}\label{apndx_datasets}
We conduct experiments on diverse domains to demonstrate the generalizability of \algname{}.
The domains are as follows:
1) Mathematical Reasoning
2) Medical Question Answering
3) Legal 
4) Root Cause Analysis of Cloud Incidents (Proprietary Dataset)
5) Natural Language
6) Ecommerce
7) Programming
8) Generic reasoning

Specifically we select the following benchmark and real world datasets:

(1) \textbf{MATH} \cite{hendrycksmath2021} dataset by UC Berkeley focused on mathematical reasoning, covering algebra, calculus, and advanced problem-solving. 

(2) \textbf{MedQA} \cite{medqa_jin2020disease} is a medical question answering dataset requiring domain-specific knowledge and reasoning.

(3) \textbf{CaseHold} \cite{casehold_zheng2021doespretraininghelpassessing} is a legal reasoning dataset for predicting case outcomes, testing adaptation to specialized legal language

(4) \textbf{PEM} is a customer dataset containing pre-engineering level math questions and answers.

(5) \textbf{RCA} is a proprietary dataset containing the root cause analysis of cloud incidents in a large scale cloud service provider.

(6) \textbf{ARC} \cite{allenai_arc} consists of multiple-choice science questions spanning elementary to middle-school curricula, designed to assess commonsense reasoning and knowledge integration across diverse scientific domains.

(7) \textbf{MBPP} \cite{mbpp_austin2021programsynthesislargelanguage} is a code generation benchmark composed of python problems

(8) \textbf{Ecomm} \cite{automlagent_trirat2025automlagent} is a task of classification of e-commerce products into relevant categories.

(9) \textbf{Entail} \cite{automlagent_trirat2025automlagent} is a task of natural language inference given premise and hypothesis.

(10) \textbf{When2Call} \cite{when2call-ross-etal-2025} is designed to deduce tool invocation ability.

\section{Best-Practices Knowledge Base}
\label{apndx_bp}
In this section we explain the construction of the  best practices KB and provide studies on the best practices retrieved and the consistency and relevance of the recommendations provided using these.

\subsection{KB Construction}

To ground the adaptation planning process in empirical evidence and widely adopted practices, we construct a Best‑Practices Knowledge Base (KB) that aggregates and structures information from diverse public sources on large language model (LLM) training, adaptation, and evaluation. The primary goal of the KB is to expose reliable priors over model choices, adaptation strategies, and hyperparameter configurations, which can be leveraged by the multi‑agent planner to generate executable and effective adaptation pipelines.

\textbf{Knowledge Sources}
We curate the KB from multiple complementary sources that reflect both community standards and state‑of‑the‑art empirical results. Specifically, we collect information offline from: (i) model cards and dataset cards hosted on Hugging Face, which provide structured descriptions of training regimes, datasets, parameter choices, evaluation metrics, and reported performance; (ii) fine‑tuning code repositories and notebooks from platforms such as GitHub and Unsloth, which expose concrete implementation details and hyperparameter settings used in practice; and (iii) leaderboards and evaluation summaries from Papers with Code, which link datasets to top‑performing models and report comparative metrics. These sources provide complementary perspectives, ranging from high‑level summaries to low‑level configuration details.

\textbf{Knowledge Extraction and Structuring}
Raw information from the above sources is heterogeneous and largely unstructured. To address this, we employ an LLM‑based extraction pipeline to convert documents and code into structured representations. For each model or dataset entry, we extract salient attributes such as dataset domain, size, task type, input–output characteristics, model family, parameter count, training pipeline (e.g., SFT, PEFT, DPO), and reported hyperparameter ranges. In the case of code repositories, we further extract explicit parameter instantiations (e.g., learning rate, batch size, number of epochs, PEFT‑specific settings) from training scripts and notebooks. All extracted items are normalized and stored as structured records. We manually validate a handful of the extracted data, however due to the scale of the data manual verification of the entire KB is left to future works.

\begin{promptBox}{Example Model Card derived during the offline KB construction phase} 
\begin{lstlisting}
 {      "base_model": "unsloth/Llama-3.2-3B-Instruct",
        "training_technique": "sft",
        "training_datasets": {
                "mlabonne/FineTome-100k": 100000
        },
        "dataset_structure": "text",
        "instruct": 1,
        "task": [],
        "evaluation": {},
        "quantization": 1,
        "peft": 1,
        "lr": 0.0002,
        "epochs": null,
        "batch_size": 2,
        "gradient_accumulation_steps": 4,
        "optimizer": "adamw_8bit",
        "scheduler": "linear",
        "weight_decay": 0.01,
        "mixed_precision": "fp16",
        "quantization_method": null,
        "lora_r": 16,
        "lora_alpha": 16,
        "lora_dropout": 0,
        "target_modules": [
                "q_proj",
                "k_proj",
                "v_proj",
                "o_proj",
                "gate_proj",
                "up_proj",
                "down_proj"
        ],
        "dpo_beta": null,
        "training_hardware": null,
        "library": [
                "unsloth",
                "datasets",
                "transformers",
                "trl",
                "peft"
        ],
        "domain": "general"}
\end{lstlisting}
\end{promptBox}

\textbf{Indexing and Retrieval}
Each record in the KB is indexed using a combination of semantic descriptors (task type, domain, dataset characteristics) and model attributes (architecture family, training method, parameter scale). At planning time, the user task description and dataset metadata are used to query the KB and retrieve a set of relevant model–dataset–configuration tuples. This retrieval step provides the multi‑agent system with concrete empirical precedents drawn from similar tasks and domains, serving as an evidence‑based starting point for decision making.

\textbf{Leveraging Best Practices for Adaptation Planning}
The KB is directly consumed by the Best‑Practice Agent during the planning stage. Given a candidate adaptation decision (e.g., selecting PEFT over full fine‑tuning), the agent uses retrieved KB entries to justify choices and to propose bounded hyperparameter ranges rather than arbitrary or overly broad search spaces. For example, learning rate ranges, batch size options, or training epoch counts are derived from configurations that have been demonstrated to work on comparable tasks and model scales. These KB‑derived ranges are subsequently refined during the AutoRefine stage based on user data and observed performance. Figure \ref{best_practice_retrieval} shows the retrieval schema by best practice agent from the KB.

\begin{figure}
    \centering
    \includegraphics[width=0.8\linewidth]{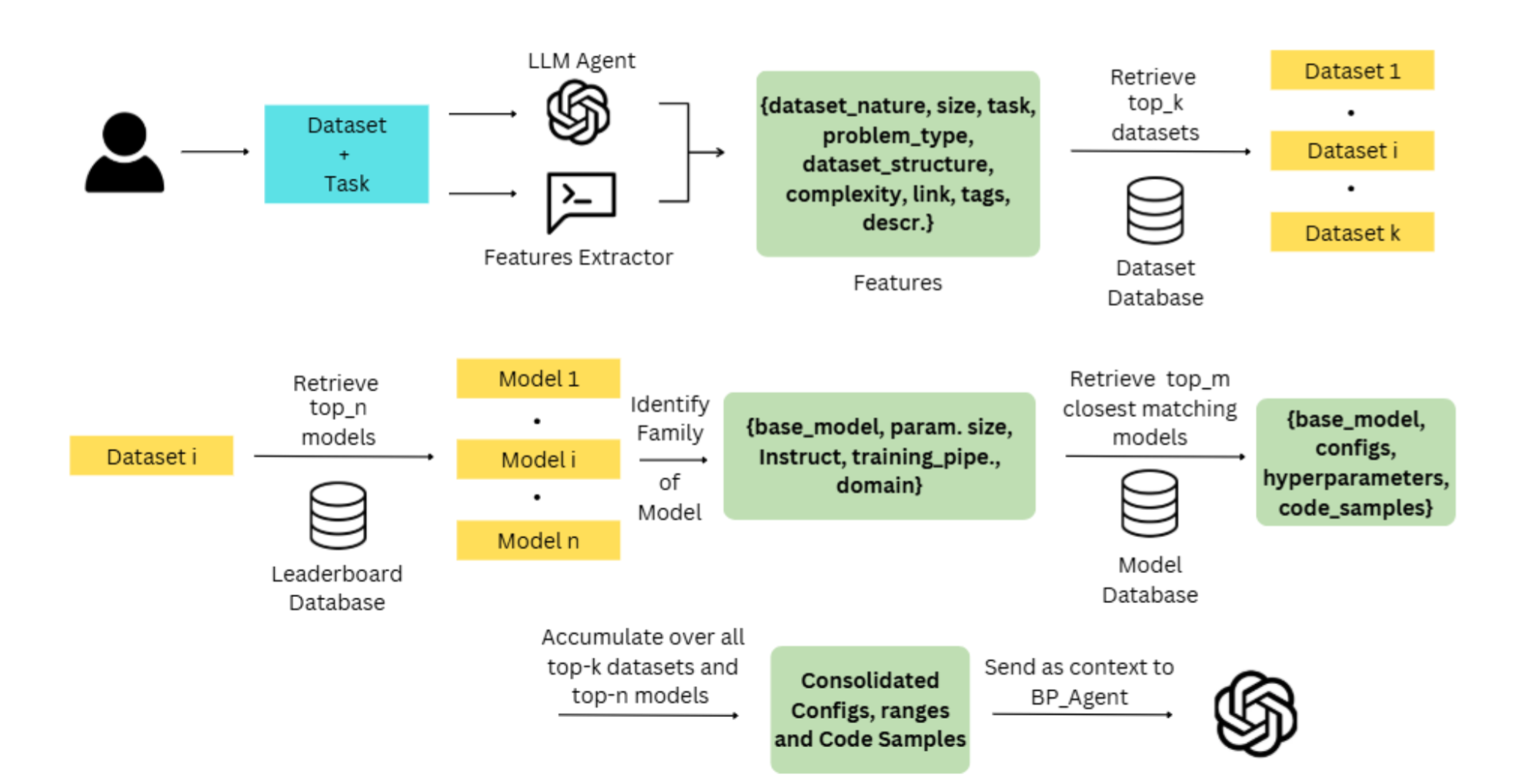}
    \caption{Best practices retrieval pipeline from KB.}
    \label{fig:best_practice_retrieval}
\end{figure}

\textbf{Scope and Limitations}
The KB is not intended to be exhaustive. Instead, it functions as a practical, curated prior over adaptation decisions for the domains and tasks considered in our experiments. We additionally support dynamic, query‑time retrieval to incorporate recent or task‑specific information beyond the offline corpus when needed. This design strikes a balance between coverage, accuracy, and computational efficiency.
Overall, the Best‑Practices Knowledge Base enables \algname{} to ground its recommendations in established empirical evidence in LLM adaptation, reducing reliance on ad‑hoc heuristics or unconstrained exploration, while maintaining flexibility to adapt to user‑specific data and requirements.

\subsection{Dispersion Index of Recommended Parameter Ranges}

To quantify the diversity of hyperparameter range recommendations produced by our multi-agent framework, we introduce the \textit{Dispersion Index (DI)}. The DI captures the degree of variability across a set of configurations by jointly considering categorical and numeric dimensions. Unlike pairwise similarity measures that focus on individual comparisons, our metric summarizes the overall spread of the recommendation set, providing a single interpretable score.

Formally, let $\mathcal{C} = \{c_1, c_2, \dots, c_N\}$ denote the set of recommended configurations. Each configuration comprises categorical parameters (e.g., activation functions, optimizer choices) and numeric parameters (e.g., learning rate, batch size). For any pair $(c_i, c_j)$, we compute:
\begin{equation}
\text{Sim}(c_i, c_j) = \alpha \cdot J(c_i, c_j) + (1-\alpha) \cdot \overline{\text{IoU}}(c_i, c_j),
\end{equation}
where $J(\cdot,\cdot)$ is the Jaccard similarity over categorical tokens, $\overline{\text{IoU}}(\cdot,\cdot)$ is the weighted average Intersection-over-Union across numeric ranges, and $\alpha \in [0,1]$ balances the contributions of categorical and numeric components. The \textit{Dispersion Index} is then defined as:
\begin{equation}
\text{DI} = \frac{2}{N(N-1)} \sum_{i<j} \big(1 - \text{Sim}(c_i, c_j)\big).
\end{equation}

Intuitively, DI approaches zero when all configurations are nearly identical and increases as recommendations become more diverse. By incorporating both discrete and continuous dimensions, DI provides a holistic measure of variability that is particularly suited for mixed-type hyperparameter spaces. This metric enables us to assess whether the optimizer explores the search space broadly or converges to a narrow region, which is critical for evaluating exploration--exploitation trade-offs in automated tuning.

\begin{figure}[t]
    \centering
    \captionsetup{font=small}

    \begin{subfigure}[t]{0.49\linewidth}
        \centering
        \includegraphics[width=\linewidth]{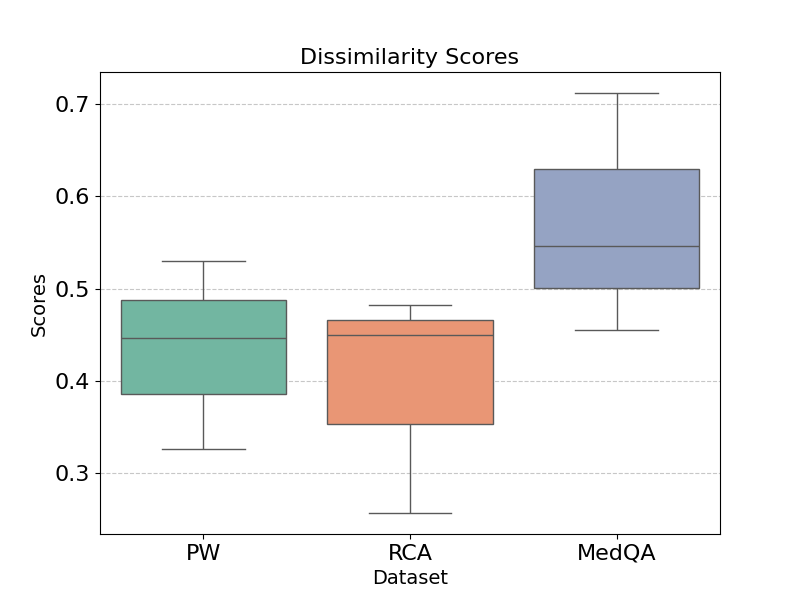}
        \caption{Dispersion index by dataset.}
        \label{fig:dispersion}
    \end{subfigure}\hfill
    \begin{subfigure}[t]{0.49\linewidth}
        \centering
        \includegraphics[width=\linewidth]{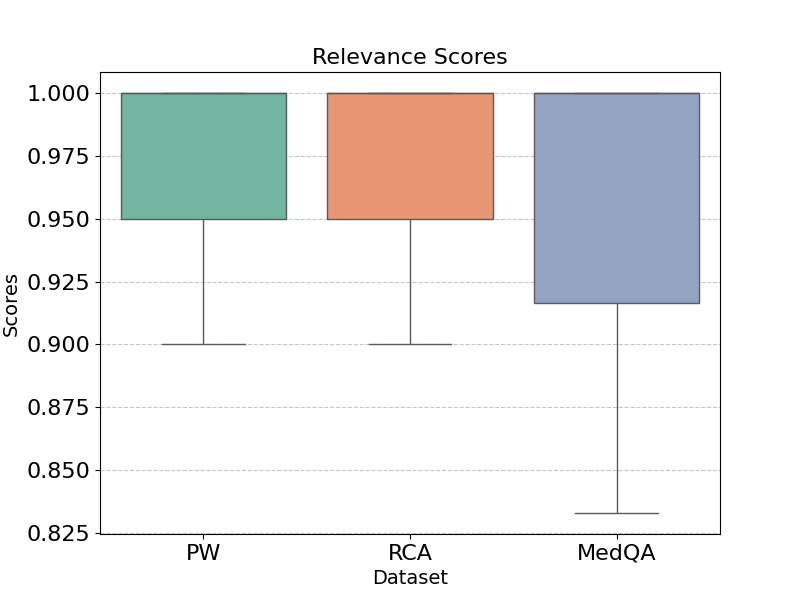}
        \caption{Relevance scores by dataset.}
        \label{fig:relevance_boxplot}
    \end{subfigure}

    \caption[Dispersion and relevance boxplots across datasets]{
    (Left) Boxplot showing the dispersion index metric for each dataset. We observe a median score around $0.5$, indicating a balance between consistency and diversity in the recommendations.
    (Right) Boxplot showing relevance scores of the retrieved best practices. We observe consistently high relevance across datasets.}
    \label{fig:dispersion_relevance_boxplots}
\end{figure}

Figure~\ref{fig:dispersion} reports the distribution of dissimilarity scores across three datasets: PEM, RCA, and MedQA. We observe that the median dispersion for PEM and RCA lies around 0.45, while MedQA exhibits a higher median near 0.55 with a broader inter-quartile range. Overall, the dispersion values cluster around 0.5, indicating a balance between consistency and variability in the recommended hyperparameter configurations. This suggests that our optimization strategy avoids excessive convergence to a single configuration while maintaining sufficient overlap to ensure stability. Notably, MedQA shows greater variability, reflecting the increased complexity and heterogeneity of this task, whereas PEM and RCA remain comparatively more consistent.

\subsection{Recommendation Relevance}

In order to study the relevance of the best practice sources used for recommendations, we perform a human annotation. Specifically we employ two human experts to review the best practice sources retrieved as well as the task and final recommendations. The annotators then score each retrieved source in a binary manner i.e. 0 if it is not relevant to the task and 1 if it is relevant. We then average these scores across all sources for a given task and run. We experiment on the three datasets: PEM, RCA, MedQA and run the best practice agent 10 times per dataset. The results are reported in figure \ref{fig:relevance_boxplot}, where we see a high relevance across all datasets confirming that the retrieved sources are indeed relevant to the task. In order to eliminate the human bias, we compute the Cohen's Kappa score (\cite{CohenKappa}) and find a high inter rater agreement ( $> 0.9$) between the annotators.

\subsection{Example Recommendations}

Figure ~\ref{fig:param-json} summarizes an example hyperparameter range recommended by our multi-agent pipeline for the MedQA dataset. The configuration includes both categorical and numeric range parameters, reflecting flexibility in batch size, accumulation steps, and learning rate, while maintaining structural flags such as gradient checkpointing.

The proposed hyperparameter search space is designed to achieve an optimal balance between computational efficiency and model performance. By constraining the search to a compact yet expressive set of configurations, we enable systematic exploration while mitigating the risk of combinatorial explosion.

\textbf{Model Selection.} Fixing the backbone to \texttt{meta-llama/Llama-3.2-3B-Instruct} ensures architectural consistency and leverages a strong instruction-tuned foundation, thereby reducing variance attributable to model choice.

\textbf{Categorical Parameters.} The discrete sets for \texttt{per\_device\_train\_batch\_size} (\{2, 4, 8, 12\}) and \texttt{gradient\_accumulation\_steps} (\{4, 8, 16\}) provide flexibility in effective batch size scaling. These choices are informed by practical hardware constraints, ensuring that each candidate configuration is feasible for typical GPU memory budgets.

\textbf{Continuous Parameters.} The learning rate interval $[10^{-5}, 5\times10^{-4}]$ captures the regime most relevant for fine-tuning large language models: sufficiently small to prevent catastrophic forgetting yet broad enough to accommodate task-specific convergence behaviour. Similarly, the epoch range of $[3, 10]$ offers adaptability across datasets of varying complexity while avoiding overfitting.

\textbf{Efficiency and Robustness.} By prioritising high-impact parameters and fixing structural flags such as \texttt{gradient\_checkpointing}, the search space remains tractable without sacrificing diversity. This design accelerates convergence, reduces computational overhead, and supports reproducibility across heterogeneous tasks.

In summary, the recommended parameter ranges exemplify an efficient search space that balances exploration and exploitation, enabling robust performance gains under realistic resource constraints.

\subsection{Example Best Practice Retrievals}
To ensure robust and efficient fine-tuning of large language models for medical question answering (MedQA), we consolidated a set of authoritative resources from Hugging Face and related repositories. These resources provide guidance on model configuration, parameter-efficient fine-tuning (PEFT), and domain-specific adaptation strategies.

\begin{tcolorbox}[colframe=black,colback=white,boxrule=0.5pt,width=\textwidth,title=Example of Retrieved Best-Practice Resources]
\centering
\begin{itemize}
    \item \textbf{Meta Llama 3 HF Repo:} \url{https://huggingface.co/meta-llama/Llama-3.2-3B}
    \item \textbf{Hugging Face PEFT for LoRA Adapters:} \url{https://github.com/huggingface/peft}
    \item \textbf{PEFT Finetuning Guide:} \url{https://huggingface.co/docs/peft/index}
    \item \textbf{Hugging Face Finetuning Guide:} \url{https://huggingface.co/docs/transformers/main/en/main_classes/trainer}
    \item \textbf{Hugging Face Llama 3 Finetuning Guide:} \url{https://huggingface.co/docs/transformers/model_doc/llama3}
    \item \textbf{QLoRA: Efficient Finetuning:} \url{https://github.com/artidoro/qlora}
    \item \textbf{Medical QA Datasets:} \url{https://huggingface.co/datasets?search=medical+qa}
\end{itemize}
\end{tcolorbox}

\textbf{Relevance for MedQA Task.} These resources collectively address the critical aspects of domain adaptation for medical question answering. The Llama 3 configuration and Hugging Face finetuning guides provide foundational instructions for model initialization and training workflows. PEFT and LoRA-based methods enable parameter-efficient adaptation, reducing memory footprint while preserving performance.
QLoRA introduces quantization-aware optimization, further improving efficiency without compromising accuracy. Domain-specific insights from the Medical LLMs overview and curated medical QA datasets ensure that the fine-tuned models capture clinical reasoning patterns and terminology. Note we ensure that the target dataset link is not present in the retrieved resources to prevent data leakage. Together, these best practices form a comprehensive toolkit for achieving high-quality, resource-efficient adaptation in downstream applications.

\begin{figure}[t!]
\centering
\begin{tcolorbox}[colframe=black,colback=white,boxrule=0.5pt,width=\textwidth]
\begin{verbatim}
{
    "model_name_or_path": 
    "meta-llama/Llama-3.2-3B-Instruct",
    "per_device_train_batch_size": {
        "type": "categorical",
        "choices": [2, 4, 8, 12]
    },
    "gradient_accumulation_steps": {
        "type": "categorical",
        "choices": [4, 8, 16]
    },
    "gradient_checkpointing": true,
    "learning_rate": {
        "type": "float",
        "low": 1e-05,
        "high": 0.0005
    },
    "num_train_epochs": {
        "type": "int",
        "low": 3,
        "high": 10
    }
}
\end{verbatim}
\end{tcolorbox}
\caption{Example hyperparameter range recommended by the multi-agent pipeline. Numeric parameters are expressed as intervals, while categorical parameters include discrete choices.}
\label{fig:param-json}
\end{figure}

\section{Experimental Setup}\label{apndx_expts}

In this section, we highlight the detailed experimental setup and justify its completeness in proving the efficiency of \algname{} over existing LLM-based AutoML based methods. To make the comparison fair across all baselines, all agents are implemented using the same model as AutoAdapt (i.e OpenAI's GPT-4.1-mini). In order to prevent data leakage, we remove entries of the exact benchmark dataset being evaluated, if present in KB. More concretely, experiments are conducted within the following two setups:
\begin{itemize}
    \item \textbf{Template Free (TF) Setting}: Within this setup, we evaluate each method's proficiency at generating an end-to-end pipeline under no explicit constraints, except for a simple prompt describing user requirements with respect to domain, accuracy and inference speed - mimcing the requirements of a user with no domain knowledge of LLMs. This gives the candidates full autonomy over the choice of model, adaptation technique and parameters. The prompt is kept consistent across all baselines. However, despite a 100\% success rate demonstrated by \algname{}, other methods fail to produce final code templates that execute without any errors. To quantify the ability of the baselines to produce executable code, we manually fix errors in the code templates produced by baseline methods, and report the Success Rate of each method as:
    $\text{Success Rate (SR)} = \frac{1}{1 + \text{number of iterations}}$. The prompt format is as follows:
\begin{promptBox}{Input Prompt for Template Free Setting} 
\begin{lstlisting}
We require a model which has been trained to answer {task_domain} questions. The model should maintain an
accuracy of over 90%. Please ensure that the total number of model parameters does not exceed 10 billion.
Futhuremore, the model should be deployed on a GPU and process at least 200 words per minute.
\end{lstlisting}
\end{promptBox}

    \item \textbf{Template Aware (TA) Setting}: This setup is meant to offer a fairer comparison between the baselines. In this setup, we augment the other baselines with the following AutoAdapt resources - (a) Adaptation Configuration Graph (ACG), (b) best practices knowledge base (KB), and (c) respective templates for SFT, DPO and RAG. This ensures a default success rate of 100\% and judges the baselines purely on their ability to make an informed decision regarding the parameter choices. Since performance on Natural Language benchmarks is highly contingent on the base model and duration of training, we fix the model across a task along with the number of epochs set to 1. The prompt format is as follows:
\begin{promptBox}{Input Prompt for Template Aware Setting} 
\begin{lstlisting}
You are an expert system tasked with choosing the most suited adaptation technique and respective 
hyperparameters for a given question answering task to be solved by adapting a Large Language Model. 
The base model and number of training epochs is fixed. Your goal is to select the best configuration and 
technique for training a {task_domain} question answering model by refering to the following:

###TASK: 
Train a model specialized for answering {user_domain} questions. Accuracy must be greater than 90%. 
Deployment must be on GPU. Inference speed must be greater than 200 words per minute.

###DECISION GRAPH: 
{ACG}

###CONTEXT:
{Best Practices Knowledge Base}

###PARAMETERS:
base_model = {fixed_base_model}
learning_rate = "<FILL_ME>"      
num_train_epochs = 1
lora = "<FILL_ME>"                
lora_r = "<FILL_ME>"              
lora_alpha = "<FILL_ME>"          
lora_dropout = "<FILL_ME>"        
per_device_training_batch_size = "<FILL_ME>"  
gradient_accumulation_steps = "<FILL_ME>"     
quantization = "<FILL_ME>" 

###SFT TEMPLATE:
{sft_template}

###DPO TEMPLATE:
{dpo_template}

###RAG TEMPLATE:
{rag_template}
\end{lstlisting}
\end{promptBox}
\end{itemize}

We utilize the proposed method for each dataset to evaluate the trained models. However, for datasets that lack a well-defined method of evaluation, we adopt an LLM-based equivalence matching with the correct labels to evaluate the models. The evaluation method and metric for each dataset is summarized in the following table, followed by the prompt formats:

\begin{table}[ht]
\centering
\caption{Evaluation setup}
\label{tab:evaluation_no_prompt}
\begin{tabular}{l l c}
\toprule
\textbf{Dataset} & \textbf{Method of Evaluation} & \textbf{Metric} \\
\midrule
UC Berkeley MATH & \href{https://github.com/openai/simple-evals}{OpenAI/simple\_evals} & Accuracy \\
MedQA            & LLM-based equivalence matching (P1) & Accuracy \\
CaseHold         & LLM-based equivalence matching (P1) & Accuracy \\
PEM               & LLM-based answer extraction (P3) + equivalence matching (P1) & Accuracy \\
RCA              & LLM-as-a-Judge (P2)  & Average Score  \\
ARC              & LLM-based equivalence matching (P1) & Accuracy \\
MBPP+            & Evalplus \cite{mbppplus} & Accuracy \\
When2Call        & LLM-based equivalence matching (P1) & Accuracy \\
E-commerce       & Direct Comparison & Accuracy \\
Textual Entailment        & Direct Comparison & Accuracy \\
\bottomrule
\end{tabular}
\end{table}

\begin{promptBox}{Prompt Format for LLM-based equivalence matching (P1)} 
\begin{lstlisting}
System Prompt
----------------------------------------
Given a reasoning domain problem, the correct answer, and the student's answer, determine if the student's answer 
is correct. You do not need to solve the problem; simply compare the student's answer with the correct 
answer and return 'yes' if they match and 'no' if they do not. The correct_answer and student_answer need 
not exactly match, even if they mean the same thing then return 'yes'. Also if either the option or the 
answer match between the two, then return 'yes'.


User Prompt
----------------------------------------

Problem: {problem}

Correct Answer: {correct_answer}

Student Answer: {student_answer}
\end{lstlisting}
\end{promptBox}

\begin{promptBox}{Prompt Format for LLM-as-a-Judge evaluation for RCA (P2)} 
\begin{lstlisting}
System Prompt
----------------------------------------
## Task Description
- You are an LLM response evaluator and scorer that assesses how well a generated string 
  matches a reference string.
- The reference strings you will be provided with are from an on-call software engineer
  responsible for maintaining a cloud data center.
- You will also be provided with a context string that the engineer used to write the 
  response string to help with scoring.
- You should assign a score from 1 to 5 based on how comprehensively the generated 
  string covers the information in the reference string.
- Use the following scoring criteria:
  5: The generated string perfectly captures all the key information in the reference 
     string, including details and nuances.
  4: The generated string covers most of the information in the reference string 
     accurately but may miss some minor details.
  3: The generated string partially covers the information in the reference string 
     but lacks depth or accuracy in some areas.
  2: The generated string contains only a few relevant points from the reference 
     string and misses significant details.
  1: The generated string is largely unrelated to the reference string or contains 
     mostly incorrect information.
You should also provide a brief explanation or justification for your score.


Example User Prompt
----------------------------------------
Reference String:
The history of artificial intelligence dates back to ancient civilizations such 
as Greece and China, with notable advancements in the 20th century leading to 
modern AI technologies.

Generated String:
Artificial intelligence has evolved significantly over time, especially in the 
20th century, thanks to advancements in technology.


Example System Scoring
----------------------------------------
Assigned Score: 4
Justification: The generated string accurately covers the evolution of AI and 
mentions key advancements in the 20th century, but it lacks specific references 
to ancient civilizations like Greece and China.


Final User Prompt
----------------------------------------
Context:
{context}

Reference String:
{gt_answer}

Generated String:
{generated_text}
\end{lstlisting}
\end{promptBox}

\begin{promptBox}{Prompt Format for LLM-based answer extraction for PEM dataset (P3)}
\begin{lstlisting}[mathescape]
System Prompt
----------------------------------------
Extract the final answer from the given solution


User Prompt
----------------------------------------
You are an answer extraction system. Your task is to process a dataset of science and mathematics problems and extract final answers. Follow these precise instructions:

1. Iterate through each entry in the dataset.

2. For each entry:
   a. Search for a final numerical value, equation, or multiple choice option.
   b. If a "Text Solution" section exists, prioritize finding the answer there.
   c. Ignore any URLs or video links.
   d. If sentences such as "statement i is correct/incorrect" are present then count this as an answer.
      Return: statement i is correct/incorrect.
   e. If multiple answers are present then return all answers.
   f. For math answers, return final expression if answer is not explicilty mentioned.

3. Output format:
   - If a final answer is found:
     Return "Answer: answer"
   - If no clear final answer is present:
     Return "No Answer"

4. Formatting rules:
   - Preserve exact notation and units of extracted answers.
   - Do not include explanations or workings.

5. Edge cases:
   - For incomplete or cut-off entries, still attempt to extract an answer if possible.
   - If an entry only contains hints without a clear final answer, return "No Answer".

6. Example usage:

   - Example 1:
     text = Text Solution:
            Let the velocities at the ends A and B be v_1 and v_2
            Using, equation of continuity 6v_1 = 10v_2
            $\Rightarrow$ v_1 = 5/3 v_2
            Now, using Bernoullis equation:
            P_1 + 1/2 $\rho$ v_1^2 = P_2 + 1/2 $\rho$ v_2^2
            P_1 - P_2 = 1/2 $\rho$ (16/9 v_2^2)
            0.05 $\rho$ g = 1/2 $\rho$ (16/9 v_2^2)
            Hence, v_2 = 0.75 m/s = 75 cm/s
            Hence, the total flow rate = A v = $75 \times 10^{-1} cm^3/s$ = 7.5 cc/s
            
     Answer = 7.5 cc/s

   - Example 2:
     text = Hint:
            - Determine the value of one main scale division (MSD) by taking the reciprocal of the number of divisions per cm on the main scale.
            - Calculate the value of one vernier scale division (VSD) by dividing the value of (n-1) main scale divisions by n, since n VSD coincide with (n-1) MSD.
            - The least count (LC) of the vernier calipers can be found by subtracting the value of one MSD from one VSD.
            Video Solution:
            https://www.youtube.com/embed/NTrY16gq84w

     Answer = No Answer


Execute this process systematically for each entry, focusing solely on answer extraction.

text = {solution}
\end{lstlisting}
\end{promptBox}

\section{Detailed Results}
\label{apndx_res}

In this section, we present in detail the specific technique employed by baselines and \algname{}, and the corresponding hyperparameters choice under the 2 experimental setups. 

\subsection{Template Aware Results}
\begin{table}[ht]
{\centering
\caption{Performance comparison across datasets and methods in the template aware (TA) setting.}
\label{tab:apndx_template_aware_results}
\resizebox{\linewidth}{!}{
\begin{tabular}{l l c c c c c c}
\toprule
\textbf{Dataset} & \textbf{Base Model} & \textbf{AutoAdapt} & \textbf{AutoMLAgent} & \textbf{MLCopilot} & \textbf{HF Defaults} & \textbf{DS-Agent} \\
\midrule
UC Berkeley MATH & Phi-4-mini-Instruct          & $27.8 \pm 3.5$ & $26.8 \pm 2.4$ & $27.2 \pm 2.3$ & $26.2 \pm 0.06$ & $\mathbf{30.0 \pm 3.7}$ \\
MedQA            & Phi-4-mini-Instruct          & $\mathbf{61.27 \pm 1.1}$ & $58.84 \pm 1.3$ & $57.50 \pm 1.2$ & $58.13 \pm 0.04$ & $59.86 \pm 1.0$ \\
CaseHold         & Llama3.2-3B                  & $\mathbf{89.22 \pm 0.9}$ & $84.72 \pm 1.1$ & $82.63 \pm 1.0$ & $86.21 \pm 0.08$ & $87.28 \pm 0.9$ \\
PEM               & Qwen2.5-Math-7B-Instruct     & $\mathbf{67.69 \pm 1.3}$ & $64.53 \pm 1.4$ & $63.71 \pm 1.5$ & $65.78 \pm 0.02$ & $63.40 \pm 1.6$ \\
RCA              & Phi-4                        & $\mathbf{2.30 \pm 0.76}$ & $2.23 \pm 0.81$ & $2.18 \pm 0.77$ & $2.23 \pm 0.81$ & $2.25 \pm 0.66$ \\
ARC              & Llama3.1-8B-Instruct         & $\mathbf{83.79 \pm 1.2}$ & $77.39 \pm 1.1$ & $77.56 \pm 1.0$ & $78.24 \pm 0.03$ & $77.47 \pm 1.4$ \\
MBPP+            & Qwen2.5-7B-Instruct          & $\mathbf{68.3 \pm 0.8}$ & $\mathbf{68.2 \pm 0.7}$ & $67.5 \pm 0.9$ & $\mathbf{68.0 \pm 0.07}$ & $66.9 \pm 0.8$ \\
When2Call        & Phi-4-mini-Instruct          & $\mathbf{54.5 \pm 0.9}$ & $\mathbf{45.4 \pm 1.2}$ & $48.1 \pm 1.1$ & $\mathbf{46.9 \pm 0.05}$ & $50.0 \pm 0.7$ \\
\bottomrule
\end{tabular}
}
}
\end{table}

\subsection{Template Free Results}
The raw results from the template free (TF) setup are presented below:
\begin{table}[ht]
\centering
\caption{Performance comparison across datasets and methods in the template free (TF) setting}
\label{tab:converted_results_rounded_bold}
\begin{tabular}{l c c c c}
\toprule
\textbf{Dataset} & \textbf{AutoAdapt} & \textbf{AutoMLAgent} & \textbf{MLCopilot} & \textbf{DS-Agent} \\
\midrule
UC Berkeley MATH & $\textbf{17.00} \pm 3.1$ & $8.60 \pm 2.7$ & $0.60 \pm 0.1$ & $0.00 \pm 0.00$ \\
MedQA            & $63.55 \pm 2.1$ & $46.97 \pm 1.8$ & $38.10 \pm 1.2$ & $\textbf{74.63} \pm 2.1$\\
CaseHold         & $\textbf{92.53} \pm 1.4$ & $85.90 \pm 1.2$ & $3.10 \pm 0.8$ & $3.16 \pm 0.8$ \\
PEM               & $\textbf{65.14} \pm 2.3$ & $28.03 \pm 1.9$ & $0.00 \pm 0.00$ & $0.00 \pm 0.00$ \\
RCA              & $\textbf{2.30} \pm 0.76$ & $1.44 \pm 0.64$ & $1.13 \pm 0.53$ & $1.23 \pm 0.72$ \\
ARC              & $\textbf{90.87} \pm 2.2$ & $76.71 \pm 1.5$ & $28.84 \pm 1.2$ & $14.85 \pm 0.9$ \\
MBPP+            & $\textbf{68.00} \pm 1.5$ & $39.90 \pm 1.2$ & $0.00 \pm 0.00$  & $0.00 \pm 0.00$\\
Entail           & $\textbf{82.1} \pm 1.4$ & $80.5 \pm 1.7$ & $24.3 \pm 2.2$ & $50.0 \pm 1.9$ \\
Ecom             & $94.2 \pm 1.7$ & $97.4 \pm 1.1$ & $\textbf{99.1} \pm 0.3$ & $0.00 \pm 0.00$ \\
When2Call        & $\textbf{58.52} \pm 1.3$ & $50.66 \pm 1.2$ & $52.30 \pm 0.9$ & $0.00 \pm 0.00$ \\
\bottomrule
\end{tabular}
\end{table}

We further normalize the accuracies and average scores from 0 to 1, and report it as NPS (Normalized Performance Scores). We report the Cumulative Score defined as: $\text{Cumulative Score} = \tfrac{1}{2} ( \text{NPS} + \text{SR} )$.
The detailed results used to construct the radar plots in \autoref{fig:main_results} are as follows:

\begin{table}[ht]
\centering
\caption{SR and NPS across datasets and baselines}
\label{tab:sr_nps_results}
\resizebox{\linewidth}{!}{
\begin{tabular}{l c c c c c c c c}
\toprule
\textbf{Dataset} &
\multicolumn{4}{c}{\textbf{SR} ($1/(1+\text{iterations})$)} &
\multicolumn{4}{c}{\textbf{NPS}} \\
\cmidrule(lr){2-5}\cmidrule(lr){6-9}
& \textbf{AutoAdapt} & \textbf{AutoMLAgent} & \textbf{MLCopilot} & \textbf{DS-Agent}
& \textbf{AutoAdapt} & \textbf{AutoMLAgent} & \textbf{MLCopilot} & \textbf{DS-Agent} \\
\midrule
UC Berkeley MATH & 1.00 & 0.11 & 0.17 & 1.00 & 0.17 & 0.09 & 0.01 & 0.00 \\
MedQA            & 1.00 & 0.11 & 0.14 & 1.00 & 0.64 & 0.47 & 0.38 & 0.75 \\
CaseHold         & 1.00 & 0.11 & 0.20 & 0.25 & 0.93 & 0.86 & 0.03 & 0.03 \\
PEM               & 1.00 & 0.11 & 0.17 & 0.25 & 0.65 & 0.28 & 0.00 & 0.00 \\
RCA              & 1.00 & 0.11 & 0.17 & 1.00 & 0.46 & 0.29 & 0.23 & 0.25 \\
ARC              & 1.00 & 0.11 & 0.17 & 0.33 & 0.91 & 0.77 & 0.29 & 0.15 \\
MBPP+            & 1.00 & 0.11 & 0.17 & 0.25 & 0.68 & 0.40 & 0.00 & 0.50 \\
Entail           & 1.00 & 1.00 & 1.00 & 0.70 & 0.82 & 0.80 & 0.24 & 0.50 \\
Ecom             & 1.00 & 1.00 & 0.50 & 0.00 & 0.94 & 0.97 & 0.98 & 0.00 \\
When2Call        & 1.00 & 0.11 & 0.17 & 0.53 & 0.59 & 0.51 & 0.52 & 0.53 \\
\bottomrule
\end{tabular}
}
\end{table}

\begin{table}[ht]
\centering
\caption{Cumulative Score (CS) across datasets and baselines}
\label{tab:cs_results}
\begin{tabular}{l c c c c}
\toprule
\textbf{Dataset} & \textbf{AutoAdapt} & \textbf{AutoMLAgent} & \textbf{MLCopilot} & \textbf{DS-Agent} \\
\midrule
UC Berkeley MATH & \textbf{0.58} & 0.10 & 0.09 & 0.50 \\
MedQA            & 0.82 & 0.29 & 0.26 & \textbf{0.87} \\
CaseHold         & \textbf{0.96} & 0.49 & 0.12 & 0.14 \\
PEM               & \textbf{0.83} & 0.20 & 0.08 & 0.12 \\
RCA              & \textbf{0.73} & 0.20 & 0.20 & 0.62 \\
ARC              & \textbf{0.95} & 0.44 & 0.23 & 0.24 \\
MBPP+            & \textbf{0.84} & 0.26 & 0.08 & 0.38 \\
Entail           & \textbf{0.91} & 0.90 & 0.62 & 0.60 \\
Ecom             & 0.97 & \textbf{0.99} & 0.74 & 0.00 \\
When2Call        & \textbf{0.79} & 0.31 & 0.34 & 0.53 \\
\bottomrule
\end{tabular}
\end{table}
\section{Case Study: Multi-Agent Debate Planning}
\label{apndx_casestudy}

This section presents in detail how the multi-agent debate planning proceeds for the 'MedQA' dataset. It highlights the individual contribution of the proposal and critic agents, and the mechanism behind the final choice of the adaptation technique and the hyperparameter ranges.

\begin{outputbox}{Agent: BestPracticeAgent --- Node Selection: \texttt{adapt}}
\begin{lstlisting}
Based on the user task and the contextual best practices, the goal is to select exactly one candidate node from ['sft', 'rag', 'dpo'] as the next phase in domain adaptation for the given user dataset/task.

**Reasoning:**

1. **User Task and Candidates:**
   - The user requires selecting one node (strategy) among:
     - **sft (Supervised Fine-Tuning)**
     - **rag (Retrieval-Augmented Generation)**
     - **dpo (Direct Preference Optimization)**

2. **Context and Best Practices:**
   - The best practices indicate multiple use cases with models trained primarily using **sft** across a variety of medical datasets and multimodal tasks (notably sequence classification, image description, symptom analysis).
   - Several base models like `gpt-4-vision-preview`, `microsoft/deberta-v3-small`, and `unsloth/Qwen2-VL-7B-Instruct-bnb-4bit` employ **supervised fine-tuning**.
   - Parameter-efficient fine-tuning techniques like LoRA and quantization approaches (4-bit, nf4, int4) are frequently paired with SFT, indicating it is well-established and compatible with efficient training pipelines.
   - Federated learning and instruction tuning appear as promising advanced training techniques but are not reflected in candidate choices.
   - There is no strong evidence or examples supporting the application or effectiveness of RAG or DPO in the provided best practice data for this particular medical/multimodal domain adaptation scenario.
   - DPO is mentioned but without concrete training or evaluation results, unlike SFT, which is consistently used and documented.
   - RAG typically demands additional infrastructure for retrieval components and was not part of the documented successful model adaptations or dataset discussions.

3. **Summary:**
   - Considering that supervised fine-tuning (SFT) is the proven and predominant choice in the referenced best practices.
   - It matches the user datasets/tasks aligned with medical and multimodal domains.
   - It supports effective parameter-efficient fine-tuning approaches enabling domain adaptation under constrained compute resources.

**Concrete Recommendation:**
- **Select: "sft" (Supervised Fine-Tuning)**

--- 
**Final Answer:**  
Select **"sft"** as the adaptation strategy node for the user's domain adaptation phase.

**References:**
https://github.com/deepmind/mathematics_dataset
https://github.com/Starscream-11813/Variational-Mathematical-Reasoning/
https://github.com/arkilpatel/SVAMP
https://github.com/TNI-playground/TNI-FlowerTune-LLM-Leaderboard-GeneralNLP
https://github.com/logikon-ai/cot-eval
https://github.com/dustalov/llmfao
\end{lstlisting}
\end{outputbox}

\begin{outputbox}{Agent: ResearchAgent  --- Node Selection: \texttt{sft}}
\begin{lstlisting}
Among the candidates 'sft' (Supervised Fine-Tuning), 'rag' (Retrieval-Augmented Generation), and 'dpo' (Direct Preference Optimization), 'sft' is the most standard and straightforward adaptation strategy for domain adaptation. It involves fine-tuning the model parameters using supervised data from the target domain, which typically yields direct improvements in performance on that domain. 'rag' requires additional retrieval mechanisms and data infrastructure, which may complicate the adaptation pipeline and is more suited when external knowledge retrieval is essential. 'dpo', focusing on preference optimization, is more suited for aligning model outputs with human preferences but might not directly improve domain-specific performance as reliably as 'sft'. Thus, selecting 'sft' ensures an effective and simple adaptation strategy aligned with standard practices.

next_nodes=['sft']
\end{lstlisting}
\end{outputbox}

\begin{outputbox}{Agent: UserPreferenceAgent --- Review of \texttt{sft} Proposals}
\begin{lstlisting}
# Proposal Review for User Preference Alignment

## Context
The user preference specifies selecting exactly one candidate node from among 'sft', 'rag', and 'dpo' as the next phase in domain adaptation for the user dataset/task. The user expects adherence to best practices and justifications aligned with domain adaptation strategies.

---

## Agent: BestPracticeAgent
### Analysis:
- The proposal explicitly addresses the user's task of choosing one candidate node from the specified set.
- The reasoning heavily references best practices documented for medical/multimodal domain adaptation.
- Emphasizes sft (Supervised Fine-Tuning) as the predominant, stable, well-documented, and efficient approach.
- Notes lack of strong evidence or successful precedents for rag or dpo in the given context.
- Considers infrastructure complexity, compatibility with parameter-efficient fine-tuning, and domain applicability.
- Provides concrete recommendation: select "sft".
- References numerous relevant sources and practical examples supporting the choice.

### Alignment with User Preferences:
- Fully aligned; directly tackles the user-specified candidate nodes and recommends one node.
- Justification rationale is detailed and supported by best practice examples.
- No observable violation of user preference.

### Conclusion:
- No violations.
- Proposal well-aligned and comprehensive.

---

## Agent: ResearchAgent
### Analysis:
- Provides a clear justification for choosing 'sft' over 'rag' and 'dpo'.
- Highlights sft as the standard and straightforward approach with direct domain performance benefits.
- Notes drawbacks and increased complexity for rag due to retrieval mechanisms.
- Notes dpo is better suited for preference alignment than domain adaptation.
- Recommends sft as the effective and simple solution according to standard practice.

### Alignment with User Preferences:
- Addresses exactly the user preference scenario.
- No contradiction or omission regarding user instruction.
- Justification is concise but sufficiently informative.
- No evidence of deviation from user requirements.

### Conclusion:
- No violations.
- Proposal is concise and appropriately aligned.

---

# Summary and Recommendation
| Agent             | Violations? | Notes |
|------------------|-------------|-------|
| BestPracticeAgent| None        | Detailed, well-supported, aligns perfectly with user preferences. |
| ResearchAgent    | None        | Concise and aligned, covers necessary points. |

Recommendation: Both proposals are aligned and recommend selecting "sft". Given higher detail in BestPracticeAgent, recommend accepting BestPracticeAgent's proposal.
\end{lstlisting}
\end{outputbox}

\begin{outputbox}{Agent: DataAgent --- Review of \texttt{sft} Proposals}
\begin{lstlisting}
# Review of AI Training Adaptation Proposals

## 1. Per-Agent Critique

### BestPracticeAgent
- Strengths:
  - Correctly identifies the three candidate strategies: SFT, RAG, and DPO.
  - Detailed reasoning anchored in best practices for medical domain and multimodal tasks.
  - Notes SFT is most frequently used and stable approach in similar datasets and tasks.
  - Recognizes RAG demands additional infrastructure not evidently supported given the dataset/task.
  - Points out lack of solid evidence for DPO in this medical adaptation scenario.
  - Concrete recommendation aligns with user's data profile and task: SFT well-suited for single-choice, structured I/O and moderate-length samples.
- No violations observed.

### ResearchAgent
- Strengths:
  - Concise and accurate rationale favoring SFT.
  - Correctly emphasizes straightforwardness and direct domain adaptation advantages of SFT.
  - Correctly identifies RAG introduces complexity unsuited to this dataset/task.
  - Points out DPO is more about user preference alignment than domain performance.
- Limitations:
  - Shorter and less detailed compared to BestPracticeAgent; lacks dataset specifics/medical context.
- No violations observed.

## 2. Final Recommendation
Both proposals identify "sft" as best strategy node. BestPracticeAgent is preferred due to thoroughness.

# Final Answer
Select "sft" (Supervised Fine-Tuning) as the domain adaptation strategy node.
Preferred Proposal: BestPracticeAgent
\end{lstlisting}
\end{outputbox}

\begin{outputbox}{Agent: BestPracticeAgent --- Final Recommendation: \texttt{sft}}
\begin{lstlisting}
Based on the user task to select exactly one adaptation strategy node from ['sft', 'rag', 'dpo'] for the given dataset/task, and considering the comprehensive contextual best practices and prior agent evaluations, the recommended choice is:

### Final Recommendation:
Select: "sft" (Supervised Fine-Tuning)

### Reasoning:
- Proven Effectiveness: SFT is predominant and successful in medical/multimodal datasets.
- Compatibility with Efficient Techniques: Works well with LoRA and quantization (4-bit, int4, nf4).
- Simplicity and Stability: Avoids retrieval infrastructure required by RAG.
- Limited Support for Alternatives: DPO targets preference alignment; RAG adds complexity; both lack demonstrated best-practice support here.
- Alignment with User Instructions: Select exactly one node; SFT is validated and strategically sound.

Please proceed with Supervised Fine-Tuning (sft) as the next phase of domain adaptation.

References:
https://github.com/deepmind/mathematics_dataset
https://github.com/Starscream-11813/Variational-Mathematical-Reasoning/
https://github.com/arkilpatel/SVAMP
https://github.com/TNI-playground/TNI-FlowerTune-LLM-Leaderboard-GeneralNLP
https://github.com/logikon-ai/cot-eval
https://github.com/dustalov/llmfao
\end{lstlisting}
\end{outputbox}

\begin{outputbox}{Agent: ResearchAgent --- Reiteration: \texttt{sft}}
\begin{lstlisting}
Among the candidate adaptation strategies 'sft' (Supervised Fine-Tuning), 'rag' (Retrieval-Augmented Generation), and 'dpo' (Direct Preference Optimization), supervised fine-tuning (sft) stands out as the most reliable and straightforward method for domain adaptation. It directly adjusts model parameters based on labeled data from the target domain, ensuring improved domain-specific performance. RAG requires additional components and infrastructure for knowledge retrieval, which may complicate adaptation and is less documented for the current domain scenario. DPO focuses on aligning model outputs with user preferences rather than explicit domain adaptation, making it less suitable here. Given the lack of contrary retrieved knowledge, and standard best practices, selecting 'sft' optimally balances effectiveness, simplicity, and alignment with domain adaptation goals.

next_nodes=['sft']
\end{lstlisting}
\end{outputbox}

\begin{outputbox}{Agent: UserPreferenceAgent --- Review of \texttt{sft} Proposals (Round 2)}
\begin{lstlisting}
# Proposal Review with Respect to User Preferences

## Summary of User Preferences
User requires selection of exactly one node from ['sft','rag','dpo'].

## Agent Proposals Review
### BestPracticeAgent
- Selects exactly one node: "sft".
- Strong domain/task justification; practical and stable training benefits.
- Violations: None.

### ResearchAgent
- Selects exactly one node: "sft".
- Clear domain adaptation justification; concise.
- Violations: None.

## Recommendation
Both fulfill requirement; BestPracticeAgent is more detailed and thus recommended.
\end{lstlisting}
\end{outputbox}

\begin{outputbox}{Agent: DataAgent (Critique) --- Review of \texttt{sft} Proposals (Round 2)}
\begin{lstlisting}
# Proposal Review for Adaptation Strategy Selection

## 1. Per-Agent Critique

### BestPracticeAgent
- Appropriate for medical QA dataset: well-formed examples, short outputs (~9 tokens), variable inputs.
- Avoids unnecessary RAG infrastructure; dismisses DPO for lack of advantage in this domain.
- Recommends LoRA/quantization; aligns with format and token-length distribution.
- Summary: sound, conservative, domain-aligned.

### ResearchAgent
- Also selects "sft"; agrees on straightforward supervised approach.
- Less detailed but consistent; no violations.

## 2. Final Recommendation
Both select "sft"; BestPracticeAgent is stronger due to detail and practical considerations.
Proceed with SFT (sft).
\end{lstlisting}
\end{outputbox}

\begin{outputbox}{Agent: BestPracticeAgent  --- Node Selection: \texttt{peft}}
\begin{lstlisting}
Given the user task to decide on selecting candidates for the next phase of domain adaptation with a focus on fine-tuned model-based adaptation, and considering that the only candidate available is "peft" (Parameter-Efficient Fine-Tuning), the recommendation is:

Recommendation: Select the candidate "peft" for the next phase.

Reasoning:
1. Alignment with best practices: strong endorsement of LoRA/FlexLoRA for efficient adaptation.
2. Model examples/configurations incorporate PEFT parameters (lora_r, lora_alpha, lora_dropout).
3. Resource efficiency for constrained hardware/VRAM.
4. Consistency with user instruction: only one candidate and no conflicting info.

Summary: Selecting "peft" leverages state-of-the-art resource-efficient fine-tuning techniques for domain adaptation.

References:
https://github.com/Starscream-11813/Variational-Mathematical-Reasoning/
https://github.com/deepmind/mathematics_dataset
https://github.com/arkilpatel/SVAMP
https://github.com/TNI-playground/TNI-FlowerTune-LLM-Leaderboard-GeneralNLP
https://github.com/rungalileo/agent-leaderboard
https://github.com/logikon-ai/cot-eval
\end{lstlisting}
\end{outputbox}

\begin{outputbox}{Agent: ResearchAgent  --- Node Selection: \texttt{peft}}
\begin{lstlisting}
Given that there is only one candidate node 'peft' and the task involves configuration settings for a fine-tuned model-based adaptation approach, selecting 'peft' (Parameter-Efficient Fine-Tuning) is appropriate. PEFT methods are effective for domain adaptation as they allow fine-tuning with fewer parameters and resources while retaining model performance. Since no other candidates are available, selecting 'peft' aligns well with the instruction and the goal of the adaptation process.

next_nodes=['peft']
\end{lstlisting}
\end{outputbox}

\begin{outputbox}{Agent: UserPreferenceAgent (Critique) --- Review of \texttt{peft} Proposals}
\begin{lstlisting}
# Proposal Review for Training Candidate Selection

## Agent Proposal Critiques

### BestPracticeAgent
- Recommends selecting "peft" citing best practices, resource efficiency, practical viability, and no conflicting info.
- Compliance: aligned with user context and task.
- Issues: none identified.

### ResearchAgent
- Selects "peft" as sole candidate; emphasizes efficiency.
- Compliance: aligned; less detailed but no violation.

## Recommendation
Most aligned: BestPracticeAgent (more comprehensive rationale and references).
\end{lstlisting}
\end{outputbox}

\begin{outputbox}{Agent: DataAgent (Critique) --- Review of \texttt{peft} Proposals}
\begin{lstlisting}
## Review of Proposals Based on Provided User Data and Task Assumptions

### User Data Summary
- Medical multiple-choice QA; 10,178 samples; inputs avg ~235 tokens, max 938; outputs avg ~8.7 tokens.

## 1. Per-Agent Critique

### BestPracticeAgent
Strengths:
- Identifies "peft" as sole candidate; emphasizes LoRA/FlexLoRA and efficiency.
Potential issues:
- Initially generic; should address short outputs and variable input lengths; confirm classification format and evaluation.

### ResearchAgent
Strengths:
- Correct default choice given sole candidate.
Potential issues:
- Too high-level; lacks task/data-specific considerations.

## 2. Final Recommendation
Select PEFT; prefer BestPracticeAgent with augmentation advice:
- Ensure setup supports classification outputs/MCQ.
- Account for variable input lengths; set appropriate loss/metrics; QC and validation.
\end{lstlisting}
\end{outputbox}

\begin{outputbox}{Agent: BestPracticeAgent  --- Final Recommendation: \texttt{peft}}
\begin{lstlisting}
Recommendation: Select "peft" (Parameter-Efficient Fine-Tuning) for the next phase of domain adaptation.

Reasoning:
1. Candidate exclusivity: only node is peft; fits fine-tuned adaptation.
2. Best-practice support: LoRA/FlexLoRA widely adopted; examples include deberta and Qwen2-VL variants.
3. Resource efficiency: suits variable-length inputs (up to ~938 tokens) and short outputs (~8.7 tokens) for classification/MCQ.
4. Dataset/task consistency: PEFT can adapt heads, handle token variability, and pair with proper loss/metrics.
5. No conflicting constraints.

Augmentation advice:
- Confirm PEFT supports short classification outputs and MCQ format.
- Normalize inputs; consider domain tokenization.
- Use classification-appropriate loss (cross-entropy) and metrics (accuracy/F1).
- Monitor domain shift/catastrophic forgetting.

References:
https://github.com/Starscream-11813/Variational-Mathematical-Reasoning/
https://github.com/deepmind/mathematics_dataset
https://github.com/arkilpatel/SVAMP
https://github.com/TNI-playground/TNI-FlowerTune-LLM-Leaderboard-GeneralNLP
https://github.com/rungalileo/agent-leaderboard
https://github.com/logikon-ai/cot-eval
\end{lstlisting}
\end{outputbox}

\begin{outputbox}{Agent: ResearchAgent  --- Reiteration: \texttt{peft}}
\begin{lstlisting}
The candidate 'peft' stands for Parameter-Efficient Fine-Tuning, which is well-aligned with the task of fine-tuned model-based adaptation, especially in scenarios requiring resource efficiency. Given that 'peft' is the sole candidate and no conflicting information advises against its use, selecting it is logical. PEFT methods enable effective adaptation of large models with fewer parameters, reducing computational resources while maintaining performance. The prior best practice analysis shows strong support for PEFT techniques such as LoRA and FlexLoRA. Therefore, selecting 'peft' optimally advances the adaptation process based on current best practices and the task requirements.

next_nodes=['peft']
\end{lstlisting}
\end{outputbox}

\begin{outputbox}{Agent: UserPreferenceAgent (Critique) --- Review of \texttt{peft} Proposals (Round 2)}
\begin{lstlisting}
# Review of AI Training Proposals with Respect to User Preferences

## Proposal Critiques

### BestPracticeAgent
- Aligns with implicit preference to select sole candidate peft.
- Comprehensive reasoning; accounts for dataset nuances and implementation.
- No violations.

### ResearchAgent
- Concise; selects peft; no conflicts.
- No violations.

## Summary Recommendation
Both conform; BestPracticeAgent recommended due to actionable detail and references.
\end{lstlisting}
\end{outputbox}

\begin{outputbox}{Agent: DataAgent (Critique) --- Review of \texttt{peft} Proposals (Round 2)}
\begin{lstlisting}
# Training Proposal Review

## 1. Per-agent Critique

### BestPracticeAgent
- Acknowledges dataset stats (variable input length, short outputs).
- Recommends loss/metrics and monitoring; aligns with data realities.
- No violations.

### ResearchAgent
- Correct but less detailed; still consistent with data characteristics.
- No violations.

## 2. Final Recommendation
Accept BestPracticeAgent proposal as most aligned and thorough choice for next phase using PEFT.
\end{lstlisting}
\end{outputbox}

\begin{outputbox}{Agent: BestPracticeAgent  --- Node: \texttt{sft\_train} (Parameters \& Reasoning)}
\begin{lstlisting}
{
  "parameters": {
    "model_name_or_path": "unsloth/Qwen2-VL-7B-Instruct-bnb-4bit",
    "learning_rate": {"type": "float", "low": 1e-5, "high": 3e-4},
    "num_train_epochs": {"type": "int", "low": 2, "high": 5},
    "per_device_train_batch_size": {"type": "categorical", "choices": [4, 8, 12]},
    "gradient_accumulation_steps": {"type": "categorical", "choices": [4, 8, 12]},
    "weight_decay": {"type": "float", "low": 0.0, "high": 0.05},
    "optimizer": {"type": "categorical", "choices": ["adamw_8bit", "adamw_torch_fused", "AdamW"]},
    "scheduler": {"type": "categorical", "choices": ["linear", "cosine", "constant"]},
    "mixed_precision": {"type": "categorical", "choices": ["fp16", "bf16"]},
    "quantization": {"type": "categorical", "choices": [1]},
    "quantization_method": {"type": "categorical", "choices": ["int4"]},
    "peft": {"type": "categorical", "choices": [1]},
    "lora_r": {"type": "categorical", "choices": [8, 16, 32]},
    "lora_alpha": {"type": "categorical", "choices": [16, 32, 64]},
    "lora_dropout": {"type": "categorical", "choices": [0, 0.05, 0.1]},
    "target_modules": {"type": "categorical", "choices": ["q_proj", "dense", "dense_h_to_4h", "out_proj", "o_proj"]},
    "gradient_checkpointing": true,
    "packing": true,
    "logging_steps": 25,
    "eval_steps": 100
  }
}

--- Reasoning (abridged):
- Model: unsloth/Qwen2-VL-7B-Instruct-bnb-4bit (<10B params), balances capacity and throughput.
- Strategy: SFT with PEFT (LoRA) + int4 quantization + mixed precision for efficiency.
- Search space: LR 1e-5..3e-4, epochs 2..5, batch/accum choices for memory/stability, WD 0..0.05.
- Optimizers/schedulers: AdamW variants; linear/cosine/constant.
- Memory: gradient checkpointing + packing; logging/eval steps set.

References:
https://github.com/Starscream-11813/Variational-Mathematical-Reasoning/
https://github.com/deepmind/mathematics_dataset
https://math-qa.github.io/math-QA/
https://github.com/dsdanielpark/open-llm-leaderboard-report
https://github.com/CPM-AI/Kor_Finance-leaderboard
https://github.com/AI-Ceping/LLM-Ceping
\end{lstlisting}
\end{outputbox}

\begin{outputbox}{Agent: ResearchAgent  --- Node: \texttt{sft\_train} (Justification \& Parameter Range)}
\begin{lstlisting}
Given the user requirements of maintaining accuracy over 90% for college-level medical multiple-choice questions and the constraint of model parameters under 10 billion, the best candidate from the available model card data is 'unsloth/Qwen2-VL-7B-Instruct-bnb-4bit'. This model has approximately 7 billion parameters, fitting the size constraint and has been trained with supervised fine-tuning (SFT) on medical datasets analogously, which implies good domain relevance and task alignment. It also uses PEFT with LoRA, mixed precision and quantization suitable for efficient GPU deployment, aiding the processing speed requirement (at least 200 words per minute). The hyperparameter tuning ranges suggested reflect typical best practices in SFT with LoRA setups..."
parameter_range='{"model_name_or_path":{"type":"categorical","choices":["unsloth/Qwen2-VL-7B-Instruct-bnb-4bit"]},
"learning_rate":{"type":"float","low":1e-6,"high":1e-3},
"num_train_epochs":{"type":"int","low":2,"high":3},
"per_device_train_batch_size":{"type":"categorical","choices":[2,4,8]},
"gradient_accumulation_steps":{"type":"categorical","choices":[1,4,8]},
"lora_r":{"type":"categorical","choices":[8,16,32]},
"lora_alpha":{"type":"categorical","choices":[16,32]},
"lora_dropout":{"type":"categorical","choices":[0,0.05,0.1]},
"weight_decay":{"type":"float","low":0.0,"high":0.1},
"optimizer":{"type":"categorical","choices":["adamw_8bit","adamw_hf"]},
"scheduler":{"type":"categorical","choices":["linear","cosine"]},
"quantization":{"type":"categorical","choices":[1]},
"mixed_precision":{"type":"categorical","choices":["fp16"]}}
\end{lstlisting}
\end{outputbox}

\begin{outputbox}{Agent: UserPreferenceAgent (Critique) --- Review of \texttt{sft\_train} Proposals}
\begin{lstlisting}
# AI Training Proposal Review

## BestPracticeAgent
Strengths: bounded LR (1e-5..3e-4), epochs (2..5), balanced batch/accum, WD<=0.05, fp16/bf16, int4, LoRA, checkpointing+packing.
Issues: none apparent.

## ResearchAgent
Strengths: compliant model; broader LR range.
Issues: LR upper bound 1e-3 risky; epochs only 2..3; smaller batch/accum may hurt throughput; fp16-only; missing checkpointing/packing.

Recommendation: BestPracticeAgent is most aligned and robust.
\end{lstlisting}
\end{outputbox}

\begin{outputbox}{Agent: DataAgent (Critique) --- Review of \texttt{sft\_train} Proposals}
\begin{lstlisting}
# Review of AI Training Proposals

## BestPracticeAgent
- Model fits <10B; supports input lengths via packing + gradient checkpointing.
- SFT + LoRA + int4 + mixed precision; reasonable bounded hyperparameter ranges.
- No violations; minor note: int4 fixed (aggressive but aligned with inference).

## ResearchAgent
- Model aligns, but LR range too broad (up to 1e-3) risks instability/overfit; fewer training optimizations specified.
- More risk than BestPracticeAgent for this dataset (~10K samples).

Final Recommendation: proceed with BestPracticeAgent configuration.
\end{lstlisting}
\end{outputbox}

\begin{outputbox}{Agent: BestPracticeAgent  --- Node: \texttt{sft\_train} (Final Repeated Configuration)}
\begin{lstlisting}
{
  "parameters": {
    "model_name_or_path": "unsloth/Qwen2-VL-7B-Instruct-bnb-4bit",
    "learning_rate": {"type": "float", "low": 1e-5, "high": 3e-4},
    "num_train_epochs": {"type": "int", "low": 2, "high": 5},
    "per_device_train_batch_size": {"type": "categorical", "choices": [4, 8, 12]},
    "gradient_accumulation_steps": {"type": "categorical", "choices": [4, 8, 12]},
    "weight_decay": {"type": "float", "low": 0.0, "high": 0.05},
    "optimizer": {"type": "categorical", "choices": ["adamw_8bit", "adamw_torch_fused", "AdamW"]},
    "scheduler": {"type": "categorical", "choices": ["linear", "cosine", "constant"]},
    "mixed_precision": {"type": "categorical", "choices": ["fp16", "bf16"]},
    "quantization": {"type": "categorical", "choices": [1]},
    "quantization_method": {"type": "categorical", "choices": ["int4"]},
    "peft": {"type": "categorical", "choices": [1]},
    "lora_r": {"type": "categorical", "choices": [8, 16, 32]},
    "lora_alpha": {"type": "categorical", "choices": [16, 32, 64]},
    "lora_dropout": {"type": "categorical", "choices": [0, 0.05, 0.1]},
    "target_modules": {"type": "categorical", "choices": ["q_proj", "dense", "dense_h_to_4h", "out_proj", "o_proj"]},
    "gradient_checkpointing": true,
    "packing": true,
    "logging_steps": 25,
    "eval_steps": 100
  }
}
\end{lstlisting}
\end{outputbox}

\begin{outputbox}{Agent: ResearchAgent  --- Node: \texttt{sft\_train} (Repetition with Matching Range)}
\begin{lstlisting}
The recommended base LLM is 'unsloth/Qwen2-VL-7B-Instruct-bnb-4bit'... The hyperparameter ranges are selected... learning rate 1e-5 to 3e-4; epochs 2 to 5; batch sizes and gradient accumulation 4,8,12; weight decay up to 0.05; AdamW variants; schedulers linear/cosine/constant; mixed precision fp16/bf16; int4 quantization; LoRA ranges; gradient checkpointing and packing..."
parameter_range='{"model_name_or_path":{"type":"categorical","choices":["unsloth/Qwen2-VL-7B-Instruct-bnb-4bit"]},
"learning_rate":{"type":"float","low":1e-5,"high":3e-4},
"num_train_epochs":{"type":"int","low":2,"high":5},
"per_device_train_batch_size":{"type":"categorical","choices":[4,8,12]},
"gradient_accumulation_steps":{"type":"categorical","choices":[4,8,12]},
"weight_decay":{"type":"float","low":0.0,"high":0.05},
"optimizer":{"type":"categorical","choices":["adamw_8bit","adamw_torch_fused","AdamW"]},
"scheduler":{"type":"categorical","choices":["linear","cosine","constant"]},
"mixed_precision":{"type":"categorical","choices":["fp16","bf16"]},
"quantization":{"type":"categorical","choices":[1]},
"quantization_method":{"type":"categorical","choices":["int4"]},
"peft":{"type":"categorical","choices":[1]},
"lora_r":{"type":"categorical","choices":[8,16,32]},
"lora_alpha":{"type":"categorical","choices":[16,32,64]},
"lora_dropout":{"type":"categorical","choices":[0,0.05,0.1]},
"target_modules":{"type":"categorical","choices":["q_proj","dense","dense_h_to_4h","out_proj","o_proj"]},
"gradient_checkpointing":true,"packing":true,"logging_steps":25,"eval_steps":100}
\end{lstlisting}
\end{outputbox}

\begin{outputbox}{Agent: UserPreferenceAgent (Critique) --- Final Review of Training Proposals}
\begin{lstlisting}
# Training Proposals Review

Both proposals are effectively equivalent in model choice and hyperparameter space and do not violate user preferences.
Recommended: BestPracticeAgent (slightly more detailed and referenced).
\end{lstlisting}
\end{outputbox}

\begin{outputbox}{Agent: DataAgent (Critique) --- Review of AI Training Proposals}
\begin{lstlisting}
# Review of AI Training Proposals

## 1. Per-Agent Critique

### BestPracticeAgent

**Strengths:**
- Proposes a model (**unsloth/Qwen2-VL-7B-Instruct-bnb-4bit**) with 7B parameters, which fits the user's max of ~7B parameters.
- The model is suitable for the task of medical multiple-choice QA based on the user data (input length avg ~235 tokens, relatively long and complex inputs).
- Uses input *packing* and *gradient checkpointing* to handle longer inputs and optimize memory, which aligns well with the large input token length (average ~235 tokens, max up to 938).
- Incorporates parameter-efficient fine-tuning (PEFT) via LoRA - very appropriate for the medium-sized dataset (10,178 samples) to avoid overfitting and reduce compute.
- Quantization (int4) and mixed precision (fp16 or bf16) are tuned for efficient training and inference, which fits the likely GPU deployment stated.
- The hyperparameter spaces for learning rate, training epochs, batch sizes, and weight decay appear sensible for this dataset size and task complexity.
- Scheduler choices are standard and diverse to enable effective learning rate adjustment.
- Target modules for LoRA and LoRA-specific parameters (r, alpha, dropout) are well-detailed and offer flexible tuning.
- Logging and evaluation steps are practical given dataset and expected training times.

**No apparent violation** of user data assumptions or unrealistic training configurations.

### ResearchAgent

**Strengths:**
- Largely overlaps with BestPracticeAgent in choice of model, hyperparameter ranges, and training techniques.
- Explains reasoning for model choice in the context of dataset size, user requirement (<10B parameters), and throughput (~200 words per minute).
- Appropriately uses SFT + PEFT (LoRA) along with int4 quantization and mixed precision for efficient medical fine-tuning.
- Hyperparameter ranges and choices align well with common practice, dataset scale, and example input lengths.
- Considers gradient checkpointing, packing, and standard training configuration parameters.

**Essentially identical in recommendation and compatible with user data and task requirements.**

No violation or mismatch seen.

---

## 2. Final Recommendation

Both proposals are essentially the same solution with very minor differences in presentation and reasoning style. They both:

- Select an appropriate model size (<10B parameters).
- Account for the input sequence length characteristics.
- Use best practices (LoRA, mixed precision, int4 quantization) aligned with dataset size and domain.
- Propose a realistic hyperparameter search space for efficient fine-tuning without overfitting.
- Include memory and speed optimizations (packing, gradient checkpointing).
- Target medical multiple-choice QA format as required.

**Therefore, the recommended proposal is the one from *BestPracticeAgent*.**

---

# Summary

| Proposal           | Violation of User Assumptions? | Suitability Summary                           | Recommendation |
|-------------------|--------------------------------|----------------------------------------------|----------------|
| BestPracticeAgent | None                           | Excellent fit, aligns fully with data & task | **Recommended** |
| ResearchAgent     | None                           | Similar to BestPracticeAgent; redundant      | Not recommended separately |

---

# Additional Notes

- Both proposals support >200 words/min inference speed implicitly by model choice and quantization; no conflict with user requirement.
- No over- or under-sized batch sizes, epochs, or learning rates that conflict with dataset scale.
- Handling of long input sequences is properly addressed.

---

# Final verdict:

Select **BestPracticeAgent** proposal for fine-tuning the medical multiple-choice QA model under the given constraints.
\end{lstlisting}
\end{outputbox}

\begin{outputbox}{Agent: BestPracticeAgent  --- PET Method Selection: \texttt{lora}}
\begin{lstlisting}
Considering the user task requires selecting exactly one Parameter Efficient Tuning (PET) method node from the candidates ['lora', 'prefix_tuning'], and referencing the best practices and model usage data from the retrieved context, the recommended choice is:

**Recommendation: Select "lora"**

### Reasoning:
1. **Best Practice Alignment**:
   - The best practice dataset explicitly mentions "LoRA" and "FlexLoRA" as parameter-efficient fine-tuning methods used successfully with large pretrained models (e.g., Gemma2-9b, Mistral-7B) in scenarios involving federated learning and instruction tuning.
   - There is no mention of "prefix_tuning" in the retrieved best practices, indicating LoRA is more established and better supported in these contexts.

2. **Empirical Evidence from Selected Models**:
   - Several selected models in the domain adaptation and fine-tuning context (e.g., microsoft/deberta-v3-small) use LoRA with specific hyperparameters (e.g., lora_r=16, lora_alpha=32, lora_dropout=0.05), demonstrating effective application and integration with established training pipelines.
   - Other models either do not use PEFT or do not mention prefix tuning, underscoring LoRA's practical adoption

3. **Computational Efficiency & Flexibility**:
   - LoRA optimizes model adaptation by injecting low-rank adaptation layers, enabling efficient tuning with relatively low VRAM consumption and minimal computational overhead, which aligns well with the stated goal of minimal computational cost in PET methods.
   - Quantization strategies combined with LoRA have shown promising results in memory-optimized training setups.

4. **Community and Library Support**:
   - LoRA is widely supported in popular libraries (e.g., transformers, PEFT), making it easier to integrate and troubleshoot in a variety of experimental setups.

### Conclusion:
Given the explicit inclusion of LoRA in successful parameter-efficient tuning scenarios, its demonstrated effectiveness in the given datasets and models, and the lack of concrete best practices for prefix tuning in this context, "lora" is the most suitable and justified choice for the next node in the domain adaptation process.

---

**Final selection:** `lora`

**References:**
https://github.com/Starscream-11813/Variational-Mathematical-Reasoning/
https://github.com/deepmind/mathematics_dataset
https://github.com/sroy9/mawps
https://github.com/TNI-playground/TNI-FlowerTune-LLM-Leaderboard-GeneralNLP
https://github.com/rungalileo/agent-leaderboard
https://github.com/logikon-ai/cot-eval
\end{lstlisting}
\end{outputbox}

\begin{outputbox}{Agent: ResearchAgent  --- PET Method Selection: \texttt{lora}}
\begin{lstlisting}
Both LoRA and prefix tuning are popular Parameter Efficient Tuning (PET) methods that allow for adapting large pre-trained models with reduced computational demands. LoRA injects low-rank adaptation matrices into existing model weights, enabling efficient fine-tuning with fewer trainable parameters and minimal changes to the original model architecture. Prefix tuning, on the other hand, prepends trainable tokens to the input sequence and freezes the backbone model, also achieving parameter efficiency but sometimes at the cost of lower expressiveness. Given the need to balance computational cost, adaptation quality, and ease of implementation, LoRA is generally considered more effective due to its ability to adapt model weights directly while maintaining parameter efficiency. Hence, selecting 'lora' as the next candidate for domain adaptation is the recommended approach.

next_nodes=['lora']
\end{lstlisting}
\end{outputbox}

\begin{outputbox}{Agent: UserPreferenceAgent (Critique) --- Review of PET Method Proposals}
\begin{lstlisting}
# Review of Proposals Against User Preferences

Context:
The current decision is selecting exactly one PET method node from ['lora', 'prefix_tuning'].

## Proposal Critiques

### BestPracticeAgent
- Recommends `lora` citing strong best practice evidence, empirical model support, computational efficiency, and community/library support.
- Notes lack of `prefix_tuning` evidence in retrieved best practices.
- No user preference violations detected; reasoning is thorough and reproducible.

### ResearchAgent
- Selects `lora` via high-level comparison; argues LoRA is more expressive and easier to implement while remaining parameter-efficient.
- No user preference violations detected; slightly less detailed than BestPracticeAgent.

## Overall Recommendation
Both proposals align and pick `lora`. BestPracticeAgent is recommended due to more comprehensive evidence and detail.

# Summary
| Agent             | User Preference Violation | Notes                                           | Recommendation |
|------------------|---------------------------|--------------------------------------------------|----------------|
| BestPracticeAgent| None                      | Detailed, well-supported, clear rationale         | **Recommended** |
| ResearchAgent    | None                      | Accurate but less detailed comparative analysis   | Acceptable alternative |
\end{lstlisting}
\end{outputbox}

\begin{outputbox}{Agent: DataAgent (Critique) --- Review of PET Method Proposals}
\begin{lstlisting}
# Review of Parameter Efficient Tuning (PET) Method Proposals

## 1. Per-agent Critique

### BestPracticeAgent
- Alignment with data assumptions:
  - Dataset: avg input ~236 tokens (p95 ~376), short outputs (~9 tokens), classification/choice task.
  - LoRA's lightweight adaptation aligns well; references real model fine-tuning use cases.
- No detected violations regarding distribution, formatting, or realism.

### ResearchAgent
- Alignment with data assumptions:
  - Conceptually sound justification for LoRA; no contradictions with dataset stats or training realism.
- No evidence of violation.

## 2. Final Recommendation
Both recommend `lora`; BestPracticeAgent slightly stronger due to detail, references, and concrete examples.

Final selected PET method: `lora`
\end{lstlisting}
\end{outputbox}

\begin{outputbox}{Agent: BestPracticeAgent  --- Final Selection: \texttt{lora}}
\begin{lstlisting}
Based on the provided user task, the retrieved best practice information, and the detailed analyses and critiques from the solver agents, the clear and well-supported recommendation is:

Select the Parameter Efficient Tuning (PET) method node: `lora`

Reasoning:
1. Strong best practice evidence: LoRA/FlexLoRA widely used with large pretrained models; robust in federated learning and instruction tuning.
2. Empirical support: multiple models use LoRA with optimized hyperparameters; prefix tuning absent in evidence.
3. Efficiency and dataset alignment: LoRA provides compute/memory savings aligned with token-length and classification-type task.
4. Ecosystem support: broad support in key libraries enables easier adoption and debugging.

Conclusion: select `lora`.

References:
https://github.com/Starscream-11813/Variational-Mathematical-Reasoning/
https://github.com/deepmind/mathematics_dataset
https://github.com/sroy9/mawps
https://github.com/TNI-playground/TNI-FlowerTune-LLM-Leaderboard-GeneralNLP
https://github.com/rungalileo/agent-leaderboard
https://github.com/logikon-ai/cot-eval
\end{lstlisting}
\end{outputbox}

\begin{outputbox}{Agent: ResearchAgent  --- Reiteration: \texttt{lora}}
\begin{lstlisting}
LoRA is widely recognized as an effective Parameter Efficient Tuning (PET) method that enables model adaptation by injecting low-rank adaptation matrices into existing model weights. This approach maintains parameter efficiency while directly adapting model weights, offering a balance between computational cost and adaptation quality. Compared to prefix tuning, LoRA has broader community support, concrete best practice applications on various pretrained models, and proven empirical success in domain adaptation tasks. Given these factors and the task requirement to select exactly one PET method node, 'lora' stands out as the most suitable candidate for efficient domain adaptation.

next_nodes=['lora']
\end{lstlisting}
\end{outputbox}

\begin{outputbox}{Agent: UserPreferenceAgent (Critique) --- Training Node Selection: \texttt{lora}}
\begin{lstlisting}
# Proposal Review for Training Node Selection

## BestPracticeAgent
- Recommends `lora` with detailed best-practice evidence, empirical support, efficiency alignment with dataset, and ecosystem support.
- No preference violations; thorough and well-supported.

## ResearchAgent
- Recommends `lora` with concise justification on efficiency, adaptation quality, and broader support.
- No preference violations; consistent and valid.

## Overall Recommendation
Both recommend `lora`; BestPracticeAgent preferred for completeness.

| Agent            | Violation of User Preferences? | Recommendation |
|-----------------|---------------------------------|----------------|
| BestPracticeAgent | None                            | **Recommended** |
| ResearchAgent     | None                            | Acceptable but secondary |
\end{lstlisting}
\end{outputbox}

\begin{outputbox}{Agent: DataAgent (Critique) --- Review of PET Node Selection: \texttt{lora}}
\begin{lstlisting}
# Review of AI Training Proposals Against User Data

## 1. Per-Agent Critique

### BestPracticeAgent
- Adheres to user data: considers token lengths, clean data, domain medical QA classification.
- LoRA aligns with ~10k samples and efficiency goals; absence of prefix tuning evidence supports LoRA selection.
- No violations: no distribution mismatch; formatting matches instruction+question+options; realistic training approach.

### ResearchAgent
- Also supports LoRA; emphasizes efficiency and quality balance; community support vs prefix tuning.
- No violations: aligns with dataset size and output style; directly supports domain adaptation.

## 2. Final Recommendation
Both recommend `lora`; BestPracticeAgent preferred for comprehensive reasoning and ecosystem support.

Final recommended PET method node: `lora`
\end{lstlisting}
\end{outputbox}

\begin{outputbox}{Agent: BestPracticeAgent  --- Node: \texttt{lora} (Hyperparameter Search Space)}
\begin{lstlisting}
Here is the recommended hyperparameter search space for tuning the model (Qwen/Qwen2.5-7B-Instruct), along with the reasoning:

User constraints:
- Medical MCQ QA; accuracy >90%; <=10B params; GPU inference ~200 words/min.
- Base model: Qwen2.5-7B-Instruct.
- SFT with LoRA; gradient checkpointing; packing true.

Recommended Hyperparameter Search Space:

{
  "parameters": {
    "learning_rate": {"type": "float", "low": 1e-5, "high": 3e-4},
    "per_device_train_batch_size": {"type": "categorical", "choices": [4, 8, 12]},
    "gradient_accumulation_steps": {"type": "categorical", "choices": [4, 8, 12]},
    "num_train_epochs": {"type": "int", "low": 2, "high": 5},
    "lora_r": {"type": "categorical", "choices": [8, 16]},
    "lora_alpha": {"type": "categorical", "choices": [16, 32]},
    "lora_dropout": {"type": "float", "low": 0.05, "high": 0.1},
    "quantization": {"type": "categorical", "choices": [0, 1]},
    "quantization_method": {"type": "categorical", "choices": ["nf4", "int4"]},
    "weight_decay": {"type": "float", "low": 0.0, "high": 0.01},
    "optimizer": {"type": "categorical", "choices": ["adamw_8bit", "adamw_torch_fused", "adamw_hf"]},
    "scheduler": {"type": "categorical", "choices": ["constant", "cosine", "linear"]},
    "mixed_precision": {"type": "categorical", "choices": ["bf16", "fp16"]}
  }
}

Reasoning (high level):
- LR 1e-5..3e-4 for stable medical QA tuning.
- Batch/accum 4/8/12 for memory+throughput.
- Epochs 2..5 for convergence without overfit.
- LoRA r/alpha/dropout bounded for efficiency and regularization.
- Quantization (nf4/int4) for GPU efficiency with quality flexibility.
- Low weight decay to prevent over-regularization.
- AdamW variants and schedulers provide stable optimization.
- bf16/fp16 accelerate GPU training; bf16 preferred for stability.

References:
https://github.com/Starscream-11813/Variational-Mathematical-Reasoning/
https://github.com/deepmind/mathematics_dataset
https://math-qa.github.io/math-QA/
https://github.com/dsdanielpark/open-llm-leaderboard-report
https://github.com/CPM-AI/Kor_Finance-leaderboard
https://github.com/AI-Ceping/LLM-Ceping
\end{lstlisting}
\end{outputbox}

\begin{outputbox}{Agent: ResearchAgent  --- Node: \texttt{lora} (Justification \& Parameter Range)}
\begin{lstlisting}
Given the user task and base LLM selected, below are the parameter ranges
parameter_range=
{"learning_rate":{"type":"float","low":1e-5,"high":3e-4},
"num_train_epochs":{"type":"int","low":2,"high":5},
"per_device_train_batch_size":{"type":"categorical","choices":[4,8,12]},
"gradient_accumulation_steps":{"type":"categorical","choices":[4,8,12]},
"lora_r":{"type":"categorical","choices":[8,16,32]},
"lora_alpha":{"type":"categorical","choices":[32,64]},
"lora_dropout":{"type":"float","low":0.0,"high":0.1},
"quantization":{"type":"categorical","choices":[0,1]},
"quantization_method":{"type":"categorical","choices":["int4"]},
"weight_decay":{"type":"float","low":0.0,"high":0.1},
"optimizer":{"type":"categorical","choices":["adamw_8bit","adamw_torch"]},
"scheduler":{"type":"categorical","choices":["linear","cosine"]},
"mixed_precision":{"type":"categorical","choices":["bf16"]},
"target_modules":{"type":"categorical","choices":["q_proj","k_proj","v_proj","o_proj","dense","dense_h_to_4h"]}}
\end{lstlisting}
\end{outputbox}

\begin{outputbox}{Agent: UserPreferenceAgent (Critique) --- Review of LoRA Hyperparameter Proposals}
\begin{lstlisting}
# Training Proposal Review

## Proposal 1: BestPracticeAgent
- Balanced hyperparameter space aligned with constraints (<=10B, >90% accuracy, GPU speed).
- Conservative LoRA (r: 8/16; alpha: 16/32; dropout: 0.05-0.1).
- Quantization includes nf4 and int4; sched includes constant; multiple optimizers; bf16+fp16.
- No violations detected.

## Proposal 2: ResearchAgent
- Mostly aligned but more aggressive:
  - LoRA r includes 32; alpha up to 64; dropout includes 0; weight decay up to 0.1.
  - Quantization method only int4; fewer scheduler/optimizer choices; bf16-only.
- Concerns: efficiency/overfit risk; narrower quantization/scheduler flexibility.

## Final Recommendation
Approve BestPracticeAgent proposal as most aligned and stable; ResearchAgent should narrow LoRA/WD/dropout and broaden options if considered.
\end{lstlisting}
\end{outputbox}

\begin{outputbox}{Agent: DataAgent (Critique) --- Review of AI Training Proposals for Medical MCQ Task}
\begin{lstlisting}
# Review of AI Training Proposals for User Medical MCQ Task

## 1. Per-Agent Critique

### BestPracticeAgent
Strengths:
- Aligns with constraints (<=10B, GPU).
- Qwen2.5-7B-Instruct base model fits limit.
- Reasonable hyperparameter ranges; includes quantization (nf4/int4) and mixed precision.
- Uses efficient techniques (gradient checkpointing, packing, LoRA, quantization).
Potential issues:
- LoRA ranks limited to 8/16; weight decay capped at 0.01; target modules not specified; int4 may degrade reasoning vs nf4 (but both offered).

### ResearchAgent
Strengths:
- Includes target modules for LoRA; allows higher LoRA capacity (r up to 32, alpha up to 64).
Potential issues:
- Quantization only int4; weight decay up to 0.1 may underfit; fewer optimizer choices; more complex to tune.

## 2. Final Recommendation
Recommend BestPracticeAgent proposal for stability and accuracy; nf4 fallback improves safety for medical reasoning quality.
\end{lstlisting}
\end{outputbox}

\begin{outputbox}{Agent: BestPracticeAgent  --- Node: \texttt{lora} (Final Concrete Search Space)}
\begin{lstlisting}
Based on the comprehensive analysis of the user's medical MCQ answering task, the following concrete hyperparameter search space is recommended:

{
  "parameters": {
    "learning_rate": {"type": "float", "low": 1e-5, "high": 3e-4},
    "per_device_train_batch_size": {"type": "categorical", "choices": [4, 8, 12]},
    "gradient_accumulation_steps": {"type": "categorical", "choices": [4, 8, 12]},
    "num_train_epochs": {"type": "int", "low": 2, "high": 5},
    "lora_r": {"type": "categorical", "choices": [8, 16]},
    "lora_alpha": {"type": "categorical", "choices": [16, 32]},
    "lora_dropout": {"type": "float", "low": 0.05, "high": 0.1},
    "quantization": {"type": "categorical", "choices": [0, 1]},
    "quantization_method": {"type": "categorical", "choices": ["nf4", "int4"]},
    "weight_decay": {"type": "float", "low": 0.0, "high": 0.01},
    "optimizer": {"type": "categorical", "choices": ["adamw_8bit", "adamw_torch_fused", "adamw_hf"]},
    "scheduler": {"type": "categorical", "choices": ["constant", "cosine", "linear"]},
    "mixed_precision": {"type": "categorical", "choices": ["bf16", "fp16"]}
  }
}

Reasoning summary:
- Stable LR range; batch/accum for throughput; epochs for convergence; conservative LoRA; flexible quantization; low WD; robust optimizers/schedulers; bf16/fp16.

References:
https://github.com/Starscream-11813/Variational-Mathematical-Reasoning/
https://github.com/deepmind/mathematics_dataset
https://math-qa.github.io/math-QA/
https://github.com/dsdanielpark/open-llm-leaderboard-report
https://github.com/CPM-AI/Kor_Finance-leaderboard
https://github.com/AI-Ceping/LLM-Ceping
\end{lstlisting}
\end{outputbox}

\begin{outputbox}{Agent: ResearchAgent  --- Node: \texttt{lora} (Final Matching Parameter Range)}
\begin{lstlisting}
Given the task of fine-tuning Qwen2.5-7B-Instruct for college-level medical multiple-choice questions with a strict accuracy target (>90%) and constraints on model size (<=10B parameters) and deployment efficiency (GPU at >=200 words/min), the hyperparameter search space must balance effective domain adaptation with computational efficiency and stability...
parameter_range=

{"learning_rate":{"type":"float","low":1e-5,"high":3e-4},
"per_device_train_batch_size":{"type":"categorical","choices":[4,8,12]},
"gradient_accumulation_steps":{"type":"categorical","choices":[4,8,12]},
"num_train_epochs":{"type":"int","low":2,"high":5},"lora_r":{"type":"categorical","choices":[8,16]},
"lora_alpha":{"type":"categorical","choices":[16,32]},
"lora_dropout":{"type":"float","low":0.05,"high":0.1},
"quantization":{"type":"categorical","choices":[0,1]},
"quantization_method":{"type":"categorical","choices":["nf4","int4"]},
"weight_decay":{"type":"float","low":0.0,"high":0.01},
"optimizer":{"type":"categorical","choices":["adamw_8bit","adamw_torch_fused","adamw_hf"]},
"scheduler":{"type":"categorical","choices":["constant","cosine","linear"]},
"mixed_precision":{"type":"categorical","choices":["bf16","fp16"]}}
\end{lstlisting}
\end{outputbox}

\begin{outputbox}{Agent: UserPreferenceAgent (Critique) --- Final Review of Hyperparameter Proposals}
\begin{lstlisting}
# Review of Training Proposals Against User Preferences

## BestPracticeAgent
- Fully aligned with task constraints and deployment goals.
- Explicitly mentions pipeline enhancements (gradient checkpointing, packing).
- No violations detected; preferred for completeness.

## ResearchAgent
- Hyperparameter space matches BestPracticeAgent; no violations in ranges.
- Slightly less comprehensive: omits explicit mention of checkpointing/packing.

## Final Recommendation
BestPracticeAgent recommended as most aligned and complete; ResearchAgent acceptable but secondary.
\end{lstlisting}
\end{outputbox}

\begin{outputbox}{Agent: DataAgent (Critique) --- Final Review of Hyperparameter Proposals}
\begin{lstlisting}
# Review of AI Training Proposals

## 1. Per-agent Critique

### BestPracticeAgent
- Search space aligns with fine-tuning Qwen2.5-7B-Instruct on ~10k medical MCQ samples.
- LR, batch/accum balance memory and throughput; LoRA params regularize; quantization and mixed precision support GPU deployment.
- Includes scheduler/optimizer flexibility; notes gradient checkpointing and packing.
- No concerns; realistic.

### ResearchAgent
- Reasoning supports same constraints; similar ranges.
- More abstract/less concrete; duplicates BestPracticeAgent without extra improvements.
- No contradictions or violations.

## 2. Final Recommendation
Both valid; BestPracticeAgent preferred for concrete JSON space and completeness.
\end{lstlisting}
\end{outputbox}

\section{Success Rate Analysis}
\label{apndx_sr}
In this section, we provide a sample of manual iterations performed to fix the code templates outputted for the \textbf{MedQA} dataset by the baselines, for transparency. Note, success rate (SR) defined as $SR = \frac{1}{1+ \texttt{number of iterations}}$ depends on the number of iterations required to obtain a runnable pipeline and is intended to capture engineering reliability, not model quality. Without these corrective steps the baseline models completely fail to execute and the raw success rate would be 0\% for all baselines.

\FloatBarrier
\begin{table}[!ht]
\centering
\captionsetup{font=small}
\caption{Manual Corrections Required Across Iterations for Each Baseline}
\label{tab:baseline_fixes_compact}
\renewcommand{\arraystretch}{1.3}
\setlength{\tabcolsep}{6pt}

\begin{tabular}{p{3cm} p{12cm}}
\toprule
\textbf{Baseline} & \textbf{Manual Fixes Across Iterations} \\
\midrule

\textbf{MLCopilot} &
\begin{tabular}[t]{p{11cm}}
\textbf{It 1}: Incorrect argument name: \texttt{evaluation\_strategy} → \texttt{eval\_strategy}. \\
\textbf{It 2}: Precision changed from \texttt{fp16} to \texttt{bf16}. \\
\textbf{It 3}: Gradient check-pointing error. \\
\textbf{It 4}: Tokenizer padding added. \\
\textbf{It 5}: \texttt{compute\_metrics} function updated. \\
\textbf{It 6}: Evaluation metric modified. \\
\end{tabular}
\\
\midrule

\textbf{AutoML-Agent} &
\begin{tabular}[t]{p{11cm}}
\textbf{It 1}: \texttt{LoraConfig} imported from \texttt{peft}, not \texttt{trl}. \\
\textbf{It 2}: Model name corrected. \\
\textbf{It 3}: LoRA-trained model merged and unloaded before saving. \\
\textbf{It 4}: \texttt{AutoModelForCausalLM} used instead of \texttt{Seq2SeqLM}. \\
\textbf{It 5}: \texttt{lora\_alpha} used instead of \texttt{alpha}. \\
\textbf{It 6}: \texttt{SFTTrainer.\_\_init\_\_()} received unexpected argument \texttt{tokenizer}. Removed passing the tokenizer to SFTTrainer. \\
\textbf{It 7}: \texttt{PeftModel} loading corrected. \\
\textbf{It 8}: \texttt{TrainingArgs} incorrectly constructed and passed. \\
\end{tabular}
\\
\midrule

\textbf{DS-Agent} &
\begin{tabular}[t]{p{11cm}}
\textbf{It 1}: Incorrect parameter \texttt{max\_seq\_length} passed to SFTTrainer. \\
\textbf{It 2}: Model name incomplete. \\
\textbf{It 3}: \texttt{peft\_config} incorrectly passed directly to \texttt{SFTConfig} instead of loading PEFT model. \\
\end{tabular}
\\

\bottomrule
\end{tabular}
\end{table}



\end{document}